\def\fighome{./}
\def\bibhome{./}
\newcommand{\myvector}[1]{\bm{#1}}
\newcommand{\rand}[1]{\bm{#1}}
\newcommand{\Prob}{\mathbf{Pr}} 
\newcommand{\E}[1]{\mathbb{E}\left[#1\right]} 
\newcommand{\ESub}[2]{\mathbb{E}_{#1}\left[#2\right]} 
\newcommand{\V}[1]{\mathbb{V}\left[#1\right]} 
\newcommand{\KL}[2]{\mathbf{KL}(#1||#2)}
\newcommand{\loss}{\mathcal{L}}
\newcommand{\gradient}[1]{\frac{\partial \loss}{\partial {#1}}}
\newcommand{\derivative}[2]{\frac{\partial {#1}}{\partial {#2}}}
\newcommand{\FT}{\mathcal{F}} 
\newcommand{\R}{\mathbb{R}} 
\newcommand{\Q}{\mathbb{Q}} 
\newcommand{\C}{\mathbb{C}} 
\newcommand{\im}{\mathsf{j}} 
\def\beq{\begin{equation}}
\def\eeq{\end{equation} \noindent}
\newtheorem{theorem}{Theorem}[section]
\newtheorem{lemma}[theorem]{Lemma}
\newtheorem{assumption}[theorem]{Assumption}
\newcommand{\BNN}{BNN\xspace} 
\newcommand{\BNNLong}{Bayesian neural network\xspace}
\newcommand{\BNNLONG}{Bayesian Neural Network\xspace}
\newcommand{\BQN}{BQN\xspace} 
\newcommand{\BQNLong}{Bayesian quantized network\xspace}
\newcommand{\BQNLONG}{Bayesian Quantized Network\xspace}
\newcommand{\QNN}{QNN\xspace} 
\newcommand{\QNNlong}{quantized neural network\xspace}
\newcommand{\QNNLONG}{Quantized Neural Network\xspace}
\newcommand{\TN}{TN\xspace} 
\newcommand{\TNlong}{tensor network\xspace}
\newcommand{\TNLong}{Tensor network\xspace}
\newcommand{\TNLONG}{Tensor Network\xspace}
\newcommand{\NNlong}{neural network\xspace}
\newcommand{\NNLONG}{Neural Network\xspace}
\newcommand{\TNN}{TNN\xspace} 
\newcommand{\TNNlong}{tensorial neural network\xspace}
\newcommand{\TNNLONG}{Tensorial Neural Network\xspace}
\newcommand{\PGM}{PGM\xspace} 
\newcommand{\PGMlong}{probabilistic graphical model\xspace}
\newcommand{\DFT}{DFT} 
\newcommand{\DFTlong}{discrete Fourier transform\xspace}
\newcommand{\IDFT}{IDFT} 
\newcommand{\IDFTlong}{inverse discrete Fourier transform\xspace}
\newcommand{\FFT}{FFT\xspace} 
\newcommand{\FFTlong}{fast Fourier transform\xspace}
\newcommand{\mytitle}
{Sampling-Free Learning of \\ Bayesian Quantized Neural Networks}
\title{\mytitle}
\author{
	Jiahao Su \\
	Department of Electrical and Computer Engineering \\
	University of Maryland, College Park \\
	\texttt{jiahaosu@umd.edu} \\
\And
	Milan Cvitkovic \\
	Amazon Web Services \\
	Amazon.com, Inc., Seattle \\
	\texttt{cvitkom@amazon.com} \\
\And
	Furong Huang \\
	Department of Computer Science \\ 
	University of Maryland, College Park \\
	\texttt{furongh@cs.umd.edu} \\
}
\begin{document}

\maketitle


\begin{abstract}
Bayesian learning of model parameters in neural networks is important 
in scenarios where estimates with well-calibrated uncertainty are important.
In this paper, we propose {\BQNLong}s ({\BQN}s), 
{\QNNlong}s ({\QNN}s) for which we learn 
a posterior distribution over their discrete parameters.
We provide a set of efficient algorithms for learning and prediction in {\BQN}s 
without the need to sample from their parameters or activations, 
which not only allows for differentiable learning in {\QNN}s,
but also reduces the variance in gradients.
We evaluate {\BQN}s on MNIST, Fashion-MNIST, KMNIST 
and CIFAR10 image classification datasets.
compared against bootstrap ensemble of {\QNN}s (E-\QNN). 
We demonstrate {\BQN}s achieve both {lower predictive errors} 
and {better-calibrated uncertainties} than E-\QNN (with less than $20\%$ of the negative log-likelihood).
\end{abstract}

\section{Introduction}
\label{sec:intro}

A Bayesian approach to deep learning considers the network's parameters to be random variables 
and seeks to infer their posterior distribution given the training data.
Models trained this way, called {\em {\BNNLong}s} ({\BNN}s)~\citep{wang2016towards}, 
in principle have well-calibrated uncertainties when they make predictions,
which is important in scenarios such as active learning and reinforcement learning \citep{Gal2016UncertaintyID}.
Furthermore, the posterior distribution over the model parameters provides 
valuable information for evaluation and compression of neural networks.

There are three main challenges in using {\BNN}s:
\textbf{(1) Intractable posterior:} 
Computing and storing the exact posterior distribution over the network weights 
is intractable due to the complexity and high-dimensionality of deep networks.
\textbf{(2) Prediction:} 
Performing a forward pass (a.k.a.\ as {\em probabilistic propagation}) in a {\BNN}
to compute a prediction for an input cannot be performed exactly, 
since the distribution of hidden activations at each layer is intractable to compute.
\textbf{(3) Learning:} 
The classic {\em evidence lower bound} (ELBO) learning objective for training {\BNN}s 
is not amenable to backpropagation 
as the ELBO is not an explicit function of the output of probabilistic propagation. 

These challenges are typically addressed either by 
making simplifying assumptions about the distributions of the parameters and activations, 
or by using sampling-based approaches, which are expensive and unreliable 
(likely to overestimate the uncertainties in predictions).
Our goal is to propose a \textbf{sampling-free} method 
which uses probabilistic propagation to deterministically learn {\BNN}s.

A seemingly unrelated area of deep learning research 
is that of {\em {\QNNlong}s} ({\QNN}s), 
which offer advantages of computational and memory efficiency 
compared to continuous-valued models. 
{\QNN}s, like {\BNN}s, face challenges in training, though for different reasons:
\textbf{(4.1)} The non-differentiable activation function 
is not amenable to backpropagation.
\textbf{(4.2)} Gradient updates cease to be meaningful,
since the model parameters in {\QNN}s are coarsely quantized.

In this work, we combine the ideas of {\BNN}s and {\QNN}s in a novel way 
that addresses the aforementioned challenges \textbf{(1)(2)(3)(4)} in training both models. 
We propose {\em {\BQNLong}s} ({\BQN}s), 
models that (like {\QNN}s) have quantized parameters and activations 
over which they learn (like {\BNN}s) categorical posterior distributions.
{\BQN}s have several appealing properties:
\begin{itemize}[leftmargin=*,itemsep=0em,topsep=0em]
\item {\BQN}s solve challenge \textbf{(1)}
due to their use of categorical distributions for their model parameters.
\item {\BQN}s can be trained via sampling-free backpropagation and stochastic gradient ascent 
of a differentiable lower bound to ELBO, which addresses challenges \textbf{(2)}, \textbf{(3)} and \textbf{(4)} above.
\item {\BQN}s leverage efficient tensor operations for probabilistic propagation, further addressing challenge \textbf{(2)}. 
We show the equivalence between probabilistic propagation in {\BQN}s 
and tensor contractions~\citep{kolda2009tensor}, 
and introduce a rank-1 CP tensor decomposition (mean-field approximation) that speeds up the forward pass in {\BQN}s.
\item {\BQN}s provide a tunable trade-off between computational resource and model complexity: 
using a refined quantization allows for more complex distribution at the cost of more computation.
\item Sampling from a learned {\BQN} provides an alternative way to obtain deterministic {\QNN}s  . 
\end{itemize}

In our experiments, we demonstrate the expressive power of {\BQN}s. 
We show that {\BQN}s trained using our sampling-free method 
have much better-calibrated uncertainty compared with the state-of-the-art 
{\em Bootstrap ensemble of {\QNNlong}s} ({E-QNN}) trained by~\citet{courbariaux2016binarized}.
More impressively, our trained {\BQN}s achieve 
comparable log-likelihood against Gaussian {\em {\BNNLong}} ({\BNN}) 
trained with {\em stochastic gradient variational Bayes} (SGVB)~\citep{shridhar2019comprehensive} 
(the performance of Gaussian {\BNN}s are expected to be better than {\BQN}s 
since they allows for continuous random variables).
We further verify that {\BQN}s can be easily used 
to compress (Bayesian) neural networks and obtain determinstic {\QNN}s.
Finally, we evaluate the effect of mean-field approximation in {\BQN}, 
by comparing with its Monte-Carlo realizations, where no approximation is used. 
We show that our sampling-free probabilistic propagation achieves similar accuracy and log-likelihood 
--- justifying the use of mean-field approximation in {\BQN}s.

{\bf Related Works.} 
In Appendix~\ref{sec:related}, 
we survey different approaches for training {\bf {\BNNLong}s} including
\emph{sampling-free assumed density filtering}~\citep{minka2001family,soudry2014expectation,hernandez2015probabilistic,ghosh2016assumed},
\emph{sampling-based variational inference}~\citep{graves2011practical,blundell2015weight,shridhar2019comprehensive}, 
as well as \emph{sampling-free variational inference}~\citep{wu2018fixing},
{\bf probabilistic {\NNlong}s}~\citep{wang2016natural,shekhovtsov2018feed,gast2018lightweight},
{\bf \QNNlong}~\citep{han2015deep,courbariaux2015binaryconnect,zhu2016trained,kim2016bitwise,zhou2016dorefa, rastegari2016xnor,hubara2017quantized,
esser2015backpropagation,peters2018probabilistic,shayer2017learning},
and {\bf {\TNlong}s and {\TNNlong}s}~\citep{grasedyck2013literature,orus2014practical,cichocki2016low,cichocki2017tensor,
su2018tensorized,newman2018stable,robeva2017duality}.

{\bf Contributions: }
\begin{itemize}[leftmargin=*,itemsep=0em,topsep=0em]
\item We propose an alternative {\em evidence lower bound (ELBO)} for {\BNNLong}s such that optimization of the variational objective is compatible with the backpropagation algorithm.
\item We introduce {\BQNLong}s ({\BQN}s), 
establish a duality between {\BQN}s and hierarchical {\TNlong}s, 
and show prediction a {\BQN} is equivalent to a series of tensor contractions.
\item We derive a sampling-free approach for both learning and inference in {\BQN}s using probabilistic propagation (analytical inference), achieving better-calibrated uncertainty for the learned models.
\item We develop a set of fast algorithms to enable efficient learning and prediction for {\BQN}s.
\end{itemize}

\section{Bayesian Neural Networks}
\label{sec:bayes-model}

\textbf{Notation.} 
We use bold letters such as $\rand{\theta}$ to denote random variables, and non-bold letters such as $\theta$ to denote their realizations.
We abbreviate $\Prob[\rand{\theta} = \theta]$ as $\Prob[\theta]$
and use bold letters in an equation if the equality holds for {\em arbitrary} realizations.
For example, $\Prob[\rand{x}, \rand{y}] = \Prob[\rand{y}|\rand{x}] ~ \Prob[\rand{x}]$ means $\Prob[\rand{x} = x, \rand{y} = y] = \Prob[\rand{y} = y|\rand{x} = x] ~ \Prob[\rand{x} = x], \forall x \in \mathcal{X}, y \in \mathcal{Y}$.

\subsection{Problem Setting}
\label{subsec:hierarchical}

Given a dataset $\mathcal{D} = \{(x_n, y_n)\}_{n = 1}^{N}$ of $N$ data points, 
we aim to learn a neural network with model parameters $\rand{\theta}$ 
that predict the output $y \in \mathcal{Y}$ based on the input $x \in \mathcal{X}$.
{\bf(1)} We first solve the {\bf learning problem} to find an {\em approximate posterior distribution} $Q(\rand{\theta}; \phi)$ over $\rand{\theta}$ with parameters $\phi$ such that $Q(\rand{\theta}; \phi) \approx \Prob[\rand{\theta}|\mathcal{D}]$.
{\bf(2)} We then solve the {\bf prediction problem} to compute the {\em predictive distribution} $\Prob[\rand{y}|x, \mathcal{D}]$ for arbitrary input $\rand{x} = x$ given $Q(\rand{\theta}; \phi)$.
For notational simplicity, we will omit the conditioning on $\mathcal{D}$ and write $\Prob[\rand{y}|x, \mathcal{D}]$ as $\Prob[\rand{y}|x]$ in what follows.

In order to address the prediction and learning problems in {\BNN}s, we analyze these models in their general form of {\em probabilistic graphical models} (shown in Figure~\ref{fig:pgm-hierachy} in Appendix~\ref{app:bayesian_approach}).
Let $\rand{h}^{(l)}$, $\rand{\theta}^{(l)}$ and $\rand{h}^{(l+1)}$ denote the inputs, model parameters, and (hidden) outputs of the $l$-th layer respectively.
We assume that $\rand{\theta}^{(l)}$'s are {\em layer-wise independent}, i.e.\ $Q(\rand{\theta}; \phi) = \prod_{l = 0}^{L - 1} Q(\rand{\theta}^{(l)}; \phi^{(l)})$, and $\rand{h}^{(l)}$ follow the {\em Markovian property}, i.e.\ $\Prob[\rand{h}^{(l+1)}|\rand{h}^{(\colon\! l)}, \rand{\theta}^{(\colon\! l)}] = \Prob[\rand{h}^{(l+1)}|\rand{h}^{(l)}, \rand{\theta}^{(l)}]$.

\subsection{The Prediction Problem}
\label{subsec:bnn-prediction}

Computing the predictive distribution $\Prob[\rand{y}|x, \mathcal{D}]$ with a {\BNN} requires marginalizing over the random variable $\rand{\theta}$. The hierarchical structure of {\BNN}s allows this marginalization to be performed in multiple steps sequentially.
In Appendix~\ref{app:bayesian_approach}, we show that the predictive distribution of $\rand{h}^{(l+1)}$ given input $\rand{x} = x$ can be obtained from its preceding layer $\rand{h}^{(l)}$ by
\begin{equation}
\underbrace{\Prob[\rand{h}^{(l+1)}| x]}_{P(\rand{h}^{(l+1)}; \psi^{(l+1)})}
= \int_{h^{(l)}, \theta^{(l)}} \Prob[\rand{h}^{(l+1)}| h^{(l)}, \theta^{(l)}] ~ Q(\theta^{(l)}; \phi^{(l)}) ~ \underbrace{\Prob[h^{(l)}| x]}_{P(h^{(l)}; \psi^{(l)})}
d h^{(l)} d \theta^{(l)}
\label{eq:bayes-prediction-iterative}
\end{equation}
This iterative process to compute the predictive distributions layer-by-layer sequentially is known as {\em probabilistic propagation}~\citep{soudry2014expectation, hernandez2015probabilistic, ghosh2016assumed}.
With this approach, we need to explicitly compute and store each intermediate result $\Prob[\rand{h}^{(l)}|x]$ in its parameterized form $P(\rand{h}^{(l)}; \psi^{(l)})$ (the conditioning on $x$ is hidden in $\psi^{(l)}$, i.e.\ $\psi^{(l)}$ is a function of $x$). Therefore, probabilistic propagation is a deterministic process that computes $\psi^{(l+1)}$ as a function of $\psi^{(l)}$ and $\phi^{(l)}$, which we denote as $\psi^{(l+1)} = g^{(l)}(\psi^{(l)}, \phi^{(l)})$.

\textbf{Challenge in Sampling-Free Probabilistic Propagation.}
If the hidden variables $\rand{h}^{(l)}$'s are continuous, Equation~\eqref{eq:bayes-prediction-iterative} generally can not be evaluated in closed form as it is difficult to find a family of parameterized distributions $P$ for $\rand{h}^{(l)}$ such that $\rand{h}^{(l+1)}$ remains in $P$ under the operations of a neural network layer.
Therefore most existing methods consider approximations at each layer of probabilistic propagation.
In Section~\ref{sec:bayes-quantized-net}, we will show that this issue can be (partly) addressed if we consider the $\rand{h}^{(l)}$'s to be discrete random variables, as in a \BQN.

\subsection{The Learning Problem}
\label{subsec:bnn-learning}

\textbf{Objective Function.}
A standard approach to finding a good approximation $Q(\rand{\theta}; \phi)$ is {\em variational inference}, which finds $\phi^{\star}$ such that the {\em KL-divergence} $\KL{Q(\rand{\theta}; \phi)}{\Prob[\rand{\theta}| \mathcal{D}]}$ from $Q(\rand{\theta}; \phi)$ to $\Prob[\rand{\theta}|\mathcal{D}]$ is minimized. In Appendix~\ref{app:bayesian_approach}, we prove that to minimizing the KL-divergence is equivalent to maximizing an objective function known as the {\em evidence lower bound (ELBO)}, denoted as $\mathcal{L}(\phi)$.
\begin{equation}
\begin{aligned}
\max_{\phi} \mathcal{L}(\phi) & = -\KL{Q(\rand{\theta}; \phi)}{\Prob[\rand{\theta}| \mathcal{D}]} 
 = \sum_{n=1}^{N} \mathcal{L}_n(\phi) +
\mathcal{R}(\phi)  
\label{eq:bayes-learning-ELBO} 
\\
\text{where }  
\mathcal{L}_{n}(\phi) & = \ESub{Q}{\log \Prob[y_n|x_n, \rand{\theta}]}
\text{ and } 
\mathcal{R}(\phi) = \ESub{Q}{\log\left(\Prob[\rand{\theta}]\right)} + H(Q)
\end{aligned}
\end{equation}

\textbf{Probabilistic Backpropagation.}
Optimization in {\NNlong}s heavily relies on the gradient-based methods, where the partial derivatives ${\partial \mathcal{L}(\phi)}/{\partial \phi}$ of the objective $\mathcal{L}(\phi)$ w.r.t.\ the parameters $\phi$ are obtained by {\em backpropagation}.
Formally, if the output produced by a {\NNlong} is given by a (sub-)differentiable function $g(\phi)$, and the objective $\mathcal{L}(g(\phi))$ is an {\em explicit} function of $g(\phi)$ (and not just an explicit function of $\phi$), then the partial derivatives can be computed by chain rule:
\begin{equation}
{\partial \mathcal{L}(g(\phi))} / {\partial \phi} 
= {\partial \mathcal{L}(g(\phi))} / {\partial g(\phi)} \cdot {\partial g(\phi)} / {\partial \phi}
\label{eq:learning-chain-rule}
\end{equation}
The learning problem can then be (approximately) solved by first-order methods, typically {\em stochastic gradient descent/ascent}. Notice that
{\bf(1)} For classification, the function $g(\phi)$ returns the probabilities after the softmax function, not the categorical label; 
{\bf(2)} An additional regularizer $\mathcal{R}(\phi)$ on the parameters will not cause difficulty in backpropagation, given ${\partial \mathcal{R}(\phi)}/{\partial \phi}$ is easily computed.

\textbf{Challenge in Sampling-Free Probabilistic Backpropagation.}
Learning {\BNN}s is not amenable to standard backpropagation because the ELBO objective function $\mathcal{L}(\phi)$ in~\eqref{eq:ELBO-likelihood-main} is not an {explicit} (i.e.\ {\em implicit}) function of the predictive distribution $g(\phi)$ in~\eqref{eq:bayes-prediction-main}:
\begin{subequations}
\begin{align}
g_n(\phi) = 
\ESub{Q}{\Prob[y_n|x_n, \rand{\theta}]} & = 
\int_{\theta} \Prob[y_n|x_n, \theta] Q(\theta; \phi) d \theta 
\label{eq:bayes-prediction-main} 
\\
\mathcal{L}_{n}(\phi) =
\ESub{Q}{\log(\Prob[y_n| x_n, \rand{\theta}])} & =
\int_{\theta} \log \left(\Prob[y_n|x_n, \theta] \right) Q(\theta; \phi) d\theta
\label{eq:ELBO-likelihood-main}
\end{align}
\end{subequations}
Although $\mathcal{L}_n(\phi)$ is a function of $\phi$, it is not an explicit function of $g_n(\phi)$. 
Consequently, the chain rule in Equation~\eqref{eq:learning-chain-rule} on which backpropagation is based is not directly applicable. 

\section{Proposed Learning Method for {\BNNLONG}s}
\label{subsec:bnn-alternative-ELBO}

{\bf Alternative Evidence Lower Bound.}
We make learning in {\BNN}s amenable to backpropagation 
by developing a lower bound $\overline{\mathcal{L}}_n(\phi) \leq \mathcal{L}_n(\phi)$ 
such that ${\partial \overline{\mathcal{L}}_n(\phi)} / {\partial \phi}$ can be obtained by chain rule (i.e.\ $\overline{\mathcal{L}}_n(\phi)$ is an explicit function of the results from the forward pass.)
With $\overline{\mathcal{L}}_n(\phi)$ in hand, we can (approximately) find $\phi^{\star}$ by maximizing the alternative objective via gradient-based method:
\begin{equation}
\phi^{\star} 
= \arg \max_{\phi} \overline{\mathcal{L}}(\phi)
= \arg \max_{\phi} \left( \mathcal{R}(\phi) + \sum_{n = 1}^{N} \overline{\mathcal{L}}_n(\phi) \right)
\label{eq:bayes-learning-ELBO-2}
\end{equation}
In Appendix~\ref{subsec:bayes-learning-ELBO-alternative}, we proved one feasible $\overline{\mathcal{L}}_n(\phi)$ which only depends on second last output $\rand{h}^{(L-1)}$.
\begin{theorem}[\textbf{Alternative Evidence Lower Bound}]
\label{thm:lower_bound}
Define each term $\overline{\mathcal{L}}_n(\phi)$ in $\overline{\mathcal{L}}(\phi)$ as
\begin{equation}
\overline{\mathcal{L}}_n(\phi): = \ESub{\rand{h}^{(L - 1)} \sim P; ~\rand{\theta}^{(L-1)} \sim Q}{\log \left(\Prob[y_n|\rand{h}^{(L-1)}, \rand{\theta}^{(L-1)}] \right)}
\label{eq:bayes-learning-ELBO-alternative}
\end{equation} 
then $\overline{\mathcal{L}}_n(\phi)$ is a lower bound of $\mathcal{L}_n(\phi)$, i.e. $\overline{\mathcal{L}}_n(\phi) \leq \mathcal{L}_n(\phi)$. The equality $\overline{\mathcal{L}}_n(\phi) = \mathcal{L}_n(\phi)$ holds if $\rand{h}^{(L-1)}$ is deterministic given input $x$ and all parameters before the last layer ${\theta}^{(\colon\!L-2)}$.
\end{theorem}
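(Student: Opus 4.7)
The plan is to derive the inequality via a single application of Jensen's inequality to $\log$, exploiting the hierarchical/Markov factorization of the network and the layer-wise independence of $Q$.

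First I would expand $\Prob[y_n \mid x_n, \theta]$ using the Markov property of the graphical model to marginalize out only the penultimate hidden state $h^{(L-1)}$:
\begin{equation*}
\Prob[y_n \mid x_n, \theta] = \int \Prob[y_n \mid h^{(L-1)}, \theta^{(L-1)}] \, \Prob[h^{(L-1)} \mid x_n, \theta^{(\colon\! L-2)}] \, d h^{(L-1)}.
\end{equation*}
This rewriting uses only the conditional-independence structure stated in Section~\ref{subsec:hierarchical}: once $\theta^{(\colon\! L-2)}$ is fixed, the distribution of $h^{(L-1)}$ is determined by probabilistic propagation from $x_n$, and $y_n$ depends on the earlier parameters only through $h^{(L-1)}$.

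Next I would take $\log$ of both sides and apply Jensen's inequality to the concave function $\log$, viewing the right-hand side as an expectation of $\Prob[y_n \mid h^{(L-1)}, \theta^{(L-1)}]$ under the distribution $\Prob[h^{(L-1)} \mid x_n, \theta^{(\colon\! L-2)}]$:
\begin{equation*}
\log \Prob[y_n \mid x_n, \theta] \;\geq\; \int \Prob[h^{(L-1)} \mid x_n, \theta^{(\colon\! L-2)}] \log \Prob[y_n \mid h^{(L-1)}, \theta^{(L-1)}] \, d h^{(L-1)}.
\end{equation*}
I then take expectations over $Q(\theta;\phi)$, split $Q(\theta;\phi) = Q(\theta^{(\colon\! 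L-2)};\phi^{(\colon\! L-2)}) \, Q(\theta^{(L-1)}; \phi^{(L-1)})$ by layer-wise independence, and invoke Fubini to carry the integral over $\theta^{(\colon\! L-2)}$ inside, producing $P(h^{(L-1)}; \psi^{(L-1)}) = \int \Prob[h^{(L-1)} \mid x_n, \theta^{(\colon\! L-2)}] \, Q(\theta^{(\colon\! L-2)};\phi^{(\colon\! L-2)}) \, d\theta^{(\colon\! L-2)}$ exactly as in Equation~\eqref{eq:bayes-prediction-iterative}. The result is precisely $\overline{\mathcal{L}}_n(\phi)$ as defined in \eqref{eq:bayes-learning-ELBO-alternative}, yielding the desired bound.

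For the equality case, I would recall that strict concavity of $\log$ makes Jensen tight iff the integrand $\Prob[y_n \mid h^{(L-1)}, \theta^{(L-1)}]$ is almost-surely constant in $h^{(L-1)}$ under $\Prob[h^{(L-1)} \mid x_n, \theta^{(\colon\! L-2)}]$. A clean sufficient condition, which is the one stated, is that this conditional is a Dirac mass, i.e.\ $h^{(L-1)}$ is a deterministic function of $x_n$ and $\theta^{(\colon\! L-2)}$; then the inner integral collapses and the bound is attained. I do not foresee a serious obstacle: the only care point is bookkeeping of which parameters are marginalized at which step (the factorization of $Q$ across layers and the Markov factorization of $\Prob$ across hidden states must be lined up so that the Fubini swap yields exactly the $P(h^{(L-1)};\psi^{(L-1)})$ appearing in $\overline{\mathcal{L}}_n$), together with a tacit integrability/positivity assumption on the likelihoods needed to justify Jensen and Fubini.
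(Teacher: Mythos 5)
Your proposal is correct and follows essentially the same route as the paper's proof in Appendix~\ref{subsec:bayes-learning-ELBO-alternative}: expand $\Prob[y_n|x_n,\theta]$ over $h^{(L-1)}$ via the Markov property, apply Jensen's inequality to the concave $\log$, then use layer-wise independence of $Q$ and a Fubini swap to marginalize $\theta^{(\colon\! L-2)}$ into $P(h^{(L-1)};\psi^{(L-1)})$, with the deterministic (Dirac) case collapsing the inner integral so the bound is attained. Your treatment of the equality condition is in fact slightly more explicit than the paper's, which states it without elaboration.
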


{\bf Analytic Forms of $\overline{\mathcal{L}}_n(\phi)$. }
While the lower bound in Theorem~\ref{thm:lower_bound} applies to {\BNN}s 
with {\em arbitrary} distributions $P$ on hidden variables $\rand{h}$, $Q$ on model parameters $\rand{\theta}$, 
and {\em any} problem setting (e.g.\ classification or regression), 
{\em in practice} sampling-free probabilistic backpropagation requires that 
$\overline{\mathcal{L}}_n(\phi)$ can be analytically evaluated (or further lower bounded)
in terms of $\phi^{(L-1)}$ and $\theta^{(L-1)}$.
This task is nontrivial since it requires redesign of the output layer, 
i.e.\ the function of $\Prob[y|\rand{h}^{(L-1)}, \rand{\theta}^{(L-1)}]$. 
In this paper, we develop two layers for classification and regression tasks, 
and present the classification case in this section due to space limit. 
Since $\overline{\mathcal{L}}_n(\phi)$ involves the last layer only, 
we omit the superscripts/subsripts of $\rand{h}^{(L-1)}$, $\psi^{(L-1)}$, $\phi^{(L-1)}$, $x_n$, $y_n$, 
and denote them as $\rand{h}$, $\psi$, $\phi$, $x$, $y$ . 
\begin{theorem}[\textbf{Analytic Form of $\overline{\mathcal{L}}_n(\phi)$ for Classification}] 
\label{thm:lastlayer-classification}
Let $\rand{h} \in \R^{K}$ (with K the number of classes) be the pre-activations of a softmax layer (a.k.a.\ logits), and $\phi = s \in \R^{+}$ be a scaling factor that adjusts its scale such that $\Prob[\rand{y} = c|\rand{h}, s] = {\exp(\rand{h}_c/s)}/{\sum_{k = 1}^{K} \exp(\rand{h}_k/s)}$. Suppose the logits $\{\rand{h}_k\}_{k = 1}^{K}$ are pairwise independent (which holds under mean-field approximation) and $\rand{h}_k$ follows a Gaussian distribution $\rand{h}_k \sim \mathcal{N}(\mu_k, \nu_k)$ (therefore $\psi =\{\mu_k, \nu_k\}_{k=1}^K$) and $s$ is a deterministic parameter. Then $\overline{\mathcal{L}}_{n}(\phi)$ is further lower bounded as
$\overline{\mathcal{L}}_{n}(\phi) \geq \frac{\mu_c}{s} - \log \left( \sum_{k = 1}^{K} \exp\left(\frac{\mu_k}{s} + \frac{\nu_k}{2s^2} \right) \right)$.
\end{theorem}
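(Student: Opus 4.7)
The plan is to start from the defining expression for $\overline{\mathcal{L}}_n(\phi)$ in Theorem~\ref{thm:lower_bound}, substitute the softmax form of $\Prob[\rand{y}=c|\rand{h},s]$, and then reduce the remaining integral to something closed-form by a single application of Jensen's inequality followed by the Gaussian moment generating function. First I would plug in the categorical likelihood, which gives
\begin{equation}
\overline{\mathcal{L}}_n(\phi) = \ESub{\rand{h}}{\frac{\rand{h}_c}{s}} - \ESub{\rand{h}}{\log\sum_{k=1}^{K}\exp\!\left(\frac{\rand{h}_k}{s}\right)}.
\end{equation}
The first term is immediate: by linearity and $\rand{h}_c\sim\mathcal N(\mu_c,\nu_c)$ we get $\mu_c/s$. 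The second term is the obstacle, since $\log\sum_k\exp(\cdot)$ has no tractable expectation under a Gaussian.

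The second step is to upper-bound the problematic term. Because $\log$ is concave, Jensen's inequality yields
\begin{equation}
\ESub{\rand{h}}{\log\sum_{k=1}^{K}\exp(\rand{h}_k/s)} \;\le\; \log \ESub{\rand{h}}{\sum_{k=1}^{K}\exp(\rand{h}_k/s)} \;=\; \log\sum_{k=1}^{K}\ESub{\rand{h}_k}{\exp(\rand{h}_k/s)},
\end{equation}
where the final equality uses linearity of expectation together with the pairwise independence hypothesis (strictly, only marginals are needed here, so independence is sufficient but not tight).

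Third, I would evaluate each remaining scalar expectation via the Gaussian moment generating function: for $\rand{h}_k\sim\mathcal N(\mu_k,\nu_k)$,
\begin{equation}
\ESub{\rand{h}_k}{\exp(\rand{h}_k/s)} = \exp\!\left(\frac{\mu_k}{s} + \frac{\nu_k}{2s^{2}}\right).
\end{equation}
Combining this with the Jensen bound and flipping the sign (since the term appears with a minus sign in $\overline{\mathcal{L}}_n(\phi)$) converts the upper bound into the desired lower bound, producing exactly $\mu_c/s - \log\sum_{k=1}^{K}\exp\!\left(\mu_k/s + \nu_k/(2s^{2})\right)$.

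The only genuinely delicate point is tracking the direction of the inequality: we want a \emph{lower} bound on $\overline{\mathcal{L}}_n(\phi)$, and the Jensen step supplies an \emph{upper} bound on $\ESub{\rand{h}}{\log\sum_k\exp(\rand{h}_k/s)}$, which flips sign correctly. Everything else is routine—no approximation is needed beyond Jensen, and the independence assumption is invoked only to split the expectation of the sum into a sum of univariate Gaussian MGFs, which keeps the result fully analytic and differentiable in $\phi=s$ and $\psi=\{\mu_k,\nu_k\}$, as required for sampling-free backpropagation.
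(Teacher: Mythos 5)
Your proposal is correct and matches the paper's own proof in Appendix~\ref{subsec:layer-softmax} essentially step for step: the same Jensen bound $\ESub{\rand{h}}{\log\sum_k\exp(\rand{h}_k/s)}\le\log\ESub{\rand{h}}{\sum_k\exp(\rand{h}_k/s)}$, followed by evaluating each univariate Gaussian expectation (the paper completes the square explicitly, which is just your MGF identity). Your side remark that only the marginals, not full independence, are needed after Jensen is also accurate.
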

The regression case and proofs for both layers are deferred to Appendix~\ref{app:proposed-method}.

\section{{\BQNLONG}s ({\BQN}s)}
\label{sec:bayes-quantized-net}

While Section~\ref{subsec:bnn-alternative-ELBO} provides a general solution to learning in {\BNN}s, 
the solution relies on the ability to perform probabilistic propagation efficiently.
To address this, we introduce {\BQNLong}s ({\BQN}s) --- {\BNN}s 
where both hidden units $\rand{h}^{(l)}$'s and model parameters $\rand{\theta}^{(l)}$'s take discrete values 
--- along with a set of novel algorithms for efficient sampling-free probabilistic propagation in {\BQN}s.
For simplicity of exposition, we assume activations and model parameters 
take values from the same set $\Q$, and denote the {\em degree of quantization} as $D = |\Q|$, 
(e.g.\ $\Q = \{-1, 1\}$, $D = 2$).

\subsection{Probabilistic Propagation as Tensor Contractions}
\label{subsec:tensor-operations}

\begin{lemma}[\textbf{Probabilistic Propagation in {\BQN}s}] \label{lm:predict-bqn}
After quantization, the iterative step of probabilistic propagation in Equation~\eqref{eq:bayes-prediction-iterative} is computed with a finite sum instead of an integral:
\begin{equation}
P(\rand{h}^{(l+1)}; \psi^{(l+1)}) 
= \sum\nolimits_{h^{(l)}, \theta^{(l)}} 
\Prob[\rand{h}^{(l+1)}| h^{(l)}, \theta^{(l)}] ~ Q(\theta^{(l)}; \phi^{(l)}) ~ 
P(h^{(l)}; \psi^{(l)}) 
\label{eq:bayes-prediction-iterative-quantized}
\end{equation}
and a categorically distributed $\rand{h}^{(l)}$ results in $\rand{h}^{(l+1)}$ being categorical as well. The equation holds without any assumption on the operation $\Prob[\rand{h}^{(l+1)}| \rand{h}^{(l)}, \theta^{(l)}]$ performed in the neural network.
\end{lemma}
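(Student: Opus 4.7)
The plan is to obtain the claim as a direct corollary of the law of total probability specialized to the discrete setting, with essentially no new analytic content beyond observing that integrals against a counting measure collapse to finite sums.

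First, I would invoke Equation~\eqref{eq:bayes-prediction-iterative}, which already establishes the layer-wise marginalization identity
\[
\Prob[\rand{h}^{(l+1)} \mid x] = \int_{h^{(l)}, \theta^{(l)}} \Prob[\rand{h}^{(l+1)} \mid h^{(l)}, \theta^{(l)}] \, Q(\theta^{(l)};\phi^{(l)}) \, \Prob[h^{(l)} \mid x] \, d h^{(l)} \, d\theta^{(l)}.
\]
This identity is a consequence of the layer-wise independence of the $\rand{\theta}^{(l)}$ under $Q$ together with the Markov property assumed in Section~\ref{subsec:hierarchical}; crucially, its derivation does not rely on any particular functional form for the conditional $\Prob[\rand{h}^{(l+1)} \mid \rand{h}^{(l)}, \rand{\theta}^{(l)}]$ beyond its being a well-defined kernel.

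Second, I would specialize to the BQN setting: by construction both $\rand{h}^{(l)}$ and $\rand{\theta}^{(l)}$ take values in a finite Cartesian power of $\Q$, so the underlying reference measure is the counting measure. Under counting measure the integral becomes a finite sum over $h^{(l)}$ and $\theta^{(l)}$, yielding exactly Equation~\eqref{eq:bayes-prediction-iterative-quantized}. Since the derivation has only used the law of total probability and the definition of counting-measure integration, the operation $\Prob[\rand{h}^{(l+1)} \mid h^{(l)}, \theta^{(l)}]$ may be completely arbitrary (e.g.\ a quantized nonlinearity, convolution, pooling), establishing the ``without any assumption'' clause.

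Third, for the categorical-preservation claim, I would note that $\rand{h}^{(l+1)}$ takes values in a finite set, so any probability distribution over it is categorical by definition; the parameters $\psi^{(l+1)}$ are just the collection of mass values $\{P(h^{(l+1)};\psi^{(l+1)})\}_{h^{(l+1)}}$. These are nonnegative and sum to one, which I would verify by interchanging the order of summation and applying normalization of $\Prob[\rand{h}^{(l+1)} \mid h^{(l)}, \theta^{(l)}]$, $Q(\theta^{(l)};\phi^{(l)})$, and $P(h^{(l)};\psi^{(l)})$ in turn.

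Honestly, there is no real obstacle here: the lemma is a bookkeeping statement that frames the discrete marginalization as the foundation for the later tensor-contraction viewpoint. The substantive difficulty — namely that the sum has size exponential in the layer width and requires the CP/mean-field approximations introduced afterward to be tractable — is deferred to subsequent sections and is not part of this lemma.
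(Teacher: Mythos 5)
Your proposal is correct and follows essentially the same route as the paper: the paper derives Equation~\eqref{eq:bayes-prediction-iterative} in Appendix~\ref{app:bayesian_approach} from exactly the layer-wise independence and Markov assumptions you cite, and treats Lemma~\ref{lm:predict-bqn} as immediate once the integrals collapse to finite sums over the quantized values of $h^{(l)}$ and $\theta^{(l)}$, which is precisely your second step. Your explicit normalization check for the categorical-preservation claim (interchanging sums and using that the kernel, $Q$, and $P$ each sum to one) is a harmless elaboration of a point the paper leaves implicit.
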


Notice all distributions in Equation~\eqref{eq:bayes-prediction-iterative-quantized} are represented in high-order tensors:
Suppose there are $I$ input units, $J$ output units, and $K$ model parameters at the $l$-th layer, then $\rand{h}^{(l)} \in \Q^{I}$, $\rand{\theta}^{(l)} \in \Q^{K}$, and $\rand{h}^{(l+1)} \in \Q^{J}$, and their  distributions are characterized by $P(\rand{h}^{(l)}; \psi^{(l)}) \allowbreak \in \R^{D^{I}}$, $Q(\rand{\theta}^{(l)}; \phi^{(l)}) \allowbreak \in \R^{D^{K}}$,  $P(\rand{h}^{(l+1)}; \psi^{(l+1)}) \allowbreak \in \R^{D^{J}}$, and $\Prob[\rand{h}^{(l+1)}| \rand{h}^{(l)}, \rand{\theta}^{(l)}] \allowbreak \in \R^{D^{J} \times D^{I} \times D^{K}}$ respectively. Therefore, each step in probabilistic propagation is a {\em tensor contraction} of three tensors,
which establishes the duality between {\BQN}s and hierarchical tensor networks~\citep{robeva2017duality}.

Since tensor contractions are differentiable w.r.t.\ all inputs, {\BQN}s thus circumvent the difficulties in training {\QNN}s~\citep{courbariaux2015binaryconnect, rastegari2016xnor}, whose outputs are not differentiable w.r.t.\ the discrete parameters.
This result is not surprising: if we consider learning in {\QNN}s as an {\em integer programming (IP)} problem, solving its Bayesian counterpart is equivalent to the approach to relaxing the problem into a {\em continuous optimization} problem~\citep{williamson2011design}.

{\bf Complexity of Exact Propagation.}
The computational complexity to evaluate Equation~\eqref{eq:bayes-prediction-iterative-quantized} is exponential in the number of random variables $O(D^{IJK})$, which is intractable for {\QNNlong} of any reasonable size. We thus turn to approximations.

\subsection{Approximate Propagation via Rank-1 Tensor CP Decomposition}
\label{subsec:mean-field}

We propose a principled approximation to reduce the computational complexity in probabilistic propagation in {\BQN}s using {\em tensor CP decomposition}, which factors an intractable high-order probability tensor into tractable lower-order factors~\citep{grasedyck2013literature}.
In this paper, we consider the simplest {\em rank-$1$ tensor CP decomposition}, where the joint distributions of $P$ and $Q$ are fully factorized into products of their marginal distributions, thus equivalent to the {\em mean-field approximation}~\citep{wainwright2008graphical}.
With rank-$1$ CP decomposition on $P(h^{(l)}; \psi^{(l)}), \forall l \in [L]$, the tensor contraction in~\eqref{eq:bayes-prediction-iterative-quantized} reduces to a standard {\em Tucker contraction}~\citep{kolda2009tensor}
\begin{equation}
P(\rand{h}^{(l+1)}_j; \psi^{(l+1)}_j) \approx 
\sum\nolimits_{h^{(l)}, \theta^{(l)}}
\Prob[\rand{h}^{(l+1)}_{j} | \theta^{(l)}, h^{(l)}] ~ \prod\nolimits_{k} Q(\theta^{(l)}_k; \phi^{(l)}_k) ~ \prod\nolimits_{i} P(h^{(l)}_i; \psi^{(l)}_i)
\label{eq:bayes-prediction-iterative-mean-field}
\end{equation}
where each term of $\psi^{(l)}_i$, $\phi^{(l)}_k$ parameterizes a single categorical variable. In our implementation, we store the parameters in their log-space, i.e. $Q(\theta^{(l)}_k = \Q(d)) = {\exp(\psi^{(l)}_k(d))}/{\sum_{q = 1}^{D} \exp(\phi^{(l)}_k(q))}$. 

{\bf Fan-in Number $E$.} 
In a practical model, for the $l$-th layer, 
an output unit $\rand{h}^{(l+1)}_j$ only (conditionally) depends on 
a subset of all input units $\{\rand{h}^{(l)}_i\}$ and model parameters $\{\rand{\theta}^{(h)}_k \}$ 
according to the connectivity pattern in the layer.
We denote the set of dependent input units and parameters for $\rand{h}^{(l+1)}_j$ as $\mathcal{I}^{(l+1)}_j$ and $\mathcal{M}^{(l+1)}_{j}$, and define the {\em fan-in number} $E$ for the layer as $\max_j  \left| \mathcal{I}^{(l+1)}_j \right| + \left| \mathcal{M}^{(l+1)}_{j} \right|$.

{\bf Complexity of Approximate Propagation.} 
The approximate propagation reduces the computational complexity 
from $O(D^{I J K})$ to $O(J D^E)$, which is linear in the number of output units $J$ 
if we assume the fan-in number $E$ to be a constant (i.e.\ $E$ is not proportional to $I$).

\subsection{Fast Algorithms for Approximate Propagation}\label{subsec:fast-algorithms}
\label{subsec:algorithms}

Different types of network layers have different fan-in numbers $E$, 
and for those layers with $E$ greater than a small constant, Equation~\eqref{eq:bayes-prediction-iterative-mean-field} 
is inefficient since the complexity grows exponential in $E$. Therefore in this part, we devise fast(er) algorithms to further lower the complexity.

{\bf Small Fan-in Layers: Direct Tensor Contraction.}
If $E$ is small, we implement the approximate propagation through {\em tensor contraction} in Equation~\eqref{eq:bayes-prediction-iterative-mean-field}.
The computational complexity is $O(J D^E)$ as discussed previously.
See Appendix~\ref{appendix:small-fan-in} for a detailed discussion.

{\bf Medium Fan-in Layers: Discrete Fourier Transform.}
If $E$ is medium, we implement approximate propagation through {\em fast Fourier transform} since summation of discrete random variables is equivalent to convolution of their probability mass function. See Appendix~\ref{subsec:dft} for details. With the {\em{\FFTlong}}, the computational complexity is reduced to $O(JE^2 D\log(ED))$.

{\bf Large Fan-in Layers: Lyapunov Central Limit Theorem.}
In a typical linear layer, the fan-in $E$ is large, and a super-quadratic algorithm using {\FFTlong} is still computational expensive. 
Therefore, we derive a faster algorithm 
 based on the {\em Lyapunov central limit theorem} (See App~\ref{subsec:linear-approximation})
With CLT, the computational complexity is further reduced to $O(JED)$.

{\em Remarks:} Depending on the fan-in numbers $E$, 
we adopt CLT for linear layers with sufficiently large $E$ such as {\em fully connected layers} and {\em convolutional layers}; 
DFT for those with medium $E$ such as {\em average pooling layers} and {\em depth-wise layers}; 
and direct tensor contraction for those with small $E$ such as {\em shortcut layers} and {\em nonlinear layers}.  


\section{Experiments}
\label{sec:experiments}
In this section, we demonstrate the effectiveness of {\BQN}s 
on the MNIST, Fashion-MNIST, KMNIST 
{and CIFAR10} classification datasets.
We evaluate our {\BQN}s with both {\em multi-layer perceptron} (MLP) 
and {\em convolutional neural network} (CNN) models.
In training, each image is augmented by a random shift within $2$ pixels
 (with an additional random flipping for CIFAR10), and no augmentation is used in test.   
In the experiments, we consider a class of {\QNNlong}s,  
with both {\em binary} weights and activations (i.e.\ $\Q = \{-1, 1\}$) 
with sign activations $\sigma(\cdot) = \text{sign}(\cdot)$.
For {\BQN}s, the distribution parameters $\phi$ 
are initialized by Xavier's uniform initializer, 
and all models are trained by ADAM optimizer~\citep{kingma2014adam} for $100$ epochs 
(and $300$ epochs for CIFAR10)
with batch size $100$ and initial learning rate $10^{-2}$, which decays by $0.98$ per epoch.


\begin{table}[!htbp]
\centering
{\scriptsize
\setlength{\tabcolsep}{2.5pt}
\begin{tabular}{l|c c|c c| c c|c c}
\multirow{2}{*}{Methods}
& \multicolumn{2}{c}{MNIST} 
& \multicolumn{2}{c}{KMNIST}
& \multicolumn{2}{c}{Fashion-MNIST}
& \multicolumn{2}{c}{CIFAR10}
\\
& NLL ($10^{-3}$) & \% Err.
& NLL ($10^{-3}$) & \% Err.
& NLL ($10^{-3}$) & \% Err.
& NLL ($10^{-3}$) & \% Err.
\\
\hline
{E-\QNN on MLP} & 
{546.6$ \pm $157.9} &
{3.30 $ \pm $0.65}& 
{2385.6$ \pm $432.3} & 
{17.88$ \pm $1.86} &
{2529.4$ \pm $276.7} & 
{13.02$ \pm $0.81} &
N/A &
N/A 
\\
{\BQN on MLP} &
\textbf{130.0$\pm$3.5}&
\textbf{2.49$ \pm$0.08}& 
\textbf{457.7$\pm$13.8} & 
\textbf{13.41$ \pm$0.12} &
\textbf{417.3$\pm$8.1}& 
\textbf{9.99$\pm$0.20}&
N/A&
N/A
\\
\hline
{E-\QNN on CNN} & 
{425.3$\pm$61.8} &
{0.85$\pm$0.13} & 
{3755.7$\pm$465.1} & 
{11.49$\pm$1.16} &
{1610.7$\pm$158.4} & 
{\textbf{3.02$\pm$0.37}} &
7989.7	$ \pm $ 600.2 &
15.92 	$ \pm $ 0.72
\\
{\BQN on CNN} &
\textbf{41.8$\pm$1.6} &
\textbf{0.85$\pm$0.06} & 
\textbf{295.5$\pm$1.4} & 
\textbf{9.95$\pm$0.15} &
\textbf{209.5$\pm$2.8} & 
{4.65$\pm$0.15}&
\textbf{530.6 $ \pm $ 23.0} &
\textbf{13.74 $ \pm $0.47} 
\end{tabular}
\caption{
Comparison of performance of {\BQN}s against the baseline {E-\QNN}. Each E-QNN is an ensemble of $10$ networks, which are trained individually and but make predictions jointly. We report both {NLL} (which accounts for prediction uncertainty) and {0-1 test error} (which doesn't account for prediction uncertainty). All the numbers are averages over $10$ runs with different seeds, the standard deviation are exhibited following the $\pm$ sign.
}
\label{tab:expressive-power}
}
\vspace{-0.3em}
\end{table}


\begin{figure}[!htbp]
\centering
\begin{subfigure}[b]{0.24\linewidth}
\centering
\begin{tikzpicture}[scale=0.4]

\begin{axis}[
axis background/.style={fill=white!89.80392156862746!black},
axis line style={white},
legend cell align={left},
legend entries={{BNN},{E-QNN},{BQN}},
legend style={at={(0.91,0.5)}, anchor=east, draw=white!80.0!black, fill=white!89.80392156862746!black},
tick align=outside,
tick pos=left,
x grid style={white},
xlabel={$\lambda$ level of model uncertainty},
xmajorgrids,
xmin=8.22340159426889e-05, xmax=0.00608020895329329,
xmode=log,
y grid style={white},
ylabel={NLL},
ymajorgrids,
ymin=23.0546079411516, ymax=536.393419986403,
ymode=log
]
\addlegendimage{no markers, green!50.0!black}
\addlegendimage{no markers, red}
\addlegendimage{no markers, blue}
\path [draw=green!50.0!black, fill=green!50.0!black, opacity=0.2] (axis cs:0.0001,38.4)
--(axis cs:0.0001,30.2)
--(axis cs:0.0002,30.1)
--(axis cs:0.0005,27.8)
--(axis cs:0.001,26.7)
--(axis cs:0.002,26.6)
--(axis cs:0.005,27.3)
--(axis cs:0.005,31.9)
--(axis cs:0.005,31.9)
--(axis cs:0.002,31.6)
--(axis cs:0.001,34.9)
--(axis cs:0.0005,35.8)
--(axis cs:0.0002,34.9)
--(axis cs:0.0001,38.4)
--cycle;

\path [draw=red, fill=red, opacity=0.2] (axis cs:0.0001,464.9)
--(axis cs:0.0001,341.3)
--(axis cs:0.0002,341.3)
--(axis cs:0.0005,341.3)
--(axis cs:0.001,341.3)
--(axis cs:0.002,341.3)
--(axis cs:0.005,341.3)
--(axis cs:0.005,464.9)
--(axis cs:0.005,464.9)
--(axis cs:0.002,464.9)
--(axis cs:0.001,464.9)
--(axis cs:0.0005,464.9)
--(axis cs:0.0002,464.9)
--(axis cs:0.0001,464.9)
--cycle;

\path [draw=blue, fill=blue, opacity=0.2] (axis cs:0.0001,43.4)
--(axis cs:0.0001,40.2)
--(axis cs:0.0002,41.8)
--(axis cs:0.0005,42.6)
--(axis cs:0.001,46)
--(axis cs:0.002,50.8)
--(axis cs:0.005,59.1)
--(axis cs:0.005,61.7)
--(axis cs:0.005,61.7)
--(axis cs:0.002,52.8)
--(axis cs:0.001,48)
--(axis cs:0.0005,44.8)
--(axis cs:0.0002,44)
--(axis cs:0.0001,43.4)
--cycle;

\addplot [semithick, green!50.0!black, dashed, mark=x, mark size=3, mark options={solid}]
table [row sep=\\]{%
0.0001	34.3 \\
0.0002	32.5 \\
0.0005	31.8 \\
0.001	30.8 \\
0.002	29.1 \\
0.005	29.6 \\
};
\addplot [semithick, red, mark=x, mark size=3, mark options={solid}]
table [row sep=\\]{%
0.0001	403.1 \\
0.0002	403.1 \\
0.0005	403.1 \\
0.001	403.1 \\
0.002	403.1 \\
0.005	403.1 \\
};
\addplot [semithick, blue, mark=x, mark size=3, mark options={solid}]
table [row sep=\\]{%
0.0001	41.8 \\
0.0002	42.9 \\
0.0005	43.7 \\
0.001	47 \\
0.002	51.8 \\
0.005	60.4 \\
};
\end{axis}

\end{tikzpicture}
    \caption{NLL MNIST}
    \label{fig:expressive-power-mnist-nll}
\end{subfigure}
\hfill
\begin{subfigure}[b]{0.24\linewidth}
    \centering
\begin{tikzpicture}[scale=0.4]

\begin{axis}[
axis background/.style={fill=white!89.80392156862746!black},
axis line style={white},
legend cell align={left},
legend entries={{BNN},{E-QNN},{BQN}},
legend style={at={(0.91,0.5)}, anchor=east, draw=white!80.0!black, fill=white!89.80392156862746!black},
tick align=outside,
tick pos=left,
x grid style={white},
xlabel={$\lambda$ level of model uncertainty},
xmajorgrids,
xmin=8.60891659331734e-05, xmax=0.00232317269928309,
xmode=log,
y grid style={white},
ylabel={NLL},
ymajorgrids,
ymin=207.470674343851, ymax=67116.8720304143,
ymode=log
]
\addlegendimage{no markers, green!50.0!black}
\addlegendimage{no markers, red}
\addlegendimage{no markers, blue}
\path [draw=green!50.0!black, fill=green!50.0!black, opacity=0.2] (axis cs:0.0001,317.1)
--(axis cs:0.0001,293.3)
--(axis cs:0.0002,292.9)
--(axis cs:0.0005,281.1)
--(axis cs:0.001,271.5)
--(axis cs:0.002,269.8)
--(axis cs:0.002,278)
--(axis cs:0.002,278)
--(axis cs:0.001,284.9)
--(axis cs:0.0005,296.5)
--(axis cs:0.0002,314.5)
--(axis cs:0.0001,317.1)
--cycle;

\path [draw=red, fill=red, opacity=0.2] (axis cs:0.0001,51611.5)
--(axis cs:0.0001,48321.9)
--(axis cs:0.0002,48321.9)
--(axis cs:0.0005,48321.9)
--(axis cs:0.001,48321.9)
--(axis cs:0.002,48321.9)
--(axis cs:0.002,51611.5)
--(axis cs:0.002,51611.5)
--(axis cs:0.001,51611.5)
--(axis cs:0.0005,51611.5)
--(axis cs:0.0002,51611.5)
--(axis cs:0.0001,51611.5)
--cycle;

\path [draw=blue, fill=blue, opacity=0.2] (axis cs:0.0001,300.4)
--(axis cs:0.0001,295.6)
--(axis cs:0.0002,294.1)
--(axis cs:0.0005,294.1)
--(axis cs:0.001,294.9)
--(axis cs:0.002,297)
--(axis cs:0.002,302.8)
--(axis cs:0.002,302.8)
--(axis cs:0.001,300.1)
--(axis cs:0.0005,296.9)
--(axis cs:0.0002,298.1)
--(axis cs:0.0001,300.4)
--cycle;

\addplot [semithick, green!50.0!black, dashed, mark=x, mark size=3, mark options={solid}]
table [row sep=\\]{%
0.0001	305.2 \\
0.0002	303.7 \\
0.0005	288.8 \\
0.001	278.2 \\
0.002	273.9 \\
};
\addplot [semithick, red, mark=x, mark size=3, mark options={solid}]
table [row sep=\\]{%
0.0001	49966.7 \\
0.0002	49966.7 \\
0.0005	49966.7 \\
0.001	49966.7 \\
0.002	49966.7 \\
};
\addplot [semithick, blue, mark=x, mark size=3, mark options={solid}]
table [row sep=\\]{%
0.0001	298 \\
0.0002	296.1 \\
0.0005	295.5 \\
0.001	297.5 \\
0.002	299.9 \\
};
\end{axis}

\end{tikzpicture}
    \caption{NLL FMNIST}
    \label{fig:expressive-power-fmnist-nll}
\end{subfigure}
\begin{subfigure}[b]{0.24\linewidth}
    \centering
\begin{tikzpicture}[scale=0.4]

\begin{axis}[
axis background/.style={fill=white!89.80392156862746!black},
axis line style={white},
legend cell align={left},
legend entries={{BNN},{E-QNN},{BQN}},
legend style={at={(0.91,0.5)}, anchor=east, draw=white!80.0!black, fill=white!89.80392156862746!black},
tick align=outside,
tick pos=left,
x grid style={white},
xlabel={$\lambda$ level of model uncertainty},
xmajorgrids,
xmin=8.60891659331734e-05, xmax=0.00232317269928309,
xmode=log,
y grid style={white},
ylabel={NLL},
ymajorgrids,
ymin=153.79076063397, ymax=2239.03489767863,
ymode=log
]
\addlegendimage{no markers, green!50.0!black}
\addlegendimage{no markers, red}
\addlegendimage{no markers, blue}
\path [draw=green!50.0!black, fill=green!50.0!black, opacity=0.2] (axis cs:0.0001,246.9)
--(axis cs:0.0001,229.9)
--(axis cs:0.0002,212.9)
--(axis cs:0.0005,213.7)
--(axis cs:0.001,209.3)
--(axis cs:0.002,173.7)
--(axis cs:0.002,208.5)
--(axis cs:0.002,208.5)
--(axis cs:0.001,230.7)
--(axis cs:0.0005,239.7)
--(axis cs:0.0002,253.3)
--(axis cs:0.0001,246.9)
--cycle;

\path [draw=red, fill=red, opacity=0.2] (axis cs:0.0001,1982.4)
--(axis cs:0.0001,1580.2)
--(axis cs:0.0002,1580.2)
--(axis cs:0.0005,1580.2)
--(axis cs:0.001,1580.2)
--(axis cs:0.002,1580.2)
--(axis cs:0.002,1982.4)
--(axis cs:0.002,1982.4)
--(axis cs:0.001,1982.4)
--(axis cs:0.0005,1982.4)
--(axis cs:0.0002,1982.4)
--(axis cs:0.0001,1982.4)
--cycle;

\path [draw=blue, fill=blue, opacity=0.2] (axis cs:0.0001,216.3)
--(axis cs:0.0001,213.3)
--(axis cs:0.0002,210.8)
--(axis cs:0.0005,206.7)
--(axis cs:0.001,210.3)
--(axis cs:0.002,208.1)
--(axis cs:0.002,215.7)
--(axis cs:0.002,215.7)
--(axis cs:0.001,214.9)
--(axis cs:0.0005,212.3)
--(axis cs:0.0002,216.6)
--(axis cs:0.0001,216.3)
--cycle;

\addplot [semithick, green!50.0!black, dashed, mark=x, mark size=3, mark options={solid}]
table [row sep=\\]{%
0.0001	238.4 \\
0.0002	233.1 \\
0.0005	226.7 \\
0.001	220 \\
0.002	191.1 \\
};
\addplot [semithick, red, mark=x, mark size=3, mark options={solid}]
table [row sep=\\]{%
0.0001	1781.3 \\
0.0002	1781.3 \\
0.0005	1781.3 \\
0.001	1781.3 \\
0.002	1781.3 \\
};
\addplot [semithick, blue, mark=x, mark size=3, mark options={solid}]
table [row sep=\\]{%
0.0001	214.8 \\
0.0002	213.7 \\
0.0005	209.5 \\
0.001	212.6 \\
0.002	211.9 \\
};
\end{axis}

\end{tikzpicture}
    \caption{NLL KMNIST}
    \label{fig:expressive-power-kmnist-nll}
\end{subfigure}
\begin{subfigure}[b]{0.24\linewidth}
    \centering
\begin{tikzpicture}[scale=0.4]

\begin{axis}[
axis background/.style={fill=white!89.80392156862746!black},
axis line style={white},
legend cell align={left},
legend entries={{BNN},{E-QNN},{BQN}},
legend style={at={(0.91,0.5)}, anchor=east, draw=white!80.0!black, fill=white!89.80392156862746!black},
tick align=outside,
tick pos=left,
x grid style={white},
xlabel={$\lambda$ level of model uncertainty},
xmajorgrids,
xmin=7.94328234724282e-10, xmax=1.25892541179417e-07,
xmode=log,
y grid style={white},
ylabel={NLL},
ymajorgrids,
ymin=440.654580321992, ymax=9894.90052914897,
ymode=log
]
\addlegendimage{no markers, green!50.0!black}
\addlegendimage{no markers, red}
\addlegendimage{no markers, blue}
\path [draw=green!50.0!black, fill=green!50.0!black, opacity=0.2] (axis cs:1e-09,682.6)
--(axis cs:1e-09,633.8)
--(axis cs:1e-08,643.2)
--(axis cs:1e-07,615.9)
--(axis cs:1e-07,668.1)
--(axis cs:1e-07,668.1)
--(axis cs:1e-08,671.4)
--(axis cs:1e-09,682.6)
--cycle;

\path [draw=red, fill=red, opacity=0.2] (axis cs:1e-09,8589.9)
--(axis cs:1e-09,7389.5)
--(axis cs:1e-08,7389.5)
--(axis cs:1e-07,7389.5)
--(axis cs:1e-07,8589.9)
--(axis cs:1e-07,8589.9)
--(axis cs:1e-08,8589.9)
--(axis cs:1e-09,8589.9)
--cycle;

\path [draw=blue, fill=blue, opacity=0.2] (axis cs:1e-09,553.6)
--(axis cs:1e-09,507.6)
--(axis cs:1e-08,532.5)
--(axis cs:1e-07,554.9)
--(axis cs:1e-07,576.9)
--(axis cs:1e-07,576.9)
--(axis cs:1e-08,569.1)
--(axis cs:1e-09,553.6)
--cycle;

\addplot [semithick, green!50.0!black, dashed, mark=x, mark size=3, mark options={solid}]
table [row sep=\\]{%
1e-09	658.2 \\
1e-08	657.3 \\
1e-07	642 \\
};
\addplot [semithick, red, mark=x, mark size=3, mark options={solid}]
table [row sep=\\]{%
1e-09	7989.7 \\
1e-08	7989.7 \\
1e-07	7989.7 \\
};
\addplot [semithick, blue, mark=x, mark size=3, mark options={solid}]
table [row sep=\\]{%
1e-09	530.6 \\
1e-08	550.8 \\
1e-07	565.9 \\
};
\end{axis}

\end{tikzpicture}
    \caption{{NLL CIFAR10}}
    \label{fig:expressive-power-cifar-nll}
\end{subfigure}
\begin{subfigure}[b]{0.24\linewidth}
\centering
\begin{tikzpicture}[scale=0.4]

\begin{axis}[
axis background/.style={fill=white!89.80392156862746!black},
axis line style={white},
legend cell align={left},
legend entries={{BNN},{E-QNN},{BQN}},
legend style={at={(0.03,0.97)}, anchor=north west, draw=white!80.0!black, fill=white!89.80392156862746!black},
tick align=outside,
tick pos=left,
x grid style={white},
xlabel={$\lambda$ level of model uncertainty},
xmajorgrids,
xmin=8.22340159426889e-05, xmax=0.00608020895329329,
xmode=log,
y grid style={white},
ylabel={Percentage Error},
ymajorgrids,
ymin=0.688321980391014, ymax=0.996626607231547,
]
\addlegendimage{no markers, green!50.0!black}
\addlegendimage{no markers, red}
\addlegendimage{no markers, blue}
\path [draw=green!50.0!black, fill=green!50.0!black, opacity=0.2] (axis cs:0.0001,0.969999999999994)
--(axis cs:0.0001,0.729999999999994)
--(axis cs:0.0002,0.709999999999996)
--(axis cs:0.0005,0.719999999999993)
--(axis cs:0.001,0.699999999999993)
--(axis cs:0.002,0.739999999999993)
--(axis cs:0.005,0.830000000000001)
--(axis cs:0.005,0.950000000000001)
--(axis cs:0.005,0.950000000000001)
--(axis cs:0.002,0.899999999999993)
--(axis cs:0.001,0.939999999999993)
--(axis cs:0.0005,0.919999999999993)
--(axis cs:0.0002,0.829999999999996)
--(axis cs:0.0001,0.969999999999994)
--cycle;

\path [draw=red, fill=red, opacity=0.2] (axis cs:0.0001,0.889999999999997)
--(axis cs:0.0001,0.709999999999997)
--(axis cs:0.0002,0.709999999999997)
--(axis cs:0.0005,0.709999999999997)
--(axis cs:0.001,0.709999999999997)
--(axis cs:0.002,0.709999999999997)
--(axis cs:0.005,0.709999999999997)
--(axis cs:0.005,0.889999999999997)
--(axis cs:0.005,0.889999999999997)
--(axis cs:0.002,0.889999999999997)
--(axis cs:0.001,0.889999999999997)
--(axis cs:0.0005,0.889999999999997)
--(axis cs:0.0002,0.889999999999997)
--(axis cs:0.0001,0.889999999999997)
--cycle;

\path [draw=blue, fill=blue, opacity=0.2] (axis cs:0.0001,0.909999999999994)
--(axis cs:0.0001,0.789999999999994)
--(axis cs:0.0002,0.799999999999994)
--(axis cs:0.0005,0.780000000000003)
--(axis cs:0.001,0.780000000000003)
--(axis cs:0.002,0.820000000000005)
--(axis cs:0.005,0.920000000000003)
--(axis cs:0.005,0.980000000000003)
--(axis cs:0.005,0.980000000000003)
--(axis cs:0.002,0.920000000000005)
--(axis cs:0.001,0.900000000000003)
--(axis cs:0.0005,0.900000000000003)
--(axis cs:0.0002,0.899999999999994)
--(axis cs:0.0001,0.909999999999994)
--cycle;

\addplot [semithick, green!50.0!black, dashed, mark=x, mark size=3, mark options={solid}]
table [row sep=\\]{%
0.0001	0.849999999999994 \\
0.0002	0.769999999999996 \\
0.0005	0.819999999999993 \\
0.001	0.819999999999993 \\
0.002	0.819999999999993 \\
0.005	0.890000000000001 \\
};
\addplot [semithick, red, mark=x, mark size=3, mark options={solid}]
table [row sep=\\]{%
0.0001	0.799999999999997 \\
0.0002	0.799999999999997 \\
0.0005	0.799999999999997 \\
0.001	0.799999999999997 \\
0.002	0.799999999999997 \\
0.005	0.799999999999997 \\
};
\addplot [semithick, blue, mark=x, mark size=3, mark options={solid}]
table [row sep=\\]{%
0.0001	0.849999999999994 \\
0.0002	0.849999999999994 \\
0.0005	0.840000000000003 \\
0.001	0.840000000000003 \\
0.002	0.870000000000005 \\
0.005	0.950000000000003 \\
};
\end{axis}

\end{tikzpicture}
    \caption{Error MNIST}
    \label{fig:expressive-power-mnist-err}
\end{subfigure}
\hfill
\begin{subfigure}[b]{0.24\linewidth}
    \centering
\begin{tikzpicture}[scale=0.4]

\begin{axis}[
axis background/.style={fill=white!89.80392156862746!black},
axis line style={white},
legend cell align={left},
legend entries={{BNN},{E-QNN},{BQN}},
legend style={draw=white!80.0!black, fill=white!89.80392156862746!black},
tick align=outside,
tick pos=left,
x grid style={white},
xlabel={$\lambda$ level of model uncertainty},
xmajorgrids,
xmin=8.60891659331734e-05, xmax=0.00232317269928309,
xmode=log,
y grid style={white},
ylabel={Percentage Error},
ymajorgrids,
ymin=7.72958956012007, ymax=17.4000700753781,
]
\addlegendimage{no markers, green!50.0!black}
\addlegendimage{no markers, red}
\addlegendimage{no markers, blue}
\path [draw=green!50.0!black, fill=green!50.0!black, opacity=0.2] (axis cs:0.0001,8.54)
--(axis cs:0.0001,8.02)
--(axis cs:0.0002,8.15)
--(axis cs:0.0005,8.28)
--(axis cs:0.001,8.48000000000001)
--(axis cs:0.002,8.63999999999999)
--(axis cs:0.002,9.06)
--(axis cs:0.002,9.06)
--(axis cs:0.001,8.82000000000001)
--(axis cs:0.0005,8.54)
--(axis cs:0.0002,8.57)
--(axis cs:0.0001,8.54)
--cycle;

\path [draw=red, fill=red, opacity=0.2] (axis cs:0.0001,16.77)
--(axis cs:0.0001,11.35)
--(axis cs:0.0002,11.35)
--(axis cs:0.0005,11.35)
--(axis cs:0.001,11.35)
--(axis cs:0.002,11.35)
--(axis cs:0.002,16.77)
--(axis cs:0.002,16.77)
--(axis cs:0.001,16.77)
--(axis cs:0.0005,16.77)
--(axis cs:0.0002,16.77)
--(axis cs:0.0001,16.77)
--cycle;

\path [draw=blue, fill=blue, opacity=0.2] (axis cs:0.0001,10.22)
--(axis cs:0.0001,9.88)
--(axis cs:0.0002,9.97)
--(axis cs:0.0005,9.87000000000001)
--(axis cs:0.001,9.77)
--(axis cs:0.002,9.8)
--(axis cs:0.002,10.1)
--(axis cs:0.002,10.1)
--(axis cs:0.001,10.25)
--(axis cs:0.0005,10.21)
--(axis cs:0.0002,10.19)
--(axis cs:0.0001,10.22)
--cycle;

\addplot [semithick, green!50.0!black, dashed, mark=x, mark size=3, mark options={solid}]
table [row sep=\\]{%
0.0001	8.28 \\
0.0002	8.36 \\
0.0005	8.41 \\
0.001	8.65000000000001 \\
0.002	8.84999999999999 \\
};
\addplot [semithick, red, mark=x, mark size=3, mark options={solid}]
table [row sep=\\]{%
0.0001	14.06 \\
0.0002	14.06 \\
0.0005	14.06 \\
0.001	14.06 \\
0.002	14.06 \\
};
\addplot [semithick, blue, mark=x, mark size=3, mark options={solid}]
table [row sep=\\]{%
0.0001	10.05 \\
0.0002	10.08 \\
0.0005	10.04 \\
0.001	10.01 \\
0.002	9.95 \\
};
\end{axis}

\end{tikzpicture}
    \caption{Error FMNIST}
    \label{fig:expressive-power-fmnist-err}
\end{subfigure}
\begin{subfigure}[b]{0.24\linewidth}
    \centering
\begin{tikzpicture}[scale=0.4]

\begin{axis}[
axis background/.style={fill=white!89.80392156862746!black},
axis line style={white},
legend cell align={left},
legend entries={{BNN},{E-QNN},{BQN}},
legend style={at={(0.91,0.5)}, anchor=east, draw=white!80.0!black, fill=white!89.80392156862746!black},
tick align=outside,
tick pos=left,
x grid style={white},
xlabel={$\lambda$ level of model uncertainty},
xmajorgrids,
xmin=8.60891659331734e-05, xmax=0.00232317269928309,
xmode=log,
y grid style={white},
ylabel={Percentage Error},
ymajorgrids,
ymin=2.54958989816633, ymax=5.48794141758369,
]
\addlegendimage{no markers, green!50.0!black}
\addlegendimage{no markers, red}
\addlegendimage{no markers, blue}
\path [draw=green!50.0!black, fill=green!50.0!black, opacity=0.2] (axis cs:0.0001,2.96)
--(axis cs:0.0001,2.66)
--(axis cs:0.0002,2.74)
--(axis cs:0.0005,2.64)
--(axis cs:0.001,2.82)
--(axis cs:0.002,2.69)
--(axis cs:0.002,3.15)
--(axis cs:0.002,3.15)
--(axis cs:0.001,3.14)
--(axis cs:0.0005,3.02)
--(axis cs:0.0002,3.04)
--(axis cs:0.0001,2.96)
--cycle;

\path [draw=red, fill=red, opacity=0.2] (axis cs:0.0001,3.68999999999999)
--(axis cs:0.0001,3.00999999999999)
--(axis cs:0.0002,3.00999999999999)
--(axis cs:0.0005,3.00999999999999)
--(axis cs:0.001,3.00999999999999)
--(axis cs:0.002,3.00999999999999)
--(axis cs:0.002,3.68999999999999)
--(axis cs:0.002,3.68999999999999)
--(axis cs:0.001,3.68999999999999)
--(axis cs:0.0005,3.68999999999999)
--(axis cs:0.0002,3.68999999999999)
--(axis cs:0.0001,3.68999999999999)
--cycle;

\path [draw=blue, fill=blue, opacity=0.2] (axis cs:0.0001,5.3)
--(axis cs:0.0001,5.04)
--(axis cs:0.0002,4.99)
--(axis cs:0.0005,4.88999999999999)
--(axis cs:0.001,4.7)
--(axis cs:0.002,4.50000000000001)
--(axis cs:0.002,4.80000000000001)
--(axis cs:0.002,4.80000000000001)
--(axis cs:0.001,4.98)
--(axis cs:0.0005,5.02999999999999)
--(axis cs:0.0002,5.29)
--(axis cs:0.0001,5.3)
--cycle;

\addplot [semithick, green!50.0!black, dashed, mark=x, mark size=3, mark options={solid}]
table [row sep=\\]{%
0.0001	2.81 \\
0.0002	2.89 \\
0.0005	2.83 \\
0.001	2.98 \\
0.002	2.92 \\
};
\addplot [semithick, red, mark=x, mark size=3, mark options={solid}]
table [row sep=\\]{%
0.0001	3.34999999999999 \\
0.0002	3.34999999999999 \\
0.0005	3.34999999999999 \\
0.001	3.34999999999999 \\
0.002	3.34999999999999 \\
};
\addplot [semithick, blue, mark=x, mark size=3, mark options={solid}]
table [row sep=\\]{%
0.0001	5.17 \\
0.0002	5.14 \\
0.0005	4.95999999999999 \\
0.001	4.84 \\
0.002	4.65000000000001 \\
};
\end{axis}

\end{tikzpicture}
    \caption{Error KMNIST}
    \label{fig:expressive-power-kmnist-err}
\end{subfigure}
\begin{subfigure}[b]{0.24\linewidth}
    \centering
\begin{tikzpicture}[scale=0.4]

\begin{axis}[
axis background/.style={fill=white!89.80392156862746!black},
axis line style={white},
legend cell align={left},
legend entries={{BNN},{E-QNN},{BQN}},
legend style={at={(0.91,0.5)}, anchor=east, draw=white!80.0!black, fill=white!89.80392156862746!black},
tick align=outside,
tick pos=left,
x grid style={white},
xlabel={$\lambda$ level of model uncertainty},
xmajorgrids,
xmin=7.94328234724282e-10, xmax=1.25892541179417e-07,
xmode=log,
y grid style={white},
ylabel={Percentage Error},
ymajorgrids,
ymin=8.51394132261946, ymax=17.179540527418,
]
\addlegendimage{no markers, green!50.0!black}
\addlegendimage{no markers, red}
\addlegendimage{no markers, blue}
\path [draw=green!50.0!black, fill=green!50.0!black, opacity=0.2] (axis cs:1e-09,9.40000000000001)
--(axis cs:1e-09,9.18000000000001)
--(axis cs:1e-08,9.02)
--(axis cs:1e-07,8.79)
--(axis cs:1e-07,9.21)
--(axis cs:1e-07,9.21)
--(axis cs:1e-08,9.22)
--(axis cs:1e-09,9.40000000000001)
--cycle;

\path [draw=red, fill=red, opacity=0.2] (axis cs:1e-09,16.64)
--(axis cs:1e-09,15.2)
--(axis cs:1e-08,15.2)
--(axis cs:1e-07,15.2)
--(axis cs:1e-07,16.64)
--(axis cs:1e-07,16.64)
--(axis cs:1e-08,16.64)
--(axis cs:1e-09,16.64)
--cycle;

\path [draw=blue, fill=blue, opacity=0.2] (axis cs:1e-09,14.21)
--(axis cs:1e-09,13.27)
--(axis cs:1e-08,14.08)
--(axis cs:1e-07,15.09)
--(axis cs:1e-07,16.01)
--(axis cs:1e-07,16.01)
--(axis cs:1e-08,14.42)
--(axis cs:1e-09,14.21)
--cycle;

\addplot [semithick, green!50.0!black, dashed, mark=x, mark size=3, mark options={solid}]
table [row sep=\\]{%
1e-09	9.29000000000001 \\
1e-08	9.12 \\
1e-07	9 \\
};
\addplot [semithick, red, mark=x, mark size=3, mark options={solid}]
table [row sep=\\]{%
1e-09	15.92 \\
1e-08	15.92 \\
1e-07	15.92 \\
};
\addplot [semithick, blue, mark=x, mark size=3, mark options={solid}]
table [row sep=\\]{%
1e-09	13.74 \\
1e-08	14.25 \\
1e-07	15.55 \\
};
\end{axis}

\end{tikzpicture}
    \caption{{Error CIFAR10}}
    \label{fig:expressive-power-cifar10-err}
\end{subfigure}
\caption{Comparison of the predictive performance of our {\BQN}s against the E-\QNN 
as well as the non-quantized BNN trained by SGVB on a CNN. 
Negative log-likelihood (NLL) which accounts for uncertainty 
and 0-1 test error which doesn't account for uncertainty are displayed.}
\label{fig:expressive-power-cnn}
\end{figure}
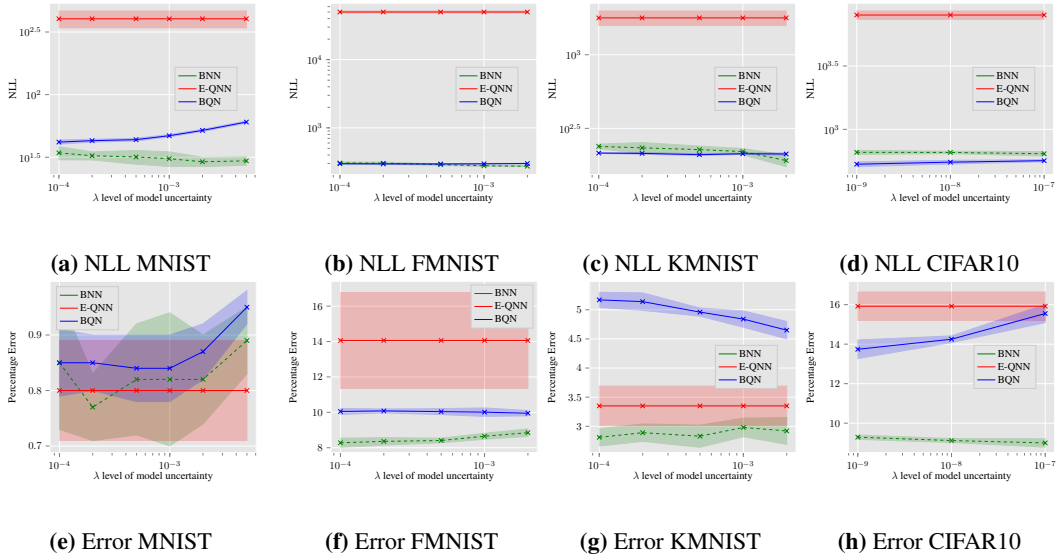

\textbf{Training Objective of {\BQN}s.}
To allow for customized level of uncertainty in the learned Bayesian models, we introduce a regularization coefficient $\lambda$ in the alternative ELBO proposed in Equation~\eqref{eq:bayes-learning-ELBO-2} (i.e.\ a lower bound of the likelihood), and train the {\BQN}s by maximizing the following objective:
$\overline{\mathcal{L}}(\phi) =  \sum_{n = 1}^{N} \overline{\mathcal{L}}_n(\phi) + \lambda \mathcal{R}(\phi) = \lambda \left( {1}/{\lambda}\sum_{n = 1}^{N} \overline{\mathcal{L}}_n(\phi) + \mathcal{R}(\phi) \right)$
where $\lambda$ 
controls the uncertainty level, i.e.\ the importance weight of the prior over the  training set.

\textbf{Baselines.} \textbf{(1)} We compare our {\BQN} against the baseline --
{\em Bootstrap ensemble of {\QNNlong}s} ({E-QNN}).  Each member in the ensemble is trained in a non-Bayesian way~\citep{courbariaux2016binarized}, and jointly make the prediction by averaging over the logits from all members. 
Note that~\cite{courbariaux2016binarized} is chosen over other QNN training methods as the baseline since it trains QNN from random initialization, thus a fair comparison to our approach. Details are discussed in Appendix~\ref{sec:related}.
\textbf{(2)} To exhibit the effectiveness of our \BQN, we further compare against {\em continuous-valued {\BNNLong}} (abbreviated as {\BNN}) with Gaussian parameters. The model is trained with {\em stochastic gradient variational Bayes} (SGVB) augmented by {\em local re-parameterization trick}~\citep{shridhar2019comprehensive}. 
Since the {\BNN} allows for continuous parameters (different from {\BQN} with quantized parameters), the predictive error is expected to be lower than {\BQN}. 

\textbf{Evaluation of {\BQN}s.}
While {\em 0-1 test error} is a popular metric to measure the predictive performance, it is too coarse a metric to assess the uncertainty in decision making (for example it does not account for how badly the wrong predictions are). 
Therefore, we will mainly use the {\em negative log-likelihood} (NLL) to measure the predictive performance in the experiments.

Once a {\BQN} is trained (i.e.\ an approximate posterior $Q(\rand{\theta})$ is learned), we consider three modes to evaluate the behavior of the model: \textbf{(1)} {\em analytic inference (AI)}, \textbf{(2)} {\em Monte Carlo (MC) sampling} and \textbf{(3)} {\em Maximum a Posterior (MAP) estimation}:

\begin{enumerate}[leftmargin=*, itemsep=0em, topsep=0em]
\item In analytic inference (AI, i.e.\ our proposed method), 
we analytically integrate over $Q(\rand{\theta})$ to obtain the predictive distribution as in the training phase. 
Notice that the exact NLL is not accessible with probabilistic propagation 
(which is why we propose an alternative ELBO in Equation~\eqref{eq:bayes-learning-ELBO-2}), 
we will report {\em an upper bound} of the NLL in this mode.
\item In MC sampling, $S$ sets of model parameters are drawn independently 
from the posterior posterior $\theta_s \sim Q(\rand{\theta}), \forall s \in [S]$, 
and the forward propagation is performed as in (non-Bayesian) quantized neural network for each set $\theta_s$, followed by an average over the model outputs. 
The difference between analytic inference and MC sampling will be used to evaluate \textbf{(a)} the effect of mean-field approximation and \textbf{(b)} the tightness of the our proposed alternative ELBO.
\item MAP estimation is similar to {MC sampling}, 
except that only one set of model parameters $\theta^{\star}$ 
is obtained $\theta^{\star} = \arg\max_\theta Q(\theta)$. 
We will exhibit our model's ability to compress a Bayesian neural network by comparing MAP estimation of our {\BQN} with non-Bayesian \QNN.
\end{enumerate}

\subsection{Analysis of Results}
\label{subsec:exp-analysis}


\begin{figure}[!htbp]
\centering
\begin{subfigure}[b]{0.32\linewidth}
\centering
\begin{tikzpicture}[scale=0.5]

\begin{axis}[
axis background/.style={fill=white!89.80392156862746!black},
axis line style={white},
legend cell align={left},
legend entries={{Monte Carlo Sampling},{Analytical Inference},{Difference}},
legend style={at={(0.03,0.97)}, anchor=north west, draw=white!80.0!black, fill=white!89.80392156862746!black},
tick align=outside,
tick pos=left,
x grid style={white},
xlabel={$\lambda$ level of model uncertainty},
xmajorgrids,
xmin=6.53208007180445e-07, xmax=0.00765452956031933,
xmode=log,
y grid style={white},
ylabel={NLL},
ymajorgrids,
ymin=10.355, ymax=64.145
]
\addlegendimage{no markers, red}
\addlegendimage{no markers, blue}
\addlegendimage{no markers, green!50.0!black}
\path [draw=red, fill=red, opacity=0.2] (axis cs:1e-06,30.5)
--(axis cs:1e-06,28.7)
--(axis cs:5e-06,28.5)
--(axis cs:2e-05,28.2)
--(axis cs:5e-05,27.5)
--(axis cs:0.0001,27.6)
--(axis cs:0.0002,28.9)
--(axis cs:0.0005,28.6)
--(axis cs:0.001,28.9)
--(axis cs:0.002,31)
--(axis cs:0.005,35.8)
--(axis cs:0.005,41.6)
--(axis cs:0.005,41.6)
--(axis cs:0.002,33.2)
--(axis cs:0.001,32.3)
--(axis cs:0.0005,29.8)
--(axis cs:0.0002,30.3)
--(axis cs:0.0001,30.4)
--(axis cs:5e-05,31.1)
--(axis cs:2e-05,30.4)
--(axis cs:5e-06,31.7)
--(axis cs:1e-06,30.5)
--cycle;

\path [draw=blue, fill=blue, opacity=0.2] (axis cs:1e-06,43.9)
--(axis cs:1e-06,40.9)
--(axis cs:5e-06,41.9)
--(axis cs:2e-05,40.9)
--(axis cs:5e-05,41.3)
--(axis cs:0.0001,40.2)
--(axis cs:0.0002,41.8)
--(axis cs:0.0005,42.6)
--(axis cs:0.001,46)
--(axis cs:0.002,50.8)
--(axis cs:0.005,59.1)
--(axis cs:0.005,61.7)
--(axis cs:0.005,61.7)
--(axis cs:0.002,52.8)
--(axis cs:0.001,48)
--(axis cs:0.0005,44.8)
--(axis cs:0.0002,44)
--(axis cs:0.0001,43.4)
--(axis cs:5e-05,43.5)
--(axis cs:2e-05,43.9)
--(axis cs:5e-06,44.1)
--(axis cs:1e-06,43.9)
--cycle;

\addplot [semithick, red, mark=x, mark size=3, mark options={solid}]
table [row sep=\\]{%
1e-06	29.6 \\
5e-06	30.1 \\
2e-05	29.3 \\
5e-05	29.3 \\
0.0001	29 \\
0.0002	29.6 \\
0.0005	29.2 \\
0.001	30.6 \\
0.002	32.1 \\
0.005	38.7 \\
};
\addplot [semithick, blue, mark=x, mark size=3, mark options={solid}]
table [row sep=\\]{%
1e-06	42.4 \\
5e-06	43 \\
2e-05	42.4 \\
5e-05	42.4 \\
0.0001	41.8 \\
0.0002	42.9 \\
0.0005	43.7 \\
0.001	47 \\
0.002	51.8 \\
0.005	60.4 \\
};
\addplot [semithick, green!50.0!black, mark=x, mark size=3, mark options={solid}]
table [row sep=\\]{%
1e-06	12.8 \\
5e-06	12.9 \\
2e-05	13.1 \\
5e-05	13.1 \\
0.0001	12.8 \\
0.0002	13.3 \\
0.0005	14.5 \\
0.001	16.4 \\
0.002	19.7 \\
0.005	21.7 \\
};
\end{axis}

\end{tikzpicture}
    \caption{NLL MNIST}
    \label{fig:mean-field-approx-mnist-nll}
\end{subfigure}
\hfill
\begin{subfigure}[b]{0.32\linewidth}
    \centering
\begin{tikzpicture}[scale=0.5]

\begin{axis}[
axis background/.style={fill=white!89.80392156862746!black},
axis line style={white},
legend cell align={left},
legend entries={{Monte Carlo Sampling},{Analytical Inference},{Difference}},
legend style={at={(0.91,0.5)}, anchor=east, draw=white!80.0!black, fill=white!89.80392156862746!black},
tick align=outside,
tick pos=left,
x grid style={white},
xlabel={$\lambda$ level of model uncertainty},
xmajorgrids,
xmin=8.60891659331734e-05, xmax=0.00232317269928309,
xmode=log,
y grid style={white},
ylabel={NLL},
ymajorgrids,
ymin=-6.84500000000002, ymax=317.545
]
\addlegendimage{no markers, red}
\addlegendimage{no markers, blue}
\addlegendimage{no markers, green!50.0!black}
\path [draw=red, fill=red, opacity=0.2] (axis cs:0.0001,291.3)
--(axis cs:0.0001,286.3)
--(axis cs:0.0002,283.6)
--(axis cs:0.0005,283.5)
--(axis cs:0.001,282.9)
--(axis cs:0.002,284)
--(axis cs:0.002,290.2)
--(axis cs:0.002,290.2)
--(axis cs:0.001,291.3)
--(axis cs:0.0005,291.7)
--(axis cs:0.0002,289)
--(axis cs:0.0001,291.3)
--cycle;

\path [draw=blue, fill=blue, opacity=0.2] (axis cs:0.0001,300.4)
--(axis cs:0.0001,295.6)
--(axis cs:0.0002,294.1)
--(axis cs:0.0005,294.1)
--(axis cs:0.001,294.9)
--(axis cs:0.002,297)
--(axis cs:0.002,302.8)
--(axis cs:0.002,302.8)
--(axis cs:0.001,300.1)
--(axis cs:0.0005,296.9)
--(axis cs:0.0002,298.1)
--(axis cs:0.0001,300.4)
--cycle;

\addplot [semithick, red, mark=x, mark size=3, mark options={solid}]
table [row sep=\\]{%
0.0001	288.8 \\
0.0002	286.3 \\
0.0005	287.6 \\
0.001	287.1 \\
0.002	287.1 \\
};
\addplot [semithick, blue, mark=x, mark size=3, mark options={solid}]
table [row sep=\\]{%
0.0001	298 \\
0.0002	296.1 \\
0.0005	295.5 \\
0.001	297.5 \\
0.002	299.9 \\
};
\addplot [semithick, green!50.0!black, mark=x, mark size=3, mark options={solid}]
table [row sep=\\]{%
0.0001	9.19999999999999 \\
0.0002	9.80000000000001 \\
0.0005	7.89999999999998 \\
0.001	10.4 \\
0.002	12.8 \\
};
\end{axis}

\end{tikzpicture}
    \caption{NLL FMNIST}
    \label{fig:mean-field-approx-fmnist-nll}
\end{subfigure}
\begin{subfigure}[b]{0.32\linewidth}
    \centering
\begin{tikzpicture}[scale=0.5]

\begin{axis}[
axis background/.style={fill=white!89.80392156862746!black},
axis line style={white},
legend cell align={left},
legend entries={{Monte Carlo Sampling},{Analytical Inference},{Difference}},
legend style={at={(0.91,0.5)}, anchor=east, draw=white!80.0!black, fill=white!89.80392156862746!black},
tick align=outside,
tick pos=left,
x grid style={white},
xlabel={$\lambda$ level of model uncertainty},
xmajorgrids,
xmin=8.60891659331734e-05, xmax=0.00232317269928309,
xmode=log,
y grid style={white},
ylabel={NLL},
ymajorgrids,
ymin=18.78, ymax=226.02
]
\addlegendimage{no markers, red}
\addlegendimage{no markers, blue}
\addlegendimage{no markers, green!50.0!black}
\path [draw=red, fill=red, opacity=0.2] (axis cs:0.0001,188.2)
--(axis cs:0.0001,184.4)
--(axis cs:0.0002,183.5)
--(axis cs:0.0005,177.1)
--(axis cs:0.001,177.8)
--(axis cs:0.002,170.9)
--(axis cs:0.002,178.1)
--(axis cs:0.002,178.1)
--(axis cs:0.001,182.6)
--(axis cs:0.0005,183.1)
--(axis cs:0.0002,187.5)
--(axis cs:0.0001,188.2)
--cycle;

\path [draw=blue, fill=blue, opacity=0.2] (axis cs:0.0001,216.3)
--(axis cs:0.0001,213.3)
--(axis cs:0.0002,210.8)
--(axis cs:0.0005,206.7)
--(axis cs:0.001,210.3)
--(axis cs:0.002,208.1)
--(axis cs:0.002,215.7)
--(axis cs:0.002,215.7)
--(axis cs:0.001,214.9)
--(axis cs:0.0005,212.3)
--(axis cs:0.0002,216.6)
--(axis cs:0.0001,216.3)
--cycle;

\addplot [semithick, red, mark=x, mark size=3, mark options={solid}]
table [row sep=\\]{%
0.0001	186.3 \\
0.0002	185.5 \\
0.0005	180.1 \\
0.001	180.2 \\
0.002	174.5 \\
};
\addplot [semithick, blue, mark=x, mark size=3, mark options={solid}]
table [row sep=\\]{%
0.0001	214.8 \\
0.0002	213.7 \\
0.0005	209.5 \\
0.001	212.6 \\
0.002	211.9 \\
};
\addplot [semithick, green!50.0!black, mark=x, mark size=3, mark options={solid}]
table [row sep=\\]{%
0.0001	28.5 \\
0.0002	28.2 \\
0.0005	29.4 \\
0.001	32.4 \\
0.002	37.4 \\
};
\end{axis}

\end{tikzpicture}
    \caption{NLL KMNIST}
    \label{fig:mean-field-approx-kmnist-nll}
\end{subfigure}
\begin{subfigure}[b]{0.32\linewidth}
\centering
\begin{tikzpicture}[scale=0.5]

\begin{axis}[
axis background/.style={fill=white!89.80392156862746!black},
axis line style={white},
legend cell align={left},
legend entries={{Monte Carlo Sampling},{Analytical Inference},{Difference}},
legend style={at={(0.91,0.5)}, anchor=east, draw=white!80.0!black, fill=white!89.80392156862746!black},
tick align=outside,
tick pos=left,
x grid style={white},
xlabel={$\lambda$ level of model uncertainty},
xmajorgrids,
xmin=6.53208007180445e-07, xmax=0.00765452956031933,
xmode=log,
y grid style={white},
ylabel={Percentage Error},
ymajorgrids,
ymin=-0.0439999999999944, ymax=1.144
]
\addlegendimage{no markers, red}
\addlegendimage{no markers, blue}
\addlegendimage{no markers, green!50.0!black}
\path [draw=red, fill=red, opacity=0.2] (axis cs:1e-06,0.969999999999998)
--(axis cs:1e-06,0.909999999999998)
--(axis cs:5e-06,0.830000000000001)
--(axis cs:2e-05,0.830000000000001)
--(axis cs:5e-05,0.809999999999995)
--(axis cs:0.0001,0.839999999999997)
--(axis cs:0.0002,0.840000000000006)
--(axis cs:0.0005,0.830000000000005)
--(axis cs:0.001,0.840000000000001)
--(axis cs:0.002,0.860000000000002)
--(axis cs:0.005,0.949999999999996)
--(axis cs:0.005,1.09)
--(axis cs:0.005,1.09)
--(axis cs:0.002,0.980000000000002)
--(axis cs:0.001,0.940000000000001)
--(axis cs:0.0005,0.910000000000005)
--(axis cs:0.0002,0.960000000000006)
--(axis cs:0.0001,0.979999999999997)
--(axis cs:5e-05,0.949999999999996)
--(axis cs:2e-05,0.950000000000001)
--(axis cs:5e-06,0.950000000000001)
--(axis cs:1e-06,0.969999999999998)
--cycle;

\path [draw=blue, fill=blue, opacity=0.2] (axis cs:1e-06,0.910000000000005)
--(axis cs:1e-06,0.830000000000005)
--(axis cs:5e-06,0.819999999999999)
--(axis cs:2e-05,0.819999999999995)
--(axis cs:5e-05,0.799999999999994)
--(axis cs:0.0001,0.789999999999994)
--(axis cs:0.0002,0.799999999999994)
--(axis cs:0.0005,0.780000000000003)
--(axis cs:0.001,0.780000000000003)
--(axis cs:0.002,0.820000000000005)
--(axis cs:0.005,0.920000000000003)
--(axis cs:0.005,0.980000000000003)
--(axis cs:0.005,0.980000000000003)
--(axis cs:0.002,0.920000000000005)
--(axis cs:0.001,0.900000000000003)
--(axis cs:0.0005,0.900000000000003)
--(axis cs:0.0002,0.899999999999994)
--(axis cs:0.0001,0.909999999999994)
--(axis cs:5e-05,0.899999999999994)
--(axis cs:2e-05,0.939999999999996)
--(axis cs:5e-06,0.899999999999999)
--(axis cs:1e-06,0.910000000000005)
--cycle;

\addplot [semithick, red, mark=x, mark size=3, mark options={solid}]
table [row sep=\\]{%
1e-06	0.939999999999998 \\
5e-06	0.890000000000001 \\
2e-05	0.890000000000001 \\
5e-05	0.879999999999995 \\
0.0001	0.909999999999997 \\
0.0002	0.900000000000006 \\
0.0005	0.870000000000005 \\
0.001	0.890000000000001 \\
0.002	0.920000000000002 \\
0.005	1.02 \\
};
\addplot [semithick, blue, mark=x, mark size=3, mark options={solid}]
table [row sep=\\]{%
1e-06	0.870000000000005 \\
5e-06	0.859999999999999 \\
2e-05	0.879999999999995 \\
5e-05	0.849999999999994 \\
0.0001	0.849999999999994 \\
0.0002	0.849999999999994 \\
0.0005	0.840000000000003 \\
0.001	0.840000000000003 \\
0.002	0.870000000000005 \\
0.005	0.950000000000003 \\
};
\addplot [semithick, green!50.0!black, mark=x, mark size=3, mark options={solid}]
table [row sep=\\]{%
1e-06	0.0699999999999932 \\
5e-06	0.0300000000000011 \\
2e-05	0.0100000000000051 \\
5e-05	0.0300000000000011 \\
0.0001	0.0600000000000023 \\
0.0002	0.0500000000000114 \\
0.0005	0.0300000000000011 \\
0.001	0.0499999999999972 \\
0.002	0.0499999999999972 \\
0.005	0.0699999999999932 \\
};
\end{axis}

\end{tikzpicture}
    \caption{Error MNIST}
    \label{fig:mean-field-approx-mnist-err}
\end{subfigure}
\hfill
\begin{subfigure}[b]{0.32\linewidth}
    \centering
\begin{tikzpicture}[scale=0.5]

\begin{axis}[
axis background/.style={fill=white!89.80392156862746!black},
axis line style={white},
legend cell align={left},
legend entries={{Monte Carlo Sampling},{Analytical Inference},{Difference}},
legend style={at={(0.91,0.5)}, anchor=east, draw=white!80.0!black, fill=white!89.80392156862746!black},
tick align=outside,
tick pos=left,
x grid style={white},
xlabel={$\lambda$ level of model uncertainty},
xmajorgrids,
xmin=8.60891659331734e-05, xmax=0.00232317269928309,
xmode=log,
y grid style={white},
ylabel={Percentage Error},
ymajorgrids,
ymin=-0.0404999999999959, ymax=11.4105
]
\addlegendimage{no markers, red}
\addlegendimage{no markers, blue}
\addlegendimage{no markers, green!50.0!black}
\path [draw=red, fill=red, opacity=0.2] (axis cs:0.0001,10.86)
--(axis cs:0.0001,10.56)
--(axis cs:0.0002,10.42)
--(axis cs:0.0005,10.39)
--(axis cs:0.001,10.36)
--(axis cs:0.002,10.4)
--(axis cs:0.002,10.84)
--(axis cs:0.002,10.84)
--(axis cs:0.001,10.88)
--(axis cs:0.0005,10.89)
--(axis cs:0.0002,10.7)
--(axis cs:0.0001,10.86)
--cycle;

\path [draw=blue, fill=blue, opacity=0.2] (axis cs:0.0001,10.22)
--(axis cs:0.0001,9.88)
--(axis cs:0.0002,9.97)
--(axis cs:0.0005,9.87000000000001)
--(axis cs:0.001,9.77)
--(axis cs:0.002,9.8)
--(axis cs:0.002,10.1)
--(axis cs:0.002,10.1)
--(axis cs:0.001,10.25)
--(axis cs:0.0005,10.21)
--(axis cs:0.0002,10.19)
--(axis cs:0.0001,10.22)
--cycle;

\addplot [semithick, red, mark=x, mark size=3, mark options={solid}]
table [row sep=\\]{%
0.0001	10.71 \\
0.0002	10.56 \\
0.0005	10.64 \\
0.001	10.62 \\
0.002	10.62 \\
};
\addplot [semithick, blue, mark=x, mark size=3, mark options={solid}]
table [row sep=\\]{%
0.0001	10.05 \\
0.0002	10.08 \\
0.0005	10.04 \\
0.001	10.01 \\
0.002	9.95 \\
};
\addplot [semithick, green!50.0!black, mark=x, mark size=3, mark options={solid}]
table [row sep=\\]{%
0.0001	0.659999999999997 \\
0.0002	0.480000000000004 \\
0.0005	0.599999999999994 \\
0.001	0.609999999999999 \\
0.002	0.670000000000002 \\
};
\end{axis}

\end{tikzpicture}
    \caption{Error FMNIST}
    \label{fig:mean-field-approx-fmnist-err}
\end{subfigure}
\begin{subfigure}[b]{0.32\linewidth}
    \centering
\begin{tikzpicture}[scale=0.5]

\begin{axis}[
axis background/.style={fill=white!89.80392156862746!black},
axis line style={white},
legend cell align={left},
legend entries={{Monte Carlo Sampling},{Analytical Inference},{Difference}},
legend style={at={(0.91,0.5)}, anchor=east, draw=white!80.0!black, fill=white!89.80392156862746!black},
tick align=outside,
tick pos=left,
x grid style={white},
xlabel={$\lambda$ level of model uncertainty},
xmajorgrids,
xmin=8.60891659331734e-05, xmax=0.00232317269928309,
xmode=log,
y grid style={white},
ylabel={Percentage Error},
ymajorgrids,
ymin=-0.0945000000000133, ymax=5.7245
]
\addlegendimage{no markers, red}
\addlegendimage{no markers, blue}
\addlegendimage{no markers, green!50.0!black}
\path [draw=red, fill=red, opacity=0.2] (axis cs:0.0001,5.41)
--(axis cs:0.0001,5.27)
--(axis cs:0.0002,5.22)
--(axis cs:0.0005,5)
--(axis cs:0.001,4.84)
--(axis cs:0.002,4.68999999999999)
--(axis cs:0.002,4.94999999999999)
--(axis cs:0.002,4.94999999999999)
--(axis cs:0.001,5.32)
--(axis cs:0.0005,5.28)
--(axis cs:0.0002,5.46)
--(axis cs:0.0001,5.41)
--cycle;

\path [draw=blue, fill=blue, opacity=0.2] (axis cs:0.0001,5.3)
--(axis cs:0.0001,5.04)
--(axis cs:0.0002,4.99)
--(axis cs:0.0005,4.88999999999999)
--(axis cs:0.001,4.7)
--(axis cs:0.002,4.50000000000001)
--(axis cs:0.002,4.80000000000001)
--(axis cs:0.002,4.80000000000001)
--(axis cs:0.001,4.98)
--(axis cs:0.0005,5.02999999999999)
--(axis cs:0.0002,5.29)
--(axis cs:0.0001,5.3)
--cycle;

\addplot [semithick, red, mark=x, mark size=3, mark options={solid}]
table [row sep=\\]{%
0.0001	5.34 \\
0.0002	5.34 \\
0.0005	5.14 \\
0.001	5.08 \\
0.002	4.81999999999999 \\
};
\addplot [semithick, blue, mark=x, mark size=3, mark options={solid}]
table [row sep=\\]{%
0.0001	5.17 \\
0.0002	5.14 \\
0.0005	4.95999999999999 \\
0.001	4.84 \\
0.002	4.65000000000001 \\
};
\addplot [semithick, green!50.0!black, mark=x, mark size=3, mark options={solid}]
table [row sep=\\]{%
0.0001	0.170000000000002 \\
0.0002	0.200000000000003 \\
0.0005	0.180000000000007 \\
0.001	0.239999999999995 \\
0.002	0.169999999999987 \\
};
\end{axis}

\end{tikzpicture}
    \caption{Error KMNIST}
    \label{fig:mean-field-approx-kmnist-err}
\end{subfigure}
\caption{Illustration of mean-field approximation and tightness of alternative ELBO on a CNN. 
The performance gap between our analytical inference and the Monte Carlo Sampling is displayed. }
\label{fig:mean-field-approx-cnn}
\end{figure}
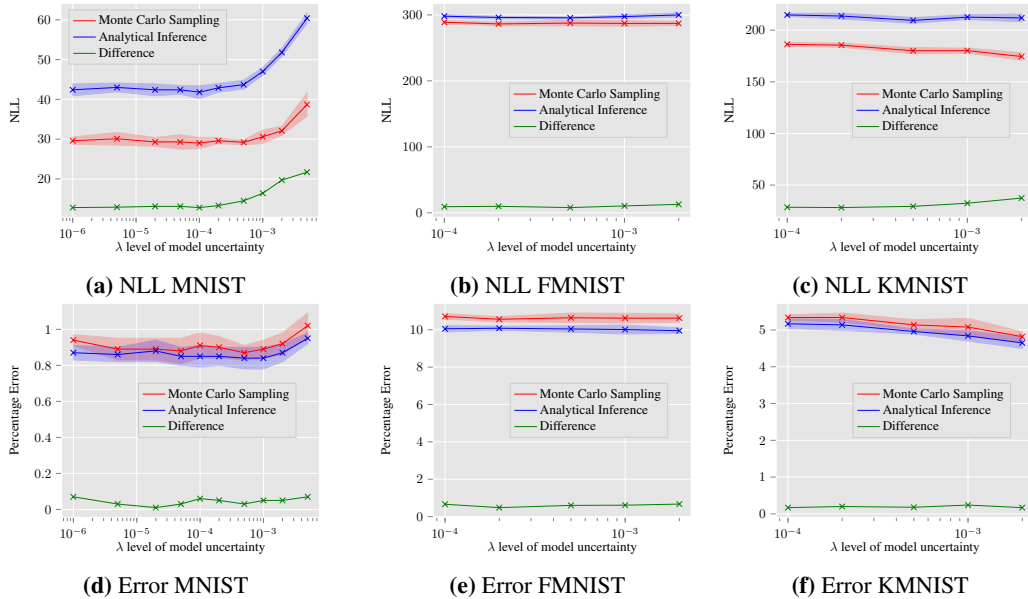

\textbf{Expressive Power and Uncertainty Calibration in {\BQN}s. }  
We report the performance via all evaluations of our {\BQN} models against the {Ensemble-\QNN} in Table~\ref{tab:expressive-power} and Figure~\ref{fig:expressive-power-cnn}.
\textbf{(1)} Compared to {E-\QNN}s, our {\BQN}s have significantly lower NLL and smaller predictive error 
(except for Fashion-MNIST with architecture CNN).
\textbf{(2)} As we can observe in Figure~\ref{fig:expressive-power-cnn}, {\BQN}s impressively achieve comparable NLL to continuous-valued {\BNN}, with slightly higher test error. 
As our model parameters only take values $\{-1, 1\}$, small degradation in predictive accuracy is expected.

\textbf{Evaluations of {Mean-field Approximation} and {Tightness of the Alternative ELBO}.}
If analytic inference (by probabilistic propagation) were computed exactly, 
the evaluation metrics would have been equal to the ones with MC sampling (with infinite samples).
Therefore we can evaluate the approximations in probabilistic propagation, 
namely {\em mean-field approximation} in Equation~\eqref{eq:bayes-prediction-iterative-mean-field} and {\em relaxation of the original ELBO} in Equation~\eqref{eq:bayes-learning-ELBO-2},
by measuring the gap between analytic inference and MC sampling. 
As shown in Figure~\ref{fig:mean-field-approx-cnn}, such gaps are small for all scenarios, 
which justifies the approximations we use in {\BQN}s.

To further decouple these two factors of {\em mean-field approximation} and {\em relaxation of the original ELBO}, 
we vary the regularization coefficient $\lambda$ in the learning objective. 
\textbf{(1)} For $\lambda = 0$ (where the prior term is removed), the models are forced to become deterministic during training. 
Since the deterministic models do not have mean-field approximation in the forward pass, the gap between analytic inference and MC-sampling reflects the tightness of our alternative ELBO.
\textbf{(2)} As $\lambda$ increases, the gaps increases slightly as well, which shows that the mean-field approximation becomes slightly less accurate with higher learned uncertainty in the model. 


\begin{table}[!htbp]
\centering
{\scriptsize
\begin{tabular}{l|c c|c c| c c}
\multirow{2}{*}{Methods}
& \multicolumn{2}{c}{MNIST} 
& \multicolumn{2}{c}{KMNIST}
& \multicolumn{2}{c}{Fashion-MNIST}
\\
& NLL($10^{-3}$) & \% Err.
& NLL($10^{-3}$) & \% Err.
& NLL($10^{-3}$) & \% Err.
\\
\hline
{\QNN on MLP} & 
{522.4$\pm$42.2} &
{4.14$\pm$0.25} & 						
{2019.1$\pm$281.2} & 
{19.56$\pm$1.97} &
{2427.1$\pm$193.5} & 
{15.67$\pm$1.19}
\\
MAP of \BQN on MLP &
\textbf{137.60$\pm$4.40} &	
\textbf{3.69$\pm$0.09} & 	
\textbf{464.60$\pm$12.80} & 	
\textbf{14.79$\pm$0.21} &	
\textbf{461.30$\pm$13.40} & 	
\textbf{12.89$\pm$0.17}	
\\
\hline
{\QNN on CNN} & 		 			
{497.4$\pm$139.5} & 	
{1.08$\pm$0.2} &  	
{4734.5$\pm$1697.2} & 	
{14.2$\pm$2.29} &	
{1878.3$\pm$223.8} & 	
{\textbf{3.88$\pm$0.33}}	 
\\
MAP of \BQN on CNN &		
\textbf{30.3$\pm$1.6} &
\textbf{0.92$\pm$0.07} & 
\textbf{293.6$\pm$4.4}& 	
\textbf{10.82$\pm$0.37} &	
\textbf{179.1$\pm$4.4} & 	
{5.00$\pm$0.11}
\end{tabular}
\caption{
Deterministic model compression through direct training of \QNN~\citep{courbariaux2016binarized} v.s.\ MAP estimation in our proposed \BQN. All the numbers are averages over $10$ runs with different seeds, the standard deviation are exhibited following the $\pm$ sign. 
}
\label{tab:compression}
}
\end{table}


\begin{table}[!htbp]
\centering
{\scriptsize
\begin{tabular}{l|c c|c c| c c}
\multirow{2}{*}{Methods}
& \multicolumn{2}{c}{MNIST} 
& \multicolumn{2}{c}{KMNIST}
& \multicolumn{2}{c}{Fashion-MNIST}
\\
& NLL($10^{-3}$) & \% Err.
& NLL($10^{-3}$) & \% Err.
& NLL($10^{-3}$) & \% Err.
\\
\hline
{E-\QNN on MLP} & 
{546.60$ \pm $157.90} &
{3.30 $ \pm $0.65} & 
{2385.60$ \pm $432.30} & 
{17.88$ \pm $1.86} &
{2529.40$ \pm $276.70} & 
{13.02$ \pm $0.81}
\\
MC of \BQN on MLP &
\textbf{108.9$\pm$2.6} &	
\textbf{2.73$\pm$0.09} & 							
\textbf{429.50$\pm$11.60} & 	
\textbf{13.83$\pm$0.12} &	
\textbf{385.30$\pm$5.10} & 	
\textbf{10.81$\pm$0.44}	
\\
\hline
{E-\QNN on CNN} & 		 			
{425.3$\pm$61.80} &
{\textbf{0.85$\pm$0.13}} & 
{3755.70$\pm$465.10} & 
{11.49$\pm$1.16} &
{1610.70$\pm$158.40} & 
{\textbf{3.02$\pm$0.37}}	 
\\
MC of \BQN on CNN &		
\textbf{29.2$\pm$0.6} &	
{0.87$\pm$0.04} & 		 					
\textbf{286.3$\pm$2.7} & 	
\textbf{10.56$\pm$0.14} &	
\textbf{174.5$\pm$3.6} & 		 	
{4.82$\pm$0.13}
\end{tabular}
\caption{
Bayesian Model compression through direct training of Ensemble-\QNN vs a Monte-Carlo sampling on our proposed \BQN. Each ensemble consists of $5$ quantized neural networks, and for fair comparison we use $5$ samples for Monte-Carlo evaluation. All the numbers are averages over $10$ runs with different seeds, the standard deviation are exhibited following the $\pm$ sign.}
\label{tab:compression-bayesian}
}
\end{table}

\textbf{Compression of Neural Networks via {\BQN}s. }
One advantage of {\BQN}s over continuous-valued {\BNN}s is that 
deterministic {\QNN}s can be obtained for free, 
since a {\BQN} can be interpreted as an ensemble of infinite {\QNN}s 
(each of which is a realization of posterior distribution).
\textbf{(1)} One simple approach is to set the model parameters to their {\em MAP estimates}, 
which compresses a given {\BQN} to $1/64$ of its original size 
(and has the same number of bits as a single {\QNN}). 
\textbf{(2)} {\em MC sampling} can be interpreted as another approach to compress a {\BQN}, 
which reduces the original size to its $S/64$ 
(with the same number of bits as an ensemble of $S$ {\QNN}s).     
In Tables~\ref{tab:compression} and~\ref{tab:compression-bayesian}, 
we compare the models by both approaches to their counterparts 
(a single {\QNN} for MAP, and {E-\QNN} for MC sampling) 
trained from scratch as in~\citet{courbariaux2016binarized}. 
For both approaches, our compressed models outperform their counterparts (in NLL) . 
We attribute this to two factors: 
\textbf{(a)} {\QNN}s are not trained in a Bayesian way, therefore the uncertainty is not well calibrated;  
and \textbf{(b)} Non-differentiable {\QNN}s are unstable to train. 
Our compression approaches via {\BQN}s simultaneously solve both problems.

\section{Conclusion}
\label{sec:conclusion}
We present a {sampling-free}, {backpropagation-compatible}, {variational-inference-based} approach 
for learning Bayesian {\QNNlong}s ({\BQN}s).
We develop a suite of algorithms for efficient inference in {\BQN}s such that our approach scales to large problems.
We evaluate our {\BQN}s by Monte-Carlo sampling, 
which proves that our approach is able to learn a proper posterior distribution on {\QNN}s.
Furthermore, we show that our approach can also be used to learn (ensemble) {\QNN}s by taking maximum a posterior (or sampling from) the posterior distribution.

\bibliography{\bibhome/supp_bib}

\begin{thebibliography}{39}
\providecommand{\natexlab}[1]{#1}
\providecommand{\url}[1]{\texttt{#1}}
\expandafter\ifx\csname urlstyle\endcsname\relax
  \providecommand{\doi}[1]{doi: #1}\else
  \providecommand{\doi}{doi: \begingroup \urlstyle{rm}\Url}\fi

\bibitem[Bengio et~al.(2013)Bengio, L{\'e}onard, and
  Courville]{bengio2013estimating}
Yoshua Bengio, Nicholas L{\'e}onard, and Aaron Courville.
\newblock Estimating or propagating gradients through stochastic neurons for
  conditional computation.
\newblock \emph{arXiv preprint arXiv:1308.3432}, 2013.

\bibitem[Blundell et~al.(2015)Blundell, Cornebise, Kavukcuoglu, and
  Wierstra]{blundell2015weight}
Charles Blundell, Julien Cornebise, Koray Kavukcuoglu, and Daan Wierstra.
\newblock Weight uncertainty in neural networks.
\newblock \emph{arXiv preprint arXiv:1505.05424}, 2015.

\bibitem[Cichocki et~al.(2016)Cichocki, Lee, Oseledets, Phan, Zhao, and
  Mandic]{cichocki2016low}
Andrzej Cichocki, Namgil Lee, Ivan~V Oseledets, Anh~Huy Phan, Qibin Zhao, and
  D~Mandic.
\newblock Low-rank tensor networks for dimensionality reduction and large-scale
  optimization problems: Perspectives and challenges part 1.
\newblock \emph{arXiv preprint arXiv:1609.00893}, 2016.

\bibitem[Cichocki et~al.(2017)Cichocki, Phan, Zhao, Lee, Oseledets, Sugiyama,
  Mandic, et~al.]{cichocki2017tensor}
Andrzej Cichocki, Anh-Huy Phan, Qibin Zhao, Namgil Lee, Ivan Oseledets, Masashi
  Sugiyama, Danilo~P Mandic, et~al.
\newblock Tensor networks for dimensionality reduction and large-scale
  optimization: Part 2 applications and future perspectives.
\newblock \emph{Foundations and Trends{\textregistered} in Machine Learning},
  9\penalty0 (6):\penalty0 431--673, 2017.

\bibitem[Courbariaux et~al.(2015)Courbariaux, Bengio, and
  David]{courbariaux2015binaryconnect}
Matthieu Courbariaux, Yoshua Bengio, and Jean-Pierre David.
\newblock Binaryconnect: Training deep neural networks with binary weights
  during propagations.
\newblock In \emph{Advances in neural information processing systems}, pp.\
  3123--3131, 2015.

\bibitem[Courbariaux et~al.(2016)Courbariaux, Hubara, Soudry, El-Yaniv, and
  Bengio]{courbariaux2016binarized}
Matthieu Courbariaux, Itay Hubara, Daniel Soudry, Ran El-Yaniv, and Yoshua
  Bengio.
\newblock Binarized neural networks: Training deep neural networks with weights
  and activations constrained to+ 1 or-1.
\newblock \emph{arXiv preprint arXiv:1602.02830}, 2016.

\bibitem[Esser et~al.(2015)Esser, Appuswamy, Merolla, Arthur, and
  Modha]{esser2015backpropagation}
Steve~K Esser, Rathinakumar Appuswamy, Paul Merolla, John~V Arthur, and
  Dharmendra~S Modha.
\newblock Backpropagation for energy-efficient neuromorphic computing.
\newblock In \emph{Advances in Neural Information Processing Systems}, pp.\
  1117--1125, 2015.

\bibitem[Gal(2016)]{Gal2016UncertaintyID}
Yarin Gal.
\newblock \emph{Uncertainty in Deep Learning}.
\newblock PhD thesis, University of Cambridge, 2016.

\bibitem[Gast \& Roth(2018)Gast and Roth]{gast2018lightweight}
Jochen Gast and Stefan Roth.
\newblock Lightweight probabilistic deep networks.
\newblock In \emph{Proceedings of the IEEE Conference on Computer Vision and
  Patter Recognition}, pp.\  3369--3378, 2018.

\bibitem[Ghosh et~al.(2016)Ghosh, Delle~Fave, and Yedidia]{ghosh2016assumed}
Soumya Ghosh, Francesco~Maria Delle~Fave, and Jonathan~S Yedidia.
\newblock Assumed density filtering methods for learning bayesian neural
  networks.
\newblock In \emph{AAAI}, pp.\  1589--1595, 2016.

\bibitem[Grasedyck et~al.(2013)Grasedyck, Kressner, and
  Tobler]{grasedyck2013literature}
Lars Grasedyck, Daniel Kressner, and Christine Tobler.
\newblock A literature survey of low-rank tensor approximation techniques.
\newblock \emph{GAMM-Mitteilungen}, 36\penalty0 (1):\penalty0 53--78, 2013.

\bibitem[Graves(2011)]{graves2011practical}
Alex Graves.
\newblock Practical variational inference for neural networks.
\newblock In \emph{Advances in neural information processing systems}, pp.\
  2348--2356, 2011.

\bibitem[Han et~al.(2015)Han, Mao, and Dally]{han2015deep}
Song Han, Huizi Mao, and William~J Dally.
\newblock Deep compression: Compressing deep neural networks with pruning,
  trained quantization and huffman coding.
\newblock \emph{arXiv preprint arXiv:1510.00149}, 2015.

\bibitem[Hern{\'a}ndez-Lobato \& Adams(2015)Hern{\'a}ndez-Lobato and
  Adams]{hernandez2015probabilistic}
Jos{\'e}~Miguel Hern{\'a}ndez-Lobato and Ryan Adams.
\newblock Probabilistic backpropagation for scalable learning of bayesian
  neural networks.
\newblock In \emph{International Conference on Machine Learning}, pp.\
  1861--1869, 2015.

\bibitem[Hubara et~al.(2017)Hubara, Courbariaux, Soudry, El-Yaniv, and
  Bengio]{hubara2017quantized}
Itay Hubara, Matthieu Courbariaux, Daniel Soudry, Ran El-Yaniv, and Yoshua
  Bengio.
\newblock Quantized neural networks: Training neural networks with low
  precision weights and activations.
\newblock \emph{Journal of Machine Learning Research}, 18:\penalty0 187--1,
  2017.

\bibitem[Jang et~al.(2016)Jang, Gu, and Poole]{jang2016categorical}
Eric Jang, Shixiang Gu, and Ben Poole.
\newblock Categorical reparameterization with gumbel-softmax.
\newblock \emph{arXiv preprint arXiv:1611.01144}, 2016.

\bibitem[Khan(2012)]{khan2012variational}
Mohammad Khan.
\newblock \emph{Variational learning for latent Gaussian model of discrete
  data}.
\newblock PhD thesis, University of British Columbia, 2012.

\bibitem[Kim \& Smaragdis(2016)Kim and Smaragdis]{kim2016bitwise}
Minje Kim and Paris Smaragdis.
\newblock Bitwise neural networks.
\newblock \emph{arXiv preprint arXiv:1601.06071}, 2016.

\bibitem[Kingma \& Ba(2014)Kingma and Ba]{kingma2014adam}
Diederik~P Kingma and Jimmy Ba.
\newblock Adam: A method for stochastic optimization.
\newblock \emph{arXiv preprint arXiv:1412.6980}, 2014.

\bibitem[Kolda \& Bader(2009)Kolda and Bader]{kolda2009tensor}
Tamara~G Kolda and Brett~W Bader.
\newblock Tensor decompositions and applications.
\newblock \emph{SIAM review}, 51\penalty0 (3):\penalty0 455--500, 2009.

\bibitem[Maddison et~al.(2016)Maddison, Mnih, and Teh]{maddison2016concrete}
Chris~J Maddison, Andriy Mnih, and Yee~Whye Teh.
\newblock The concrete distribution: A continuous relaxation of discrete random
  variables.
\newblock \emph{arXiv preprint arXiv:1611.00712}, 2016.

\bibitem[Minka(2001)]{minka2001family}
Thomas~Peter Minka.
\newblock \emph{A family of algorithms for approximate Bayesian inference}.
\newblock PhD thesis, Massachusetts Institute of Technology, 2001.

\bibitem[Newman et~al.(2018)Newman, Horesh, Avron, and
  Kilmer]{newman2018stable}
Elizabeth Newman, Lior Horesh, Haim Avron, and Misha Kilmer.
\newblock Stable tensor neural networks for rapid deep learning, 2018.

\bibitem[Or{\'u}s(2014)]{orus2014practical}
Rom{\'a}n Or{\'u}s.
\newblock A practical introduction to tensor networks: Matrix product states
  and projected entangled pair states.
\newblock \emph{Annals of Physics}, 349:\penalty0 117--158, 2014.

\bibitem[Peters \& Welling(2018)Peters and Welling]{peters2018probabilistic}
Jorn~WT Peters and Max Welling.
\newblock Probabilistic binary neural networks.
\newblock \emph{arXiv preprint arXiv:1809.03368}, 2018.

\bibitem[Rastegari et~al.(2016)Rastegari, Ordonez, Redmon, and
  Farhadi]{rastegari2016xnor}
Mohammad Rastegari, Vicente Ordonez, Joseph Redmon, and Ali Farhadi.
\newblock Xnor-net: Imagenet classification using binary convolutional neural
  networks.
\newblock In \emph{European Conference on Computer Vision}, pp.\  525--542.
  Springer, 2016.

\bibitem[Robeva \& Seigal(2017)Robeva and Seigal]{robeva2017duality}
Elina Robeva and Anna Seigal.
\newblock Duality of graphical models and tensor networks.
\newblock \emph{Information and Inference: A Journal of the IMA}, 2017.

\bibitem[Shayer et~al.(2017)Shayer, Levi, and Fetaya]{shayer2017learning}
Oran Shayer, Dan Levi, and Ethan Fetaya.
\newblock Learning discrete weights using the local reparameterization trick.
\newblock \emph{arXiv preprint arXiv:1710.07739}, 2017.

\bibitem[Shekhovtsov \& Flach(2018)Shekhovtsov and Flach]{shekhovtsov2018feed}
Alexander Shekhovtsov and Boris Flach.
\newblock Feed-forward propagation in probabilistic neural networks with
  categorical and max layers.
\newblock 2018.

\bibitem[Shridhar et~al.(2019)Shridhar, Laumann, and
  Liwicki]{shridhar2019comprehensive}
Kumar Shridhar, Felix Laumann, and Marcus Liwicki.
\newblock A comprehensive guide to bayesian convolutional neural network with
  variational inference.
\newblock \emph{arXiv preprint arXiv:1901.02731}, 2019.

\bibitem[Soudry et~al.(2014)Soudry, Hubara, and Meir]{soudry2014expectation}
Daniel Soudry, Itay Hubara, and Ron Meir.
\newblock Expectation backpropagation: Parameter-free training of multilayer
  neural networks with continuous or discrete weights.
\newblock In \emph{Advances in Neural Information Processing Systems}, pp.\
  963--971, 2014.

\bibitem[Su et~al.(2018)Su, Li, Bhattacharjee, and Huang]{su2018tensorized}
Jiahao Su, Jingling Li, Bobby Bhattacharjee, and Furong Huang.
\newblock Tensorized spectrum preserving compression for neural networks.
\newblock \emph{arXiv preprint arXiv:1805.10352}, 2018.

\bibitem[Wainwright et~al.(2008)Wainwright, Jordan,
  et~al.]{wainwright2008graphical}
Martin~J Wainwright, Michael~I Jordan, et~al.
\newblock Graphical models, exponential families, and variational inference.
\newblock \emph{Foundations and Trends{\textregistered} in Machine Learning},
  1\penalty0 (1--2):\penalty0 1--305, 2008.

\bibitem[Wang \& Yeung(2016)Wang and Yeung]{wang2016towards}
Hao Wang and Dit-Yan Yeung.
\newblock Towards bayesian deep learning: A survey.
\newblock \emph{arXiv preprint arXiv:1604.01662}, 2016.

\bibitem[Wang et~al.(2016)Wang, Xingjian, and Yeung]{wang2016natural}
Hao Wang, SHI Xingjian, and Dit-Yan Yeung.
\newblock Natural-parameter networks: A class of probabilistic neural networks.
\newblock In \emph{Advances in Neural Information Processing Systems}, pp.\
  118--126, 2016.

\bibitem[Williamson \& Shmoys(2011)Williamson and Shmoys]{williamson2011design}
David~P Williamson and David~B Shmoys.
\newblock \emph{The design of approximation algorithms}.
\newblock Cambridge university press, 2011.

\bibitem[Wu et~al.(2018)Wu, Nowozin, Meeds, Turner, Hern{\'a}ndez-Lobato, and
  Gaunt]{wu2018fixing}
Anqi Wu, Sebastian Nowozin, Edward Meeds, Richard~E Turner, Jos{\'e}~Miguel
  Hern{\'a}ndez-Lobato, and Alexander~L Gaunt.
\newblock Fixing variational bayes: Deterministic variational inference for
  bayesian neural networks.
\newblock \emph{arXiv preprint arXiv:1810.03958}, 2018.

\bibitem[Zhou et~al.(2016)Zhou, Wu, Ni, Zhou, Wen, and Zou]{zhou2016dorefa}
Shuchang Zhou, Yuxin Wu, Zekun Ni, Xinyu Zhou, He~Wen, and Yuheng Zou.
\newblock Dorefa-net: Training low bitwidth convolutional neural networks with
  low bitwidth gradients.
\newblock \emph{arXiv preprint arXiv:1606.06160}, 2016.

\bibitem[Zhu et~al.(2016)Zhu, Han, Mao, and Dally]{zhu2016trained}
Chenzhuo Zhu, Song Han, Huizi Mao, and William~J Dally.
\newblock Trained ternary quantization.
\newblock \emph{arXiv preprint arXiv:1612.01064}, 2016.

\end{thebibliography}
\bibliographystyle{iclr2020_conference}

\newpage
\appendix
\begin{center}
{\Large Appendix: \mytitle}
\end{center}


\section{Related Work}
\label{sec:related}

\paragraph{Probabilistic {\NNLONG}s and {\BNNLONG}s} 
These models consider weights to be random variables 
and aim to learn their distributions.
To further distinguish two families of such models, 
we call a model {\em \BNNLong} if the distributions are learned 
using a prior-posterior framework (i.e.\ via Bayesian inference)~\citep{soudry2014expectation,hernandez2015probabilistic,ghosh2016assumed, 
graves2011practical,blundell2015weight,shridhar2019comprehensive}, 
and otherwise {\em probabilistic {\NNlong}}~\citep{wang2016natural,shekhovtsov2018feed,gast2018lightweight}.
In particular, our work is closely related to {\em natural-parameters networks (NPN)}~\citep{wang2016natural}, 
which consider both weights and activations to be random variables from exponential family. 
Since categorical distribution (over quantized values) belongs to exponential family, 
our {\BQN} can be interpreted as categorical NPN, but we learn the distributions via Bayesian inference.  

For {\BNNLong}s, various types of approaches have been proposed to learn the posterior distribution over model parameters.

{\em (1) Sampling-free Assumed Density Filtering (ADF)}, 
including EBP~\citep{soudry2014expectation} and PBP~\citep{hernandez2015probabilistic}, 
is an online algorithm which (approximately) updates the posterior distribution 
by Bayes' rule for each observation.
If the model parameters $\rand{\theta}$ are
Gaussian distributed, \citet{minka2001family} shows that the Bayes' rule can be computed in analytic form based on ${\partial \log(g_{n}(\phi)})/{\partial \phi}$, and EBP~\cite{soudry2014expectation} derives a similar rule for Bernoulli parameters in binary classification. Notice that ADF is compatible to backpropagation: 
\begin{equation}
\derivative{\log(g_n(\phi))}{\phi} = \frac{1}{g_n(\phi)} \cdot \derivative{g_n(\phi)}{\phi}
\label{eq:bayes-learning-adf}
\end{equation}
assuming $g_n(\phi)$ can be (approximately) computed by sampling-free
probabilistic propagation as in Section~\ref{sec:bayes-model}.
However, this approach has two major limitations: 
(a) the Bayes' rule needed to be derived case by case, and analytic rule for most common cases are not known yet.
(b) it is not compatible to modern optimization methods (such as SGD or ADAM) 
as the optimization is solved analytically for each data point, 
therefore difficult to cope with large-scale models. 

{\em (2) Sampling-based Variational inference (SVI)}, 
formulates an optimization problem and solves it approximately via {\em stochastic gradient descent (SGD)}.
The most popular method among all is, {\em Stochastic Gradient Variational Bayes (SGVB)}, 
which approximates $\mathcal{L}_n(\phi)$ by the average of multiple samples~\citep{graves2011practical, blundell2015weight,shridhar2019comprehensive}. 
Before each step of learning or prediction, a number of independent samples of the model parameters 
$\{\theta_s\}_{s = 1}^{S}$ are drawn according to the current estimate of $Q$, i.e. $\theta_s \stackrel{}{\sim} Q$, by which the predictive function $g_n(\phi)$ and the loss $\mathcal{L}_n(\phi)$ can be approximated by
\begin{subequations}
\begin{gather}
g_n(\phi)
\approx \frac{1}{S} \sum_{s = 1}^{S} \Prob[y_n|x_n, \theta_s] = \frac{1}{S} \sum_{s = 1}^{S} f_n(\theta_s)
\label{eq:bayes-prediction-monte-carlo}
\\
\mathcal{L}_n(\phi)
\approx \frac{1}{S} \sum_{s = 1}^{S} \log \left(\Prob[y_n|x_n, \theta_s]\right) = \frac{1}{S} \sum_{s = 1}^{S} \log \left(f_n(\theta_s)\right)
\label{eq:bayes-learning-EBLO-monte-carlo}
\end{gather}
\end{subequations}
where $f_n(\theta) = \Prob[y_n|x_n, \theta]$ denotes the predictive function given a specific realization $\theta$ of the model parameters. The gradients of $\mathcal{L}_n(\phi)$ can now be approximated as
\begin{equation}
\derivative{\mathcal{L}_n(\phi)}{\phi} 
\approx \frac{1}{S} \sum_{s = 1}^{S}  \derivative{\mathcal{L}_n(\phi)}{f_{n}(\theta_s)} \cdot \derivative{f_{n}(\theta_s)}{\theta_s} \cdot \derivative{\theta_s}{\phi}
\end{equation}
This approach has multiple drawbacks: 
(a) Repeated sampling suffers from high variance, 
besides being computationally expensive in both learning and prediction phases; 
(b) While $g_n(\phi)$ is differentiable w.r.t.\ $\phi$, $f_n(\theta)$ may not be differentiable w.r.t. $\theta$. 
One such example is {\QNNlong}s, whose backpropagation is approximated by {\em straight through estimator}~\citep{bengio2013estimating}.
(3) The partial derivatives ${\partial \theta_s}/{\partial \phi}$ are difficult to compute with complicated {\em reparameterization tricks}~\citep{maddison2016concrete, jang2016categorical}.

{\em (3) Deterministic Variational inference (DVI)} Our approach is most similar to~\cite{wu2018fixing}, which observes that if the underlying model is deterministic, i.e. $\Prob[\rand{h}^{(l+1)}| \rand{h}^{(l)}, \rand{\theta}^{(l)}]$ is a dirac function
\begin{equation}
\mathcal{L}_n(\phi): = \ESub{\rand{h}^{(L - 1)} \sim P; ~\rand{\theta}^{(L-1)} \sim Q}{\log \left(\Prob[y_n|\rand{h}^{(L-1)}, \rand{\theta}^{(L-1)}] \right)}
\end{equation}
Our approach considers a wider scope of problem settings, 
where the model could be stochastic, i.e. $\Prob[\rand{h}^{(l+1)}| \rand{h}^{(l)}, \rand{\theta}^{(l)}]$ is an arbitrary function. Furthermore, \cite{wu2018fixing} considers the case that all parameters $\rand{\theta}$ are Gaussian distributed, whose sampling-free probabilistic propagation requires complicated approximation~\citep{shekhovtsov2018feed}. 

\paragraph{{\QNNLONG}s }
These models can be categorized into two classes:
(1) {\em Partially quantized networks}, 
where only weights are discretized~\citep{han2015deep, zhu2016trained};
(2) {\em Fully quantized networks}, 
where both weights and hidden units are quantized~\citep{courbariaux2015binaryconnect, kim2016bitwise, zhou2016dorefa, rastegari2016xnor, hubara2017quantized}.
While both classes provide compact size, low-precision neural network models, fully quantized networks further enjoy fast computation provided by specialized bit-wise operations.
In general, {\QNNlong}s are difficult to train due to their non-differentiability.  
Gradient descent by backpropagation is approximated by either {\em straight-through estimators}~\citep{bengio2013estimating} or
{\em probabilistic methods}~\citep{esser2015backpropagation, shayer2017learning, peters2018probabilistic}.
Unlike these papers, we focus on Bayesian learning of fully quantized networks in this paper.
Optimization of {quantized neural networks} typically requires dedicated loss function,
learning scheduling and initialization.  
For example, \citet{peters2018probabilistic} considers pre-training of a continuous-valued neural network 
as the initialization.
Since our approach considers learning from scratch (with an uniform initialization), 
the performance could be inferior to prior works in terms of absolute accuracy.

\paragraph{{\TNLONG}s and {\TNNLONG}s}
{\em {\TNLong}s} ({\TN}s) are widely used in numerical analysis~\citep{grasedyck2013literature}, quantum physiscs~\citep{orus2014practical}, and recently machine learning~\citep{cichocki2016low, cichocki2017tensor} to model interactions among multi-dimensional random objects.
Various {\TNNlong}s ({\TNN}s)~\citep{su2018tensorized, newman2018stable} have been proposed that reduce the size of {\NNlong}s by replacing the linear layers with {\TN}s.
Recently, \citep{robeva2017duality} points out the duality between {\PGMlong}s ({\PGM}s) and {\TN}s.  I.e. there exists a bijection between {\PGM}s and {\TN}s.
Our paper advances this line of thinking by connecting hierarchical Bayesian models (e.g. {\BNNLong}s) and hierarchical {\TN}s.

\section{Supervised Learning with {\BNNLong}s ({\BNN}s)}
\label{app:bayesian_approach}

The problem settings of {general Bayesian model} and {\BNNLong}s for supervised learning are illustrated in Figures~\ref{fig:pgm-temporal} and~\ref{fig:pgm-hierachy} using {\em graphical models}.


\begin{figure}[!htbp]
\centering
    \begin{subfigure}[b]{0.42\textwidth}
    \centering
    \psfrag{x1}[Bl]{${x_1}$}
    \psfrag{y1}[Bl]{${y_1}$}
    \psfrag{x2}[Bl]{${x_2}$}
    \psfrag{y2}[Bl]{${y_2}$}
    \psfrag{xn}[Bl]{${x_N}$}
    \psfrag{yn}[Bl]{${y_N}$}
    \psfrag{xt}[Bl]{\hspace{2pt}$\rand{x}$}
    \psfrag{yt}[Bl]{\hspace{2pt}$\rand{y}$}
    \psfrag{t}[Bl]{$\hspace{-3pt}\rand{\theta}$}
    \includegraphics[width=0.88\textwidth]{\fighome/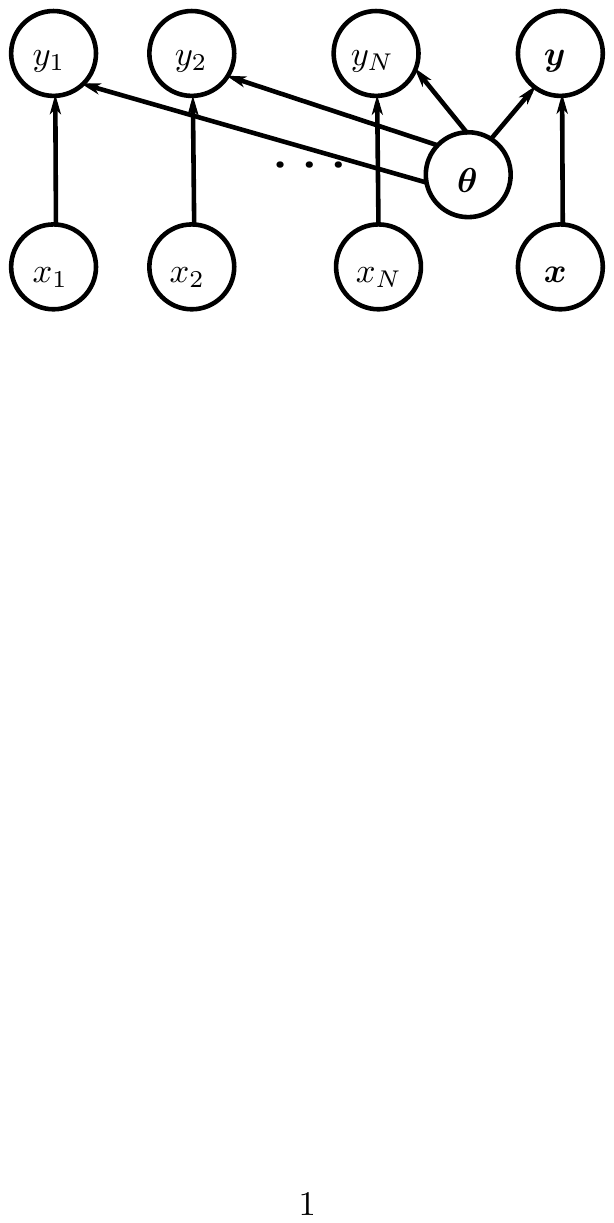}
    \caption{Graphical model depiction of the problem setting in Bayesian neural networks.}
    \label{fig:pgm-temporal}
    \end{subfigure}
\hfill
    \begin{subfigure}[b]{0.56\textwidth}
    \centering
    \psfrag{y0}[c][c][1.25]{\raisebox{2pt}{\hspace{1pt}$\rand{x}$}}
    \psfrag{t0}[c][c][1]{\raisebox{3pt}{$\rand{\theta}^{(0)}$}}
    \psfrag{y1}[c][c][1]{\raisebox{5pt}{$\rand{h}^{(1)}$}}
    \psfrag{t1}[c][c][1]{\raisebox{3pt}{$\rand{\theta}^{(1)}$}}
    \psfrag{y2}[c][c][1]{\raisebox{5pt}{$\rand{h}^{(2)}$}}
    \psfrag{t2}[c][c][0.75]{\raisebox{2pt}{\hspace{1pt}$\rand{\theta}^{(L-1)}$}}
    \psfrag{y3}[c][c][1.25]{\raisebox{2pt}{\hspace{2pt}$\rand{y}$}}
    \includegraphics[width=0.84\textwidth]{\fighome/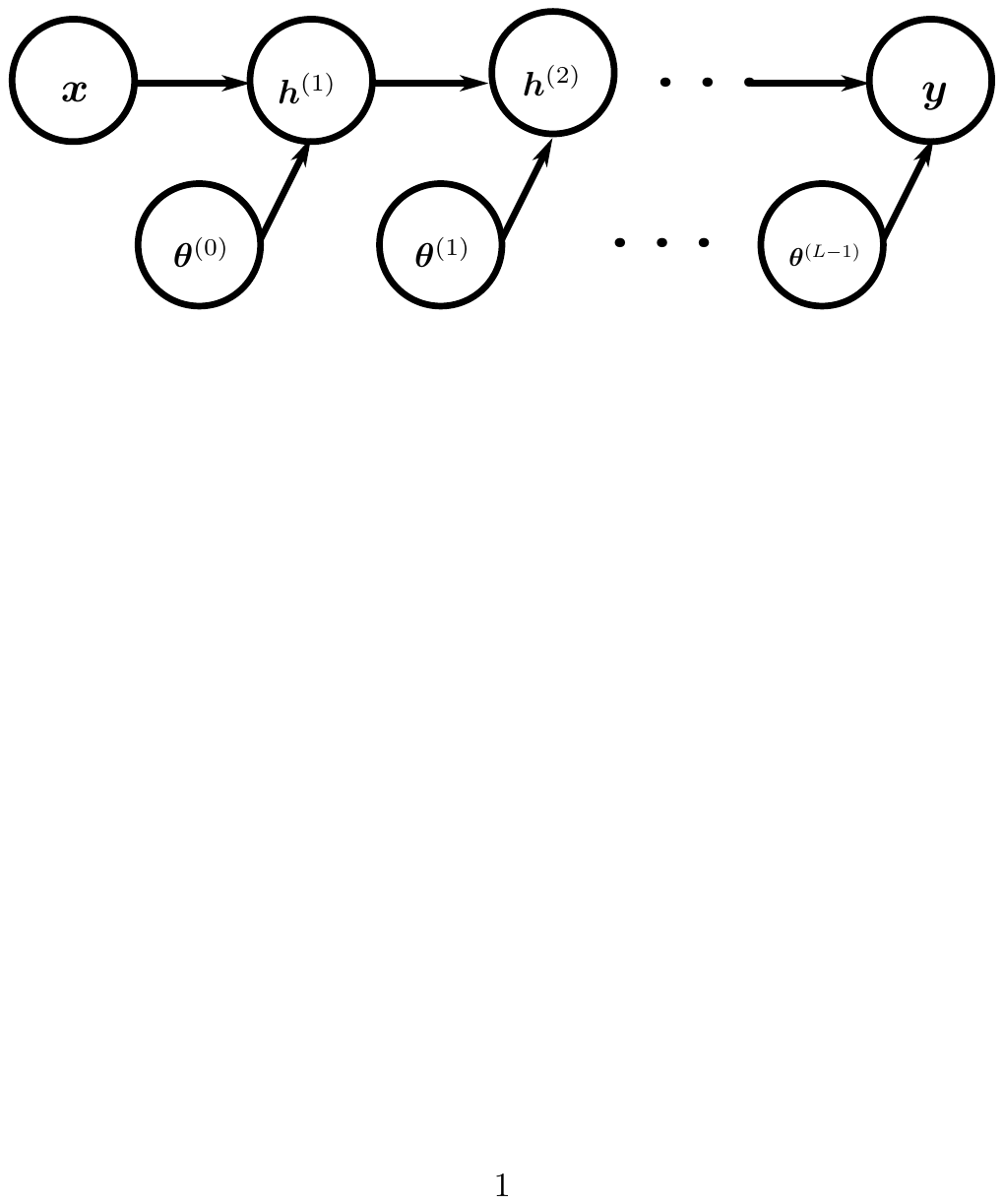}
    \caption{Graphical model depiction of a Bayesian neural network as a hierarchical model, where predicting $\rand{y}$ from $\rand{x}$ can be performed iteratively through the hidden variables $\rand{h}^{(l)}$'s.}
    \label{fig:pgm-hierachy}
    \end{subfigure}
    \caption{Graphical models.}
\end{figure}

\paragraph{General Bayesian model}
Formally, the graphical model in Figure~\ref{fig:pgm-temporal} implies the joint distribution of the model parameters $\rand{\theta}$, the observed dataset $\mathcal{D} = \{(x_n, y_n)\}_{n = 1}^{N}$ and any unseen data point $(\rand{x}, \rand{y})$ is factorized as follows:
\begin{align}
\Prob[\rand{x}, \rand{y}, \mathcal{D}, \rand{\theta}] 
& = \left( \Prob[\rand{y}| \rand{x}, \rand{\theta}] \Prob[\rand{x}] \left( \Prob[\mathcal{D}|\rand{\theta}] \right) \right) \Prob[\rand{\theta}]
\label{eq:pgm-temporal-1} \\
& = \left( \Prob[\rand{y}|\rand{x}, \rand{\theta}] \Prob[\rand{x}] \left( \prod_{n = 1}^{N} \Prob[y_n| x_n, \rand{\theta}] \Prob[x_n] \right) \right) \Prob[\rand{\theta}] 
\label{eq:pgm-temporal-2} 
\end{align}
where $\Prob[x_i]$'s and $\Prob[\rand{x}]$ are identically distributed, and so are the conditional distributions $\Prob[y_i| x_i, \rand{\theta}]$'s and $\Prob[\rand{y}|\rand{x}, \rand{\theta}]$. 
In other words, we assume that
(1) the samples $(x_n, y_n)$'s (and unseen data point $(\rand{x}, \rand{y})$) are are identical and independent distributed according to the same data distribution; 
and 
(2) $x_n$ (or $\rand{x}$) and $\rand{\theta}$ together predict the output $y_n$ (or $\rand{y}$) according to the same conditional distribution.
Notice that the factorization above also implies the following equations:
\begin{subequations}
\begin{gather}
\Prob[\rand{y}|\rand{x}, \mathcal{D}, \rand{\theta}] 
= \Prob[\rand{y}|\rand{x}, \rand{\theta}]  
\label{eq:pgm-implication-1} \\
\Prob[\rand{\theta}|\rand{x}, \mathcal{D}] 
= \Prob[\rand{\theta}|\mathcal{D}]
\label{eq:pgm-implication-2}
\end{gather}
\end{subequations}
With these implications, the posterior predictive distribution $\Prob[\rand{y}|\rand{x}, \mathcal{D}]$ can now expanded as:
\begin{equation}
\Prob[\rand{y}| \rand{x}, \mathcal{D}] 
= \int_{\theta} \Prob[\rand{y} | x, \theta, \mathcal{D}] \Prob[\theta| x, \mathcal{D}] d \theta 
= \int_{\theta} \Prob[\rand{y}| x, \theta] \underbrace{\Prob[\theta| \mathcal{D}]}_{\approx Q\left(\theta; ~\phi \right)} d \theta
\label{eq:bayes-prediction-appendix}
\end{equation}
where we approximate the posterior distribution $\Prob[\rand{\theta}|\mathcal{D}]$ by a parameterized distribution $Q(\rand{\theta}; \phi)$.

\paragraph{Variational Learning}
The reason we are learning an approximate posterior $Q$ and not the exact distribution $\Prob[\rand{\theta}|\mathcal{D}]$ is that for complex models the latter is intractable to compute.
The exact posterior $\Prob[\rand{\theta}|\mathcal{D}]$ generally does not take the form of $Q(\rand{\theta}; \phi)$ even if its prior $\Prob[\rand{\theta}]$ does. 

A standard approach to finding a good approximation $Q(\rand{\theta}; \phi)$ is {\em variational inference}, which finds $\phi^{\star}$ such that the {\em KL-divergence} $\KL{Q(\rand{\theta}; \phi)}{\Prob[\rand{\theta}| \mathcal{D}]}$ of $Q(\rand{\theta}; \phi)$ from $\Prob[\rand{\theta}|\mathcal{D}]$ is minimized (or alternatively the negative KL-divergence is maximized.)
\begin{align}
\phi^{\star} 
& = \arg\max_{\phi} \left(-\KL{Q(\rand{\theta}; \phi)}{\Prob[\rand{\theta}| \mathcal{D}]} \right) \\
& = \arg\max_{\phi} \left(- \int_{\theta} Q(\theta; \phi) \log \left( \frac{Q(\theta; \phi)}{\Prob[\theta|\mathcal{D}]} \right) d \theta \right)
\label{eq:bayes-learning-variational}
\end{align}
where $\Prob[\rand{\theta}|\mathcal{D}]$ is obtained via standard {\em Bayes' rule}, i.e. $\Prob[\rand{\theta}|\mathcal{D}] 
= \Prob[\mathcal{D}|\rand{\theta}] \Prob[\rand{\theta}]/\Prob[\mathcal{D}]$.
Now we are able to decompose the maximization objective into two terms by plugging the rule into \eqref{eq:bayes-learning-variational}:
\begin{align}
\mathcal{L}(\phi) 
& = -\int_{\theta} Q(\theta; \phi) \log \left(Q(\theta; \phi) \cdot \frac{ \Prob[\mathcal{D}]}{\Prob[\theta] \Prob[\mathcal{D}|\theta]} \right) 
d \rand{\theta} \\
& = \sum_{n = 1}^{N} \int_{\theta} \log \left(\Prob[y_n|x_n, \theta] \right) Q(\theta; \phi) d \theta + \int_{\theta} Q(\theta; \phi) \log\left( \frac{Q(\theta; \phi)}{\Prob[\theta]} \right) d \theta + \text{const.} \\
& = \sum_{n = 1}^{N} \underbrace{\ESub{Q}{ \log\left(\Prob[y_n|x_n, \theta] \right)}}_{\mathcal{L}_n(\phi)} + \underbrace{\KL{Q(\rand{\theta}; \phi)}{\Prob[\rand{\theta}]}}_{\mathcal{R}(\phi)} - \underbrace{\log \left(\Prob[\mathcal{D}] \right)}_{\text{const.}}
\end{align}
where (1) $\mathcal{L}_n(\phi)$ is the {\em expected log-likelihood}, which reflects the predictive performance of the Bayesian model on the data point $(x_n, y_n)$; and (2) $\mathcal{R}(\phi)$ is the KL-divergence between $Q(\rand{\theta}; \phi)$ and its prior $\Prob[\rand{\theta}]$, which reduces to entropy $H(Q)$ if the prior of $\rand{\theta}$ follows a uniform distribution.

\paragraph{Hierarchical Bayesian Model} A {\BNNLong} can be considered as a hierarchical Bayesian model depicted in Figure~\ref{fig:pgm-hierachy}, which further satisfies the following two assumptions:
\begin{assumption}
[\textbf{Independence of Model Parameters $\rand{\theta}^{(l)}$}] \label{assump:ind_param}
The approximate posterior $Q(\rand{\theta}; \phi)$ over the model parameters $\rand{\theta}$ are partitioned into $L$ disjoint and statistically independent {\em layers} $\{ \rand{\theta}^{(l)}\}_{l = 0}^{L - 1}$ (where each $\phi^{(l)}$ parameterizes $\rand{\theta}^{(l)}$ in the $l$-th layer)
such that:
\begin{equation}
Q(\rand{\theta}; \phi)  = \prod_{l = 0}^{L-1} Q(\rand{\theta}^{(l)}; \phi^{(l)})
\end{equation}
\end{assumption}
\begin{assumption}
[\textbf{Markovianity of Hidden Units $\rand{h}^{(l)}$}] \label{assump:makov_neurons}
The {\em hidden variables} $\rand{h} = \{\rand{h}^{(l)}\}_{l = 0}^{L}$ satisfy the {\em Markov property} that $\rand{h}^{(l+1)}$ depends on the input $\rand{x}$ only through its previous layer $\rand{h}^{(l)}$:
\begin{equation}
\Prob[\rand{h}^{(l+1)}|\rand{h}^{(\colon\! l)}, \rand{\theta}^{(\colon\! l)}] = \Prob[\rand{h}^{(l+1)}| \rand{h}^{(l)}, \rand{\theta}^{(l)}]
\end{equation}

where we use short-hand notations $\rand{h}^{(\colon\! l)}$ and $\rand{\theta}^{(\colon\! l)}$ to represent the sets of previous layers $\{\rand{h}^{(k)}\}_{k = 0}^{l}$ and $\{\rand{\theta}^{(k)}\}_{k = 0}^{l}$. 
For consistency, we denote $\rand{h}^{(0)} = \rand{x}$ and $\rand{h}^{(L)} = \rand{y}$.
\end{assumption}

\paragraph{Proof of probabilistic prorogation}
Based on the two assumptions above, we provide a proof for {\em probabilistic propagation} 
in Equation~\eqref{eq:bayes-prediction-iterative} as follows:
\begin{align}
& \overbrace{\Prob[\rand{h}^{(l+1)}| x]}^{P\left(\rand{h}^{(l+1)}; ~\psi^{(l+1)}\right)}  
= \int_{\theta^{(\colon\!l)}} \Prob[\rand{h}^{(l+1)}|x, \theta^{(\colon\! l)}] ~ Q(\theta^{(\colon\!l)}; \phi^{(\colon\!l)}) ~ d \theta^{(\colon\! l)} 
\\
= ~ & \int_{\theta^{(\colon\!l)}} \left(\int_{h^{(l)}} \Prob[\rand{h}^{(l+1)}|h^{(l)}, \theta^{(l)}] \Prob[h^{(l)}|x, \theta^{(\colon\!l-1)}] d h^{(l)} \right) Q(\theta^{(\colon\!l)}; \phi^{(\colon\!l)}) ~ d \theta^{(\colon\! l)} 
\\
= ~ & \begin{aligned}
\int_{h^{(l)}, \theta^{(l)}} & \Prob[\rand{h}^{(l+1)}|h^{(l)}, \theta^{(l)}] Q(\theta^{(l)}; \phi^{(l)}) \\
& \left( \int_{\theta^{(\colon\!l-1)}} \Prob[h^{(l)}|x, \theta^{(\colon\!l-1)}] Q(\theta^{(\colon\!l-1)}; \phi^{(\colon\!l-1)}) d \theta^{(\colon\!l-1)}\right) d h^{(l)} d \theta^{(l)} 
\end{aligned}
\\ 
= ~ & \int_{h^{(l)}, \theta^{(l)}} \Prob[\rand{h}^{(l+1)}| h^{(l)}, \theta^{(l)}] Q(\theta^{(l)}; \phi^{(l)}) \underbrace{\Prob[h^{(l)}| x]}_{P\left(h^{(l)}; ~ \psi^{(l)} \right)} d h^{(l)} d \theta^{(l)}
\end{align}

\section{Alternative Evidence Lower Bound and its Analytic Forms}
\label{app:proposed-method}

\subsection{Alternative Evidence Lower Bound (Proof for Theorem~\ref{thm:lower_bound})}
\label{subsec:bayes-learning-ELBO-alternative}

The steps to prove the inequality~\eqref{eq:bayes-learning-ELBO-alternative} almost follow the ones for probabilistic propagation above: 
\begin{align}
& \mathcal{L}_n(\phi) = \ESub{Q}{\log(\Prob[y_n|x_n, \rand{\theta}])} \\
= ~ & \int_{\theta}   \log \left( \Prob[y_n|x_n, \theta] \right) Q(\theta; \phi) d \theta \\
= ~ & \int_{\theta}  \log \left( \int_{h^{(L-1)}} \Prob[y_n, h^{(L-1)}|x_n, \theta]
d h^{(L-1)} \right) Q(\theta; \phi) d \theta
\\
= ~ & \int_{\theta}  \log \left( \int_{h^{(L-1)}} \Prob[y_n| h^{(L-1)}, \theta^{(L-1)}] \Prob[h^{(L-1)}|x_n, \theta^{(0:L-2)}] d h^{(L-1)} \right) Q(\theta; \phi) d \theta 
\\
\geq ~ &  \int_{\theta} 
 \left(\int_{h^{(L-1)}} \log \left( \Prob[y_n|h^{(L-1)}, \theta^{(L-1)}] \right) \Prob[h^{(L-1)}| x_n, \theta^{(0:L-1)}] d h^{(L-1)} \right) Q(\theta; \phi) d \theta 
\label{eq:bayes-learning-ELBO-alternative-proof} \\
= ~ & \begin{aligned}
& \int_{h^{(L-1)}, \theta^{(L-1)}} \log \left( \Prob[y_n|h^{(L-1)}, \theta^{(L-1)}] \right) Q(\theta^{(L-1)}; \phi^{(L-1)}) \\
& \left( \int_{\theta^{(0:L-2)}} \Prob[h^{(L-1)}|x_n, \theta^{(0:L-2)}] Q(\theta^{(0:L-2)}; \phi^{(0:L-2)}) d \theta^{(0:L-2)} \right) d h^{(L-1)} d \theta^{(L-1)} 
\end{aligned}
\\ 
= ~ & \int_{h^{(L-1)}, \theta^{(L-1)}}  \log \left( \Prob[y_n| h^{(L-1)}, \theta^{(L-1)}] \right) Q(\theta^{(L-1)}) \Prob[h^{(L-1)}|x_n] d h^{(L-1)} d \theta^{(L-1)} \\
= ~ & ~ \ESub{ \rand{h}^{(L-1)} \sim P;~ \rand{\theta}^{(L-1)} \sim Q}{\log \left( \Prob[y_n| h^{(L-1)}, \theta^{(L-1)}] \right)}
= \overline{\mathcal{L}}_n(\phi)
\label{eq:bayes-learning-ELBO-alternative-appendix}
\end{align}
where the key is the Jensen's inequality $\ESub{Q}{\log(\cdot)} \geq \log\left(\ESub{Q}{\cdot} \right)$ in Equation~\eqref{eq:bayes-learning-ELBO-alternative-proof}. Notice that if $\rand{\theta}^{(L-1)}$ is not random variable (typical for an output layer), $\overline{\mathcal{L}}_n(\phi)$ can be simplified as: 
\begin{equation}
\overline{\mathcal{L}}_n(\phi) = \int_{h^{(L-1)}}
\log\left(\Prob[y_n|h^{(L-1)}; \phi^{(L-1)}] \right)
P(h^{(L-1)}; \psi^{(L-1)}) d h^{(L-1)}
\label{eq:bayes-learning-ELBO-alternative-simplified}
\end{equation}
where we write $\Prob[\rand{h}^{(L-1)}|x]$ in its parameterized form $P(\rand{h}^{(L-1)}; \psi^{(L-1)})$.
Now, the gradient ${\partial \overline{\mathcal{L}}_n(\phi)} / {\partial \phi^{(L-1)}}$ can be obtained by differentiating over Equation~\eqref{eq:bayes-learning-ELBO-alternative-simplified}, while
other gradients ${\partial \overline{\mathcal{L}}_n(\phi)} / {\phi^{(\colon\!L-2)}}$ further obtained by chain rule:
\begin{equation}
\derivative{\overline{\mathcal{L}}_n(\phi)}{\phi^{(\colon\! L-2)}}
= \derivative{\overline{\mathcal{L}}_n(\phi)}{\psi^{(L-1)}} \cdot \derivative{\psi^{(L-1)}}{\phi^{(\colon\! L-2)}}
\label{eq:bayes-learning-ELBO-backprop}
\end{equation}
which requires us to compute ${\partial \overline{\mathcal{L}}_n(\phi)} / {\partial \psi^{(L-1)}}$ and ${\partial \psi^{(L-1)}}/{\partial \phi^{(\colon\! L-2)}}$.
While ${\partial \overline{\mathcal{L}}_n(\phi)} / {\partial \psi^{(L-1)}}$ can be derived from Equation~\eqref{eq:bayes-learning-ELBO-alternative-simplified},
${\partial \psi^{(L-1)}}/{\partial \phi^{(\colon\! L-2)}}$ can be obtained by backpropagating outputs of the $(L-2)^{\text{th}}$ layer obtained from {probabilistic propagation} in Equation~\eqref{eq:bayes-prediction-iterative}.
In other words: since $P(\rand{h}^{(L-1)}; \psi^{(L-1)})$ is an intermediate step of the forward pass, $\psi^{(L-1)}$ is a function of all parameters from previous layers $\phi^{(\colon\! L-2)}$, and if each step $\psi^{(l+1)} = g^{(l)}(\psi^{(l)}, \phi^{(l)})$ is differentiable w.r.t.\ $\psi^{(l)}$ and $\phi^{(l)}$, the partial derivatives ${\partial \psi^{(L-1)}}/{\partial \phi^{(\colon\! L-2)}}$ can be obtained by iterative chain rule.

\subsection{Softmax Layer for Classification Problem}
\label{subsec:layer-softmax}
In this part, we first prove the alternative evidence lower bound (ELBO) for {\BNNLong}s with {\em softmax} function as their last layers. Subsequently, we derive the corresponding backpropagation rule for the softmax layer. Finally, we show a method based on {\em Taylor's expansion} to approximately evaluate a softmax layer without {\em Monte Carlo sampling}.


\begin{theorem}[\textbf{Analytic Form of  $\overline{\mathcal{L}}_n(\phi)$ for Classification}] 
\label{thm:lastlayer-classification-appendix}
Let $\rand{h} \in \R^{K}$ (with K the number of classes) be the pre-activations of a softmax layer (a.k.a.\ logits), and $\phi = s \in \R^{+}$ be a scaling factor that adjusts its scale such that $\Prob[\rand{y} = c|\rand{h}, s] = {\exp(\rand{h}_c/s)}/{\sum_{k = 1}^{K} \exp(\rand{h}_k/s)}$. Suppose the logits $\{\rand{h}_k\}_{k = 1}^{K}$ are pairwise independent (which holds under mean-field approximation) and $\rand{h}_k$ follows a Gaussian distribution $\rand{h}_k \sim \mathcal{N}(\mu_k, \nu_k)$ (therefore $\psi =\{\mu_k, \nu_k\}_{k=1}^K$) and $s$ is a deterministic parameter. Then $\overline{\mathcal{L}}_{n}(\phi)$ can be further upper bound by the following analytic form:
\begin{equation}
\overline{\mathcal{L}}_{n}(\phi) \geq \frac{\mu_c}{s} - \log \left( \sum_{k = 1}^{K} \exp\left(\frac{\mu_k}{s} + \frac{\nu_k}{2s^2} \right) \right) \triangleq \hat{\mathcal{L}}(\phi)
\label{eq:alternative-ELBO-classification}
\end{equation}
\end{theorem}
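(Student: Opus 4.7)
The plan is to start from the alternative ELBO specialization to the output layer and then apply Jensen's inequality to the log-sum-exp term, exploiting the closed form of the Gaussian moment generating function.

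First I would expand the integrand. Since $s$ is deterministic, only $\rand{h}$ contributes randomness, and the softmax likelihood gives
\begin{equation}
\log \Prob[y_n = c \mid \rand{h}, s] = \frac{\rand{h}_c}{s} - \log\!\left(\sum_{k=1}^{K} \exp(\rand{h}_k/s)\right).
\end{equation}
Taking the expectation over $\rand{h}\sim P$ and using linearity, the first term yields $\mu_c/s$ exactly, so
\begin{equation}
\overline{\mathcal{L}}_n(\phi) = \frac{\mu_c}{s} - \mathbb{E}\!\left[\log\!\left(\sum_{k=1}^{K} \exp(\rand{h}_k/s)\right)\right].
\end{equation}

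Next I would lower bound $\overline{\mathcal{L}}_n(\phi)$ by upper bounding the expected log-sum-exp. Since $\log(\cdot)$ is concave, Jensen's inequality gives
\begin{equation}
\mathbb{E}\!\left[\log\!\left(\sum_{k=1}^{K} \exp(\rand{h}_k/s)\right)\right] \leq \log\!\left(\sum_{k=1}^{K} \mathbb{E}[\exp(\rand{h}_k/s)]\right),
\end{equation}
where only linearity of expectation (not independence) is needed to push the sum outside. Then I would plug in the Gaussian moment generating function: for $\rand{h}_k \sim \mathcal{N}(\mu_k,\nu_k)$,
\begin{equation}
\mathbb{E}[\exp(\rand{h}_k/s)] = \exp\!\left(\frac{\mu_k}{s} + \frac{\nu_k}{2s^2}\right).
\end{equation}
Combining these two steps yields the claimed analytic lower bound $\hat{\mathcal{L}}(\phi)$.

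There is no real obstacle in this proof — it is essentially two standard moves (Jensen plus Gaussian MGF) layered on top of Theorem~\ref{thm:lower_bound}. The only subtlety worth flagging is that pairwise independence of the logits, although assumed in the statement, is not actually invoked in this particular bound: linearity of expectation handles the finite sum, and each marginal MGF depends only on $(\mu_k, \nu_k)$. Independence becomes important elsewhere (e.g.\ for tighter bounds or for computing the full distribution of $\sum_k \exp(\rand{h}_k/s)$), but not here. I would briefly note this in the write-up so the role of the mean-field assumption is not overstated.
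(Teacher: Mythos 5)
Your proof is correct and follows essentially the same route as the paper's: expand the log-softmax, take the mean of the linear term, apply Jensen's inequality to the expected log-sum-exp, and evaluate $\ESub{}{\exp(\rand{h}_k/s)}$ via the Gaussian moment generating function (which the paper derives explicitly by completing the square). Your side remark that pairwise independence is never actually invoked is also accurate --- the paper's proof writes the joint as a product of marginals but each step (linearity, Jensen, per-coordinate MGF) uses only the marginals of the $\rand{h}_k$.
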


\begin{proof}
The lower bound follows by plugging $\Prob[\rand{y}|\rand{h}, s]$ and $\Prob[\rand{h}_k| x]$ into Equation~\eqref{eq:bayes-learning-ELBO-alternative}.
\begin{align}
\overline{\mathcal{L}}_{n}(\phi) 
& = \int_h \log\left(  \Prob[y_n = c|h; s] \right) \Prob[h|x] d h 
\\
& = \int_{h} \left( \frac{h_c}{s} - \log \left( \sum_{k = 1}^{K} \exp\left(\frac{h_k}{s} \right) \right) \right) \left( \prod_{k = 1}^{K} \Prob[h_k|x] \right) d h 
\\
& = \frac{1}{s} \int_{h_c} h_c \Prob[h_c|x_n] d h_c - \int_{h} \log\left( \sum_{k = 1}^{K} \exp\left(\frac{h_k}{s} \right) \right) \left( \prod_{k = 1}^{K} \Prob[h_k|x] \right) d h
\\
& = \frac{\mu_c}{s} - \int_{h} \log\left( \sum_{k = 1}^{K} \exp\left(\frac{h_k}{s} \right) \right) \left( \prod_{k = 1}^{K} \Prob[h_k|x] \right) d h
\label{eq:class_def}
\\
& \geq \frac{\mu_c}{s} - \log \left( \int_{h} \sum_{k = 1}^{K} \exp\left(\frac{h_k}{s} \right) \left( \prod_{k = 1}^{K} \Prob[h_k|x] \right) d h \right)\label{eq:class_jensen}
\\
& = \frac{\mu_c}{s} - \log \left( \sum_{k = 1}^{K} \int_{h_k} \exp\left(\frac{h_k}{s}\right) \Prob[h_k|x] d h_k  \right) 
\\
& = \frac{\mu_c}{s} - \log \left( \sum_{k = 1}^{K} \int_{h_k} \exp\left( \frac{h_k}{s} \right) \cdot \frac{1}{\sqrt{2 \pi \nu_k}} \exp\left( - \frac{(h_k - \mu_k)^2}{2 \nu_k}\right) d h_k\right) 
\\  
& = \frac{\mu_c}{s} - \log \left( \sum_{k = 1}^{K} \exp\left(\frac{\mu_k}{s} + \frac{\nu_k}{2s^2} \right) \right)  
= \hat{\mathcal{L}}(\phi)
\end{align}
where the last equation follows
\begin{align}
& \int_{h_k} \exp\left( \frac{h_k}{s} \right) \cdot \frac{1}{\sqrt{2 \pi \nu_k}} \exp\left( - \frac{(h_k - \mu_k)^2}{2 \nu_k}\right) d h_k 
\\ 
~ = & \int_{h_k} \frac{1}{\sqrt{2 \pi \nu_k}} \exp \left( - \frac{h_k^2 - 2 (\mu_k + \nu_k /s) h_k + \mu_k^2}{2 \nu_k} \right) d h_k \\
~ = & \underbrace{\int_{h_k} \frac{1}{\sqrt{2 \pi \nu_k}} \exp \left( -  \frac{\left(h_k - (\mu_k + \nu_k)\right)^2}{2 \nu_k} \right) d h_k} \cdot \exp\left(\frac{\mu_k}{s} + \frac{\nu_k}{2s^2} \right)
\end{align}
where the under-braced term is unity since it takes the form of Gaussian distribution. 
\end{proof}
From Equation~\eqref{eq:class_def} to~\eqref{eq:class_jensen}, 
we use the Jensen's inequality to achieve a lower bound for integral of log-sum-exp. 
The bound can be tighten with advanced techniques in~\cite{khan2012variational}. 

\paragraph{Derivatives of $\overline{\mathcal{L}}_n(\phi)$ in \eqref{eq:alternative-ELBO-classification}}
To use probabilistic backpropagation to obtain the gradients w.r.t.\ the parameters from previous layers, we first need to obtain the derivatives w.r.t.\ $\psi^{(L-1)} = \{ \mu_k, \nu_k \}_{k = 1}^{K}$.
\begin{subequations}
\begin{align}
\derivative{\hat{\mathcal{L}}_n(\phi)}{\mu_k} 
& = - \frac{1}{s} \left( \frac{\exp\left( \mu_k / s + \nu_k / 2s^2 \right)}
{\sum_{k = 1}^{K} \exp\left(\mu_k / s + \nu_k / 2s^2 \right)} - 
\mathbf{1}[k = c] \right)
\label{eq:layer-softmax-derivative-mean-appendix} \\
\derivative{\hat{\mathcal{L}}_n(\phi)}{\nu_k} 
& = - \frac{1}{2s^2} \left( \frac{\exp\left(\mu_k/s + \nu_k / 2s^2 \right)}
{\sum_{k = 1}^{K} \exp\left( \mu_k / s + \nu_k / 2s^2 \right)} \right)
\label{eq:layer-softmax-derivative-variance-appendix}
\end{align}
\end{subequations}
Furthermore, the scale $s$ can be (optionally) updated along with other parameters using the gradient
\begin{equation}
\derivative{\hat{\mathcal{L}}_n(\phi)}{s} 
= - \frac{\mu_c}{s^2} + \frac{\sum_{k = 1}^{K} \left( \mu_k / s^2 + \nu_k / s^3 \right)\exp\left( \mu_k / s + \nu_k / 2s^2 \right)}
{\sum_{k = 1}^{K} \exp\left(\mu_k/s + \nu_k/2s^2 \right)}
\label{eq:layer-softmax-derivative-scale-appendix}
\end{equation}

\paragraph{Prediction with Softmax Layer} 
Once we learn the parameters for the {\BNNLong}, 
in principle we can compute the predictive distribution of $\rand{y}$ by evaluating the following equation:
\begin{align}
\Prob[\rand{y} = c|\rand{x}] 
& = \int_{h} \Prob[\rand{y} = c|\rand{h}, s] \Prob[h|\rand{x}] d h 
= \int_{h} \ell_c(h) \Prob[h|\rand{x}] d h 
\\
\text{(Mean-field assumption)} ~ & = \int_{h_1} \cdots \int_{h_K} \ell_c(h) \left( \prod_{k = 1}^{K} \Prob[h_k|x] \right) d h_1 \cdots d h_k 
\label{eq:layer-softmax-prediction}
\end{align}
where we denote the softmax function as $\ell_c(h) = \exp (h_c/s) / [\sum_{k}\exp(h_k/s)]$. Unfortunately, the equation above can not be computed in closed form. The most straight-forward work-around is to approximate the integral by Monte Carlo sampling: for each $\rand{h}_k$ we draw $S$ samples $\{h^s_k \}_{s = 1}^{S}$ independently and compute the prediction:
\begin{equation}
\Prob[\rand{y} = c|x] 
\approx \frac{1}{S} \sum_{s = 1}^{S} \ell_c(h^s), 
~ \forall c \in [K]
\label{eq:layer-softmax-prediction-monte-carlo}
\end{equation}
Despite its conceptual simplicity, 
Monte Carlo method suffers from expensive computation and high variance in estimation. 
Instead, we propose an economical estimate based on Taylor's expansion. First, 
we expand the function $\ell_c(h)$ by Taylor's series at the point $\mu$ (up to the second order):
\begin{align}
\ell_c(h) 
& = \ell_c(\mu) + \left[\derivative{\ell_c}{h}(\mu) \right]^{\top} (h - \mu) + \frac{1}{2} (h - \mu)^{\top} \left[ \derivative{^2 \ell_c}{h^2}(\mu) \right] (h - \mu) + O\left(\|h - c\|^3\right)
\\
& = \ell_c(\mu) + \sum_{k = 1}^{K} \left[\derivative{\ell_c}{h_k}(\mu) \right] (h_k - \mu_k) + \sum_{i = 1}^{K} \sum_{j = 1}^{K} \left[\derivative{^2 \ell_c}{h_i h_j}(\mu) \right] (h_i - \mu_i) (h_j - \mu_j) + O\left(\|h - \mu\|^3\right)
\label{eq:layer-softmax-taylor}
\end{align}
Before we derive the forms of these derivatives, we first show the terms of odd orders do not contribute to the expectation. For example, if $\ell_c(h)$ is approximated by its first two terms (i.e. a linear function), Equation~\eqref{eq:layer-softmax-prediction} can be written as
\begin{align}
\Prob[\rand{y} = c| x] 
& \approx \int_{h_1} \cdots \int_{h_K} \left( \ell_c(\mu) + \sum_{k = 1}^{K} \left[ \derivative{\ell_c}{h_k}(\mu) \right] (h_k - \mu_k) \right) \left( \prod_{k = 1}^{K} \Prob[h_k|x] \right) d h_1 \cdots d h_k
\\
& = \ell_c(\mu) + \sum_{k = 1}^{K} \left[ \derivative{\ell_c}{h_k}(\mu) \right] \left( \int_{h_k} \left(h_k - \mu_k \right) \Prob[h_k|x] d h_k \right) 
= \ell_c(\mu)
\label{eq:layer-softmax-prediction-taylor-1}
\end{align}
where the second term is zero by the symmetry of $\Prob[\rand{h}_k|\rand{x}]$ around $\mu_k $ (or simply the definition of $\mu_k$'s). Therefore, the first-order approximation results exactly in a (deterministic) softmax function of the mean vector $\mu$. In order to incorporate the variance into the approximation, we will need to derive the exact forms of the derivatives of $\ell_c(h)$. Specifically, the first-order derivatives are obtained from the definition of $\ell_c(h)$. 
\begin{subequations}
\begin{align}
\derivative{\ell_c}{h_c}(h) 
& = \frac{1}{s} \cdot \frac{\exp\left(h_c / s\right) - \exp \left(2 h_c / s \right)}{\left(\sum_{k = 1}^{K} \exp \left(h_k / s\right)\right)^2} 
= \frac{1}{s} \left( \ell_c(h) - \ell_c^2(h) \right) 
\label{eq:layer-softmax-derivative-first-1} \\
\derivative{\ell_c}{h_k}(h)
& = -\frac{1}{s} \cdot \frac{\exp \left( h_c/s \right) \cdot \exp \left( h_k / s \right)}{\left( \sum_{k = 1}^{K} \exp \left(h_k / s \right)\right)^2} 
= -\frac{1}{s} \ell_c(h) \ell_k(h),
~ \forall k \neq c
\label{eq:layer-softmax-derivative-first-2} 
\end{align}
\end{subequations}
and subsequently the second-order derivatives from the first ones:
\begin{subequations}
\begin{align}
\derivative{^2 \ell_c}{h_c^2}(h) 
= \frac{1}{s} \left(\derivative{\ell_c}{h_c}(h) - 2\ell_c(h) \derivative{\ell_c}{h_c}(h) \right) 
& = \frac{1}{s^2} \left( \ell_c(h) - 3 \ell_c^2(h) + 2 \ell_c^3(h) \right) 
\label{eq:layer-softmax-derivative-second-1} \\
\derivative{^2 \ell_c}{h_k^2}(h) 
= -\frac{1}{s} \left( \derivative{\ell_c}{h_c}(h) \ell_k(h) + \ell_c(h) \derivative{\ell_k}{h_c}(h) \right) 
& = \frac{1}{s^2} \left(- \ell_c(h) \ell_k(h) + 2 \ell_c^2(h) \ell_k(h) \right),
~ \forall k \neq c
\label{eq:layer-softmax-derivative-second-2}
\end{align}
\end{subequations}
with these derivatives we can compute the second-order approximation as 
\begin{align}
\Prob[\rand{y} = c|x]
& \approx \int_{h_1, \cdot, h_K} \left( \ell_c(\mu) + \frac{1}{2} \sum_{i = 1}^{K} \sum_{j = 1}^{K} \derivative{^2 \ell_c}{\mu_i \mu_j}(\mu) (h_i - \mu_i) (h_j - \mu_j) \right) \bigg( \prod_{k = 1}^{K} \Prob[h_k|x] \bigg) d h_1 \cdots d h_K
\\
& = \ell_c(\mu) + \frac{1}{2} \derivative{^2 \ell_c}{\mu_c^2}(\mu) \int_{h_c} (h_c - \mu_c)^2  \Prob[h_c|x] d h_c + \frac{1}{2} \sum_{k \neq c} \derivative{^2 \ell_c}{\mu_k^2}(\mu) \int_{h_k} (h_k - \mu_k)^2 \Prob[h_k|x] d h_k 
\\
& = \ell_c(\mu) + \frac{1}{2 s^2} \left( \ell_c(\mu) - 3 \ell_c^2(\mu) + 2 \ell_c^3(\mu) \right) \nu_c +  \frac{1}{2 s^2} \sum_{k \neq c} \left( - \ell_c(\mu) \ell_k(\mu) + 2 \ell_c^2(\mu) \ell_k(\mu) \right) \nu_k
\\
& = \ell_c(\mu) + \frac{1}{2s^2}  \left(\ell_c(\mu) - 2\ell_c^2(\mu) \right) \left( \nu_c - \sum_{k = 1}^{K} \ell_k(\mu) \nu_k \right)
\label{eq:layer-softmax-prediction-taylor-2} 
\end{align} 
The equation above can be further written in vector form as:
\begin{equation}
\Prob[\rand{y}|x] \approx \ell(\mu) + \frac{1}{2s^2} \left(\ell(\mu) - \ell(\mu)^{\circ 2} \right) \circ \left(\nu - \ell(\mu)^{\top} \nu \right)
\label{eq:layer-softmax-prediction-taylor-3}
\end{equation}

\subsection{Gaussian Output Layer for Regression Problem}
\label{subsec:gaussian}

In this part, we develop an alternative evidence lower bound (ELBO) for {\BNNLong}s with Gaussian output layers, 
and derive the corresponding gradients for backpropagation. 
Despite the difficulty to obtain an analytical predictive distribution for the output, 
we show that its {\em central moments} can be easily computed given the learned parameters. 

\begin{theorem}[\textbf{Analytic Form of  $\overline{\mathcal{L}}_n(\phi)$ for Regression}] 
\label{thm:lastlayer-regression-appendix}
Let $\rand{h} \in \R^{I}$ be the output of last hidden layer (with $I$ the number of hidden units), and $\phi = (w, s) \in \R^{I} \times \R^{+}$ be the parameters that define the predictive distribution over output $\rand{y}$ as
\begin{equation}
\Prob[\rand{y}|\rand{h}; w, s] = \frac{1}{\sqrt{2 \pi s}} \exp \left( - \frac{(\rand{y} - w^{\top} \rand{h})^2}{2s} \right)
\end{equation}
Suppose the hidden units $\{\rand{h}_k\}_{k = 1}^{K}$ are pairwise independent (which holds under mean-field approximation), and each $\rand{h}_i$ has mean $\mu_i$  and variance $\nu_i$, then $\overline{\mathcal{L}}_{n}(\phi)$ takes an analytic form:
\begin{equation}
\overline{\mathcal{L}}_n(\phi) = - \frac{(y - w^{\top} \mu)^2 + (w^{\circ 2})^{\top} \nu}{2s} - \frac{\log(2 \pi s)}{2}
\label{eq:alternative-ELBO-regression}
\end{equation}
where $(w^{\circ 2})_i = w^2_i$ and $\mu = [\mu_1, \cdots, \mu_I]^{\top} \in \R^{I}$ and $\nu = [\nu_1, \cdots, \nu_I]^{\top} \in \R^{I}$ are vectors of mean and variance of the hidden units $\rand{h}$. 
\end{theorem}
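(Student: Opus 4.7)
The plan is to directly evaluate the definition of $\overline{\mathcal{L}}_n(\phi)$ from Theorem~\ref{thm:lower_bound}, which in this setting reduces to an expectation over $\rand{h}$ alone. Since the output-layer parameters $\rand{\theta}^{(L-1)} = (w,s)$ are treated as deterministic, we have
$$\overline{\mathcal{L}}_n(\phi) \;=\; \mathbb{E}_{\rand{h}\sim P}\!\left[\log \Prob[y\,|\,\rand{h}; w, s]\right].$$
Unlike the classification case, no further lower bound via Jensen's inequality is needed here: the log of a Gaussian is a quadratic function of $\rand{h}$, and the expectation of a quadratic is exactly determined by the first two moments of $\rand{h}$.

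First I would substitute the Gaussian density, giving
$$\log \Prob[y\,|\,\rand{h}; w, s] \;=\; -\frac{(y - w^\top \rand{h})^2}{2s} \;-\; \frac{1}{2}\log(2\pi s),$$
and pull the constant $-\tfrac12\log(2\pi s)$ outside the expectation. The task then reduces to computing $\mathbb{E}[(y - w^\top\rand{h})^2]$ in closed form.

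Next I would apply the standard bias-variance decomposition. Setting $Z = w^\top \rand{h} = \sum_i w_i \rand{h}_i$, linearity of expectation gives $\mathbb{E}[Z] = w^\top \mu$. For the variance, the mean-field (pairwise independence) assumption eliminates all covariance cross-terms, so $\mathrm{Var}[Z] = \sum_i w_i^2 \nu_i = (w^{\circ 2})^\top \nu$. Combining,
$$\mathbb{E}\!\left[(y - w^\top \rand{h})^2\right] \;=\; (y - \mathbb{E}[Z])^2 + \mathrm{Var}[Z] \;=\; (y - w^\top \mu)^2 + (w^{\circ 2})^\top \nu,$$
and substituting back yields the claimed expression~\eqref{eq:alternative-ELBO-regression}.

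There is no genuine obstacle in this argument; the only point worth flagging is that the independence assumption is essential for the variance to collapse to $(w^{\circ 2})^\top \nu$. Without it, $\mathrm{Var}[Z]$ would contain off-diagonal terms $\sum_{i\ne j} w_i w_j \mathrm{Cov}(\rand{h}_i, \rand{h}_j)$, and $\overline{\mathcal{L}}_n(\phi)$ could not be written in terms of the per-unit marginals $\{\mu_i, \nu_i\}$ alone. This mirrors the role mean-field plays in the classification case, where it reduces the joint distribution over logits to a product of per-class Gaussians before the Jensen step.
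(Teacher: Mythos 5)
Your proof is correct and follows essentially the same route as the paper's: the paper expands $\bigl(y - \sum_i w_i \rand{h}_i\bigr)^2$ term by term and uses pairwise independence to factor the cross terms $\sum_{j}\sum_{k \neq j} w_j w_k \,\mu_j \mu_k$, which is exactly the computation your bias--variance decomposition $\E{(y - w^{\top}\rand{h})^2} = (y - w^{\top}\mu)^2 + \V{w^{\top}\rand{h}}$ packages in one step. Your observation that no Jensen step is needed (unlike the classification case) and that independence is what collapses the variance to $(w^{\circ 2})^{\top}\nu$ matches the paper's argument.
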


\begin{proof}
The Equation~\eqref{eq:alternative-ELBO-regression} is obtained by plugging $\Prob[\rand{y}|\rand{h}; w, s]$ into Equation~\eqref{eq:bayes-learning-ELBO-alternative}.   
\begin{align}
\overline{\mathcal{L}}_{n}(\phi) 
& = \sum_{h_1} \cdots \sum_{h_I} \log\left( \Prob[y|h_1, \cdots, h_I; w, s] \right) \left( \prod_{i = 1}^{I} \Prob[h_i|x_n] \right) 
\\
& = - \sum_{h_1} \cdots \sum_{h_I} \left( \frac{\left(y - \sum_{i = 1}^{I} w_i h_i \right)^2}{2s}+ \frac{\log(2 \pi s)}{2} \right) \left( \prod_{i = 1}^{I} \Prob[h_i|x_n] \right)
\\ 
& = - \frac{1}{2s} \sum_{h_1} \cdots \sum_{h_I} \left( y - \sum_{i = 1}^{I} w_i h_i\right)^2 \left( \prod_{i = 1}^{I} \Prob[h_i|x_n] \right) - \frac{\log(2 \pi s)}{2}
\end{align}
where the long summation in the first term can be further simplified with notations of $\mu$ and $\nu$:
\begin{align}
& \sum_{h_1} \cdots \sum_{h_I} \left( y - \sum_{i = 1}^{I} w_i h_i\right)^2 \left( \prod_{i = 1}^{I} \Prob[h_i|x_n] \right) 
\\
~ = & \sum_{h_1} \cdots \sum_{h_I} \left(y^2 - 2 y \sum_{i = 1}^{I} w_i h_i + \sum_{i = 1}^{I} w_i^2 h_i^2 + \sum_{j = 1}^{I} \sum_{k \neq j} w_{j} w_{k} h_j h_k \right) \left( \prod_{i = 1}^{I} \Prob[h_i|x_n] \right) 
\\
~ & 
\begin{aligned}
= & y^2 - 2 y \sum_{i = 1}^{I} w_{i} \left( \sum_{h_i} h_i \Prob[h_i|x] \right) 
+ \sum_{i = 1}^{I} w_i^2 \left( \sum_{h_i} h_i^2 \Prob[h_i|x_n] \right) 
\\
& \qquad \quad + \sum_{j = 1}^{I} \sum_{k \neq j} w_j w_k \left( \sum_{h_j} h_j \Prob[h_j|x_n] \right) 
\left( \sum_{h_k} h_k \Prob[h_k|x_n] \right)
\end{aligned} \\
~ = & y^2 - 2y \sum_{i = 1}^{I} w_i \mu_i + \sum_{i = 1}^{I} w_i^2 (\mu_i^2 + \nu_i) + 
\sum_{j = 1}^{I} \sum_{k \neq j} w_j w_k \mu_j \mu_k 
\\
~ = & y^2 - 2y \sum_{i = 1}^{I} w_i \mu_i + \sum_{i = 1}^{I} w_i^2 \nu_i + 
\left( \sum_{j = 1}^{I} w_j \mu_j \right) \left( \sum_{k = 1}^{I} w_k \mu_k \right) 
\\
~ = & y^2 - 2y ~ w^{\top} \mu + (w^{\circ 2})^{\top} \nu + \left( w^{\top} \mu \right)^2
\\
~ = & (y - w^{\top} \mu)^2 + (w^{\circ 2})^{\top} \nu
\end{align}
where $w^{\circ 2}$ denotes element-wise square, i.e. $w^{\circ 2}= [w_1^2, \cdots, w_I^2]^{\top}$.
\end{proof}

\paragraph{Derivatives of $\overline{\mathcal{L}}_n(\phi)$ in Equation~\eqref{eq:alternative-ELBO-regression}}
It is not difficult to show that the gradient of $\overline{\mathcal{L}}_n(\phi)$ can be backpropagated through the last layer. by computing derivatives of $\overline{\mathcal{L}}_n(\phi)$ w.r.t.\ $\mu$ and $\nu$:
\begin{subequations}
\begin{align}
\derivative{\overline{\mathcal{L}}_n(\phi)}{\mu} 
& = - \frac{(y - w^{\top} \mu) w}{s} 
\label{eq:layer-gaussian-derivative-mean} \\
\derivative{\overline{\mathcal{L}}_n(\phi)}{\nu} 
& = - \frac{w^{\circ 2}}{2 s} 
\label{eq:layer-gaussian-derivative-variance}
\end{align}
\end{subequations}
Furthermore, the parameters $\{w, s\}$ can be updated along with other parameters with their gradients:
\begin{subequations}
\begin{align}
\derivative{\overline{\mathcal{L}}_n(\phi)}{w}
& = - \frac{(y - w^{\top} \mu)\mu + (w \circ \nu)}{s}
\label{eq:layer-gaussian-derivative-vector} \\
\derivative{\overline{\mathcal{L}}_n(\phi)}{s}
& = - \frac{1}{2s} + \frac{(y - w^{\top} \mu)^2 + (w^{\circ 2})^{\top} \nu}{2s^2}
\label{eq:layer-gaussian-derivative-scale}
\end{align}
\end{subequations}
\paragraph{Prediction with Gaussian Layer}
Once we determine the parameters for the last layer, in principle we can compute the predictive distribution $\Prob[\rand{y}| x]$ for the output $\rand{y}$ given the input $x$ according to 
\begin{align}
\Prob[\rand{y}|x] 
& = \sum_{h} \Prob[\rand{y}|h; w, s] \Prob[h|\rand{x}] 
= \sum_{h_1} \cdots \sum_{h_I} \Prob[\rand{y}|h; w, s] \left( \prod_{i = 1}^{I} \Prob[h_i|x] \right)
\nonumber \\
& = \sum_{h_1} \cdots \sum_{h_I} \frac{1}{\sqrt{2\pi s}} \exp\left( -\frac{\left(\rand{y} - \sum_{i = 1}^{I} w_i h_i \right)^2}{2s} \right) \left( \prod_{i = 1}^{I} \Prob[h_i| x] \right)
\label{eq:layer-guassian-prediction}
\end{align}
Unfortunately, exact computation of the equation above for arbitrary output value $y$ is intractable in general. However, the central moments of the predictive distribution $\Prob[\rand{y}|x]$ are easily evaluated.
Consider we interpret the prediction as $\rand{y} =  w^{\top} \rand{h} + \rand{\epsilon}$, where $\rand{\epsilon} \sim \mathcal{N}(0, s)$, its mean and variance can be easily computed as
\begin{subequations}
\begin{gather}
\E{\rand{y}|x} = w^{\top} \E{\rand{h}} = w^{\top} \mu
\label{eq:layer-guassian-prediction-mean} \\
\V{\rand{y}|x} = (w^{\circ 2})^{\top} \V{\rand{h}} + \V{\rand{\epsilon}} = (w^{\circ 2})^{\top} \nu + s
\label{eq:layer-gaussian-prediction-variance}
\end{gather}
\end{subequations}
Furthermore, if we denote the (normalized) {\em skewness} and {\em kurtosis} of $\rand{h}_i$ as $\gamma_i$ and $\kappa_i$:
\begin{subequations}
\begin{align}
\gamma_i 
& = \E{(\rand{h}_i - \mu_i)^3|x} / \nu_i^{3/2}
= \sum_{h_i} (h_i - \mu_i)^3 \Prob[h_i|x] / \nu_i^{3/2}
\label{eq:layer-gaussian-skewness} \\
\kappa_i 
& = \E{(\rand{h}_i - \mu_i)^4|x} / \nu_i^{2}
~~~ = \sum_{h_i} (h_i - \mu_i)^4 \Prob[h_i|x] / \nu_i^2
\label{eq:layer-gaussian-kurtosis}
\end{align}
\end{subequations}
Then the (normalized) skewness and kurtosis of the prediction $\rand{y}$ are also easily computed with the vectors of $\gamma = [\gamma_1, \cdots, \gamma_I]^{\top} \in \R^{I}$ and $\kappa = [\kappa_1, \cdots, \kappa_I] \in \R^{I}$. 
\begin{subequations}
\begin{align}
\gamma[\rand{y}|x] 
& = \frac{\E{(\rand{y} - w^{\top} \mu)^3|x}}{\V{\rand{y}|x}^{3/2}} 
= \frac{(w^{\circ 3})^{\top}(\gamma \circ \nu^{\circ 3/2})}{\left[(w^{\circ 2})^{\top} \nu + s\right]^{3/2}} 
\label{eq:layer-gaussian-prediction-skewness} \\
\kappa[\rand{y}|x] 
& = \frac{\E{(\rand{y} - w^{\top} \mu)^4|x}}{\V{\rand{y}|x}^2} 
= \frac{(w^{\circ 4})^{\top} (\kappa \circ \nu^{\circ 2}) + s (w^{\circ 2})^{\top} \nu}{\left[(w^{\circ 2})^{\top} \nu + s\right]^2} 
\label{eq:layer-gaussian-predicton-kurtosis}
\end{align}
\end{subequations}


\section{Probabilistic Propagation in {\BQNLONG}s}
\label{app:bayes-quantized-net}

In this section, we present fast(er) algorithms for sampling-free probabilistic propagation 
(i.e.\ evaluating Equation~\eqref{eq:bayes-prediction-iterative-mean-field}). 
According to Section~\ref{sec:bayes-quantized-net}, we divide this section into three parts, 
each part for a specific range of fan-in numbers $E$.

\subsection{Small Fan-in Layers: Direct Tensor Contraction}
\label{appendix:small-fan-in}

If $E$ is small, tensor contraction in 
Equation~\eqref{eq:bayes-prediction-iterative-mean-field} is immediately applicable.
Representative layers of small $E$ are {\em shortcut layer (a.k.a.\ skip-connection)} 
and what we name as {\em depth-wise layers}.

\paragraph{Shortcut Layer} 
With a skip connection, the output $\rand{h}^{(l+1)}$ is an addition of 
two previous layers $\rand{h}^{(l)}$ and $\rand{h}^{(m)}$. Therefore
and the distribution of $\rand{h}^{(l+1)}$ can be directly computed as
\begin{equation}
P(\rand{h}^{(l+1)}_i; \psi^{(l+1)}_i) 
= \sum_{h^{(l)}_i, h^{(m)}_i}
\delta[\rand{h}^{(l+1)}_i = h^{(l)}_i + h^{(m)}_i] 
~ P(h^{(l)}_i; \psi^{(l)}_i) ~ P(h^{(m)}_i; \psi^{(m)}_i)
\end{equation}

\paragraph{Depth-wise Layers}
In a depth-wise layer, each output unit $\rand{h}^{(l+1)}_i$ is a transformation 
(parameterized by $\rand{\theta}^{(l)}_i$) of its corresponding input $\rand{h}^{(l)}_i$, i.e.\
\begin{equation}
P(\rand{h}^{(l+1)}_i; \psi^{(l+1)}_i) = \sum_{h^{(l)}_i, \theta^{(l)}_i} \Prob[\rand{h}^{(l+1)}_i|  h^{(l)}_i, \theta^{(l)}_i] ~ Q(\theta^{(l)}_i; \phi^{(m)}_i) ~ P(h^{(l)}_i; \psi^{(l)}_i)
\end{equation}
Depth-wise layers include {\em dropout layers} (where $\rand{\theta}^{(l)}$ are dropout rates), 
{\em nonlinear layers} (where $\rand{\theta}^{(l)}$ are threshold values) or 
{\em element-wise product layers} (where $\rand{\theta}^{(l)}$ are the weights). 
For both shortcut and depth-wise layers, the time complexity is $O(JD^2)$ since $E <= 2$.

\subsection{Medium Fan-in Layers: Discrete Fourier Transform}
\label{subsec:dft}

In {\NNlong}s, representative layers with medium fan-in number $E$ are pooling layers, 
where each output unit depends on a medium number of input units. 
Typically, the special structure of pooling layers allows for faster algorithm 
than computing Equation~\eqref{eq:bayes-prediction-iterative-mean-field} directly. 

\paragraph{Max and Probabilistic Pooling} 
For each output, {\bf (1)} a max pooling layer 
picks the maximum value from corresponding inputs, 
i.e.\ $\rand{h}^{(l+1)}_j = \max_{i \in \mathcal{I}(j)} \rand{h}^{(l)}_i$,
while {\bf (2)} a probabilistic pooling layer 
selects the value the inputs following a categorical distribution, 
i.e.\ $\Prob[\rand{h}^{(l+1)}_j  = \rand{h}^{(l)}_i] = \theta_i$. 
For both cases, the predictive distribution of $\rand{h}^{(l+1)}_j$ can be computed as
\begin{align}
\text{Max: } ~ & ~ P(\rand{h}^{(l+1)}_j \leq q) = \prod_{i \in \mathcal{I}(j)} P(\rand{h}^{(l)}_i \leq q)
\label{eq:layer-max-pooling} \\
\text{Prob: } ~  & ~ P(\rand{h}^{(l+1)}_j = q) = \sum_{i \in \mathcal{I}(j)} \theta_i P(\rand{h}^{(l)}_i = q)
\label{eq:layer-probabilistic-pooling}
\end{align}
where $P(\rand{h}^{(l)}_i \leq q)$ is the {\em culminative mass function} of $P$. Complexities for both layers are $O(ID)$. 

\paragraph{Average Pooling and Depth-wise Convolutional Layer}
Both layers require additions of a medium number of inputs. 
We prove a {\em convolution theorem} for discrete random variables 
and show that {\em {\DFTlong} (\DFT)} (with {\em {\FFTlong} (\FFT)}) 
can accelerate the additive computation.
We also derive its backpropagation rule for compatibility of gradient-based learning. 

\begin{theorem}
[\textbf{Fast summation via discrete Fourier transform}] 
\label{thm:dft}
Suppose $\rand{u}_i$ take values in $\{b_i,b_i+1,\ldots, B_i\}$ between integers $b_i$ and $B_i$, 
then the summation $\rand{v}=\sum_{i=1}^E \rand{u}_i$ takes values between $b$ and $B$, 
where $b = \sum_{i = 1}^{E} b_i$ and $B = \sum_{i = 1}^{E} B_i$. 
Let $C^{\rand{v}}$, $C^{\rand{u}_i}$ be the discrete Fourier transforms 
of $P^{\rand{v}}$, $P^{\rand{u}_i}$ respectively, i.e.\
\begin{subequations}
\begin{align}
C^{\rand{v}}(f) & = \sum_{v = b}^{B} P^{\rand{v}}(v) \exp(-\im 2\pi (v - b) f/(B - b + 1)) \\
C^{\rand{u}_i}(f) & = \sum_{u_i = b_i}^{B_i} P^{\rand{u_i}}(u_i) \exp( -\im 2 \pi (u_i - b_i) f / (B_i - b_i + 1))
\end{align}
\end{subequations}
Then $C^{\rand{v}}(f)$ is the element-wise product of all Fourier transforms $C^{\rand{u}_i}(f)$, i.e.\
\begin{equation}
C^{\rand{v}}(f) = \prod_{i = 1}^{E} C^{\rand{u}_i}(f), \forall f
\end{equation}
\end{theorem}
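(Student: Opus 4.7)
The plan is to reduce the claim to the classical convolution theorem for the DFT, with a careful bookkeeping of the offsets $b_i$. The core observation is that because the $\rand{u}_i$ are independent (which holds under the mean-field factorization used throughout Section~\ref{subsec:mean-field}), the PMF of their sum is the convolution of their individual PMFs. Once that is in hand, applying the DFT turns the convolution into a pointwise product, which is exactly the desired identity.

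First, I would reduce to nonnegative support by changing variables $\rand{u}_i' = \rand{u}_i - b_i \in \{0, 1, \ldots, B_i - b_i\}$ and $\rand{v}' = \rand{v} - b = \sum_i \rand{u}_i' \in \{0, 1, \ldots, B - b\}$. Note the shift identity $b = \sum_i b_i$ is exactly what makes the exponential factor $\exp(-\im 2\pi(v - b)f / (B - b + 1))$ factor as $\prod_i \exp(-\im 2\pi(u_i - b_i) f / (B - b + 1))$ when $v = \sum_i u_i$. Writing $N_i = B_i - b_i + 1$ and $N = B - b + 1$, the quantities $C^{\rand{u}_i}(f)$ and $C^{\rand{v}}(f)$ are (up to zero-padding each $P^{\rand{u}_i'}$ to length $N$) the DFT coefficients evaluated at the common frequency grid $f \in \{0, 1, \ldots, N-1\}$. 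This zero-padding is legitimate because the support of $\rand{u}_i'$ is contained in $\{0, \ldots, N-1\}$ and the DTFT of a finitely supported sequence is invariant under zero-extension.

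Second, I would expand the PMF of the sum: by independence, $P^{\rand{v}'}(v') = \sum_{u_1' + \cdots + u_E' = v'} \prod_{i=1}^E P^{\rand{u}_i'}(u_i')$, i.e.\ the $E$-fold convolution of the shifted PMFs. Plugging this into the definition of $C^{\rand{v}}$ gives a nested sum $\sum_{v'} \sum_{u_1' + \cdots + u_E' = v'} \prod_i P^{\rand{u}_i'}(u_i') \, e^{-\im 2\pi v' f / N}$. Swapping the order of summation and using the factorization of the exponential established above, this becomes $\prod_{i=1}^E \bigl(\sum_{u_i'} P^{\rand{u}_i'}(u_i') \, e^{-\im 2\pi u_i' f / N}\bigr) = \prod_{i=1}^E C^{\rand{u}_i}(f)$.

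The one subtle point, and what I expect to be the only real obstacle, is reconciling the theorem's notation: the denominator in the exponent of $C^{\rand{u}_i}(f)$ is written as $B_i - b_i + 1 = N_i$, not the common length $N$, so the identity should be read as an equality of DTFTs (i.e.\ trigonometric polynomials in the continuous variable $f$) or equivalently the DFTs computed after zero-padding each $P^{\rand{u}_i'}$ to length $N$. Under that reading, the argument above goes through verbatim; without it, the pointwise product is not even well defined since the sequences have different lengths. I would state this zero-padding convention explicitly at the start of the proof and then execute the two-line computation sketched above.
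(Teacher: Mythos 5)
Your proposal is correct and is essentially the paper's own argument: both reduce to the classical convolution theorem, using independence to write the PMF of the sum as a convolution of the individual PMFs and the offset identity $b = \sum_i b_i$ to factor the complex exponential, the only real difference being that you handle all $E$ summands in a single exchange of summation while the paper proves the two-variable case and extends by induction. The subtlety you flag is genuine and correctly resolved: the paper's proof silently replaces the per-variable denominators $B_i - b_i + 1$ appearing in the theorem statement by a common resolution $R = B - b + 1$ (equivalent to your zero-padding convention), without which the pointwise product in the statement is not even well defined.
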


\begin{proof}
We only prove the theorem for two discrete random variable, 
and the extension to multiple variables can be proved using induction.
Now consider $\rand{u}_1 \in [b_1, B_1]$, $\rand{u}_2 \in [b_2, B_2]$ 
and their sum $\rand{v} = \rand{u}_1 + \rand{u}_2 \in [b, B]$, 
where $b = b_1 + b_2$ and $B = B_1 + B_2$. 
Denote the probability vectors of $\rand{u}_1$, $\rand{u}_2$ and 
$\rand{v}$ as $P_1 \in \triangle^{B_1 - b_1}, P_2 \in \triangle^{B_2 - b_2}$ and $P \in \triangle^{B - b}$ respectively, 
then the entries in $P$ are computed with $P_1$ and $P_2$ by standard convolution as follows: 
\begin{align}
P(v) = 
\sum_{u_1 = b_1}^{B_1} P_1(u_1) P_2(v - u_1)
=  \sum_{u_2 = b_2}^{B_2} P_1(v - u_2) P_2(u_2), 
~\forall v \in \{b, \cdots, B\}
\label{eq:convolution}
\end{align}
The relation above is usually denoted as $P = P_1 \ast P_2$, where $\ast$ is the symbol for convolution. Now define the {\em characteristic functions} $C$, $C_1$, and $C_2$ as the {discrete Fourier transform (DFT)} of the probability vectors $P$, $P_1$ and $P_2$ respectively:
\begin{subequations}
\begin{align}
C(f) & = \sum_{v = b}^{B} P(v) \exp\left( -\im \frac{2 \pi}{R} (v - b) f \right), 
~ f \in [R]
\label{eq:characteristic-function} \\
C_i(f) & = \sum_{u_i = b_i}^{B_i} P_i(u_i) \exp\left( -\im \frac{2 \pi}{R} (u_i - b_i) f \right), f \in [R]
\label{eq:characteristic-function-i}
\end{align} 
\end{subequations}
\begin{subequations}
where $R$ controls the {\em resolution} of the Fourier transform (typically chosen as $R = B - b + 1$, i.e.\ the range of possible values). In this case, the characteristic functions are complex vectors of same length $R$, i.e. $C, C_1, C_2 \in \C^{R}$, and we denote the (functional) mappings as $C = \FT(P)$ and $C_i = \FT_i(P_i)$. Given a characteristic function, its original probability vector can be recovered by {\em {\IDFTlong} (\IDFT)}:
\begin{align}
P(v) & = \frac{1}{R} \sum_{f = 0}^{R - 1} C(f) \exp \left( \im \frac{2 \pi}{R} (v - b) f \right), 
~ \forall v \in \{b, \cdots, B\} 
\label{eq:characteristic-function-inverse} \\ 
P_i(u_i) & 
= \frac{1}{R} \sum_{f = 0}^{R - 1} C_i(f) \exp \left( \im \frac{2 \pi}{R} (u_i - b_i) f \right), 
~ \forall u_i \in \{b_i, \cdots, B_i \}
\label{eq:characteristic-function-inverse-i}
\end{align}
\end{subequations}
which we denote the inverse mapping as $P = \FT^{-1}(C)$ and $P_i = \FT^{-1}_i(C_i)$. Now we plug the convolution in Equation~\eqref{eq:convolution} into the characteristic function $C(f)$ in~\eqref{eq:characteristic-function} and rearrange accordingly:
\begin{align}
C(f) 
& = \sum_{v = b}^{B} \left( \sum_{u_1 = b_1}^{B_1} P_1(u_1) P_2(v - u_1) \right) \exp\left( -\im \frac{2 \pi}{R} (v - b) f \right) \\
\text{(Let $u_2 = v - u_1$)} ~
& = \sum_{u_1 = b_1}^{B_1} \sum_{u_2 = b_2}^{B_2} P_1(u_1) P_2(u_2) \exp \left( -\im \frac{2 \pi}{R} (u_1 + u_2 - b) f \right) \\
\text{(Since $b = b_1 + b_2$)} ~
& =
\begin{aligned}
\left[ \sum_{u_1 = b_1}^{B_1} P_1(u_1) \exp\left( -\im \frac{2 \pi}{R} (u_1 - b_1) f \right) \right] \\
\left[ \sum_{u_2 = b_2}^{B_2} P_2(u_2) \exp\left( -\im \frac{2 \pi}{R} (u_2 - b_2) f \right) \right]
\end{aligned}
\\
& = C_1(f) \cdot C_2(f)
\label{eq:convolution-theorem}
\end{align}
The equation above can therefore be written as $C = C_1 \circ C_2$, where we use $\circ$ to denote element-wise product. 
Thus, we have shown summation of discrete random variables corresponds to element-wise product of their characteristic functions.
\end{proof}

With the theorem, addition of $E$ discrete random variables 
can be computed efficiently as follows
\begin{align}
P^{\rand{v}} & = P^{\rand{u}_1} \ast P^{\rand{u}_2} \ast \cdots \ast P^{\rand{u}_{E}}
\label{eq:convolution-multiple}
\\
& = \FT^{-1}\left( \FT\left(P^{\rand{u}_1}\right) \circ 
\FT\left(P^{\rand{u}_2}\right) \circ \cdots \circ \FT\left(P^{\rand{u}_E}\right) \right)
\label{eq:convolution-theorem-multiple}
\end{align}
where $\FT$ denotes the Fourier transforms in 
Equations~\eqref{eq:characteristic-function-inverse} and~\eqref{eq:characteristic-function-inverse-i}.
If {\FFT} is used in computing all {\DFT}, the computational complexity 
of Equation~\eqref{eq:convolution-theorem-multiple} is $O(E R\log R) = O(E^2 D\log(ED))$ (since $R = O(ED)$), 
compared to $O(D^E)$ with direct tensor contraction.

\paragraph{Backpropagation}
When {\FFTlong} is used to accelerate additions in {\BQNLong}, 
we need to derive the corresponding backpropagation rule, i.e.\ 
equations that relate ${\partial \mathcal{L}}/{\partial P}$ to $\{{\partial \mathcal{L}}/{\partial P_i} \}_{i = 1}^{I}$.
For this purpose, we break the computation in Equation~\eqref{eq:convolution-theorem-multiple} into three steps, and compute the derivative for each of these steps.  
\begin{subequations}
\begin{align}
C_i = \FT_i(P_i) 
& \Longrightarrow 
\gradient{P_i} = R \cdot \FT^{-1}_i \left( \gradient{C_i} \right)
\label{eq:layer-dft-step-1} \\
C   = C_1 \circ \cdots \circ C_I 
& \Longrightarrow 
\gradient{C_i} = \frac{C}{C_i} \circ \gradient{C} 
\label{eq:layer-dft-step-2} \\
P   = \FT^{-1}(C) 
& \Longrightarrow
\gradient{C} = R^{-1} \cdot \FT \left( \gradient{P} \right) 
\label{eq:layer-dft-step-3}
\end{align}
\end{subequations}
where in~\eqref{eq:layer-dft-step-2} we use $C / C_i$ to denote element-wise division.
Since $P_i$ lies into real domain, 
we need to project the gradients back to real number in~\eqref{eq:layer-dft-step-3}. 
Putting all steps together:
\begin{equation}
\gradient{P_i} = \Re \left\{\FT^{-1}_i \left( \frac{C}{C_i} \circ \FT \left( \gradient{P} \right) \right)  \right\}, \forall i \in [I]
\label{eq:layer-dft-derivative}
\end{equation}

\subsection{Large Fan-in Layers: Lyapunov Central Limit Approximation}
\label{subsec:linear-approximation}

In this part, we show that {\em Lyapunov central limit approximation (Lyapunov CLT)}
accelerates probabilistic propagation in linear layers. 
For simplicity, we consider {\em fully-connected layer} in the derivations, 
but the results can be easily extended to types of {\em convolutional layers}. 
We conclude this part by deriving the corresponding backpropagation rules for the algorithm.  

\paragraph{Linear Layers} 
Linear layers (followed by a nonlinear transformations $\sigma(\cdot)$) 
are the most important building blocks in {\NNlong}s, 
which include {\em fully-connected} and {\em convolutional layers}.
A linear layer is parameterized by a set of vectors $\rand{\theta}^{(l)}$'s, 
and maps $\rand{h}^{(l)} \in \R^{I}$ to $\rand{h}^{(l+1)} \in \R^{J}$ as
\begin{equation}
\myvector{h}^{(l+1)}_j 
= \sigma \left( \sum_{i \in \mathcal{I}(j)} \rand{\theta}^{(l)}_{j i} \cdot \rand{h}^{(l)}_i \right) 
= \sigma\left(\sum_{i \in \mathcal{I}(j)} \rand{u}^{(l)}_{j i} \right) 
= \sigma\left(\rand{v}^{(l+1)}_{j} \right)
\label{eq:layer-linear}
\end{equation}
where $\rand{u}^{(l)}_{j i} = \rand{\theta}^{(l)}_{j i} \cdot \rand{h}^{(l)}_{i}$ 
and $\rand{v}^{(l+1)}_j = \sum_{i \in \mathcal{I}(j)} \rand{u}^{(l)}_{j i}$.
The key difficulty here is to compute the distribution of $\rand{v}^{(l+1)}_j$ 
from the ones of $\{ \rand{u}^{(l)}_{j i} \}_{i = 1}^{I}$, i.e.\
{\em addition of a large number of random variables}. 

\begin{theorem}[\textbf{Fast summation via Lyapunov Central Limit Theorem}]
\label{thm:clt}
Let $\rand{v}_j = \sigma(\tilde{\rand{v}}_j) = 
\sigma(\sum_{i \in \mathcal{I}(j)} \rand{\theta}_{ji} \rand{u}_i)$ 
be an activation of a linear layer followed by nonlinearity $\sigma$.
Suppose both inputs $\{\rand{u}_i\}_{i \in \mathcal{I}}$ and 
parameters $\{\rand{\theta}_{ji}\}_{i \in \mathcal{I}(j)}$ have bounded variance, 
then for sufficiently large $|\mathcal{I}(j)|$, the distribution of $\tilde{\rand{v}}_j$ 
converges to a Gaussian distribution $\mathcal{N}(\tilde{\mu}_j, \tilde{\nu}_j)$ 
with mean and variance as
\begin{subequations}
\begin{gather}
\tilde{\mu}_j = \sum_{i = 1}^{I} m_{ji} \mu_{i} \\
\tilde{\nu}_j  = \sum_{i = 1}^{I} m_{j i}^2 \nu_{i} + v_{j i} \mu_i^2  + v_{j i} \nu_{i} 
\end{gather}
\end{subequations} 
where $m_{ji} = \E{\theta_{ji}}$, $v_{ji} = \V{\theta_{ji}}$ and $\mu_i = \E{\rand{u}_i}$, $\nu_i = \V{\rand{u}_i}$. 
And if the nonlinear transform $\sigma$ is a sign function, each activation $\rand{v}_{j}$ follows a Bernoulli distribution  
$P(\rand{v}_j = 1)
= \Phi(\Tilde{\mu}_j / \sqrt{\Tilde{\nu}_j})$, 
where $\Phi$ is the culminative probability function of a standard Gaussian distribution $\mathcal{N}(0, 1)$.
\end{theorem}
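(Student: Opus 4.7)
My plan is to reduce the statement to a direct application of the Lyapunov CLT applied to the independent summands $\rand{X}_{ji} := \rand{\theta}_{ji} \rand{u}_i$ in $\tilde{\rand{v}}_j = \sum_{i \in \mathcal{I}(j)} \rand{X}_{ji}$. Under the mean-field approximation of Section~\ref{subsec:mean-field} the parameters $\{\rand{\theta}_{ji}\}_i$ and the inputs $\{\rand{u}_i\}_i$ are mutually independent, so the summands $\rand{X}_{ji}$ are independent across $i$. The first step is to compute the first two moments of each summand: by independence of $\rand{\theta}_{ji}$ and $\rand{u}_i$,
\begin{equation*}
\E{\rand{X}_{ji}} = m_{ji}\mu_i, \qquad \E{\rand{X}_{ji}^2} = \E{\rand{\theta}_{ji}^2}\E{\rand{u}_i^2} = (m_{ji}^2 + v_{ji})(\mu_i^2 + \nu_i),
\end{equation*}
and hence $\V{\rand{X}_{ji}} = m_{ji}^2 \nu_i + v_{ji}\mu_i^2 + v_{ji}\nu_i$. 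Summing over $i \in \mathcal{I}(j)$ using independence yields the claimed formulas for $\tilde{\mu}_j$ and $\tilde{\nu}_j$.

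The second step is to invoke the Lyapunov CLT to conclude $\tilde{\rand{v}}_j \Rightarrow \mathcal{N}(\tilde{\mu}_j, \tilde{\nu}_j)$. I must verify Lyapunov's condition, namely that for some $\delta > 0$,
\begin{equation*}
\frac{1}{\tilde{\nu}_j^{1 + \delta/2}} \sum_{i \in \mathcal{I}(j)} \E{\lvert \rand{X}_{ji} - m_{ji}\mu_i\rvert^{2+\delta}} \longrightarrow 0
\end{equation*}
as $|\mathcal{I}(j)| \to \infty$. Since both $\rand{\theta}_{ji}$ and $\rand{u}_i$ are supported on the finite quantized set $\Q$, each summand $\rand{X}_{ji}$ is almost surely bounded by $M := \max_{q,q' \in \Q}|qq'|$, so the $(2+\delta)$-th absolute moments are uniformly bounded by $(2M)^{2+\delta}$. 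Under the mild non-degeneracy assumption that $\tilde{\nu}_j$ grows linearly in $|\mathcal{I}(j)|$ (generically true whenever a constant fraction of the summands have strictly positive variance), the numerator grows linearly while the denominator grows like $|\mathcal{I}(j)|^{1+\delta/2}$, giving the required decay. Choosing $\delta = 1$ or $\delta = 2$ is enough.

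The third step is the Bernoulli corollary for $\sigma = \mathrm{sign}$. Writing $\rand{v}_j = 1$ iff $\tilde{\rand{v}}_j > 0$ and standardizing by $\rand{z} = (\tilde{\rand{v}}_j - \tilde{\mu}_j)/\sqrt{\tilde{\nu}_j}$, the CLT gives $\rand{z} \Rightarrow \mathcal{N}(0,1)$, so
\begin{equation*}
P(\rand{v}_j = 1) = P(\rand{z} > -\tilde{\mu}_j/\sqrt{\tilde{\nu}_j}) = 1 - \Phi(-\tilde{\mu}_j/\sqrt{\tilde{\nu}_j}) = \Phi(\tilde{\mu}_j/\sqrt{\tilde{\nu}_j}),
\end{equation*}
using the symmetry $1 - \Phi(-t) = \Phi(t)$.

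The main obstacle, such as it is, is purely one of rigor in the Lyapunov condition: in finite practice $|\mathcal{I}(j)|$ is only moderately large (a few hundred or a few thousand for typical fully connected or convolutional layers), and the CLT is invoked as a Gaussian approximation rather than as an asymptotic limit. A cleaner finite-sample version could be stated via a Berry--Esseen bound, giving an explicit $O(|\mathcal{I}(j)|^{-1/2})$ Kolmogorov distance between the law of $\tilde{\rand{v}}_j$ and $\mathcal{N}(\tilde{\mu}_j,\tilde{\nu}_j)$, but for the stated theorem the asymptotic convergence suffices and the moment/boundedness arguments above deliver it.
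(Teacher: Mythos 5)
Your proposal is correct and takes essentially the same route as the paper: the paper's proof likewise decomposes $\tilde{\rand{v}}_j$ into the independent products $\rand{\theta}_{ji}\rand{u}_i$, computes $\tilde{\mu}_j$ and $\tilde{\nu}_j$ from $\E{\rand{\theta}_{ji}\rand{u}_i}=m_{ji}\mu_i$ and $\V{\rand{\theta}_{ji}\rand{u}_i}=(m_{ji}^2+v_{ji})(\mu_i^2+\nu_i)-m_{ji}^2\mu_i^2$, and then simply invokes the Lyapunov CLT, with the sign/Bernoulli claim read off from the Gaussian approximation exactly as in your third step. You in fact go a bit further than the paper, which asserts the CLT without checking its hypothesis, whereas you verify Lyapunov's condition explicitly using the almost-sure boundedness of the quantized summands and a linear-growth (non-degeneracy) assumption on $\tilde{\nu}_j$ --- a worthwhile addition, since the theorem's stated ``bounded variance'' assumption alone would not suffice.
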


\begin{proof}
The proof follows directly from the definitions of mean and variance:
\begin{align}
\Tilde{\mu}_j
& = \E{\sum_{i = 1}^{I} \rand{\theta}_{j i} ~ \rand{h}_i}
= \sum_{i = 1}^{I} \E{\rand{\theta}_{j i} ~ \rand{h}_i} 
\\ 
& = \sum_{i = 1}^{I} \E{\rand{\theta}_{j i}} \E{\rand{h}_i} = \sum_{i = 1}^{I}  m_{j i} \mu_{i}
\label{eq:layer-linear-mean-appendix}
\end{align}
\begin{align}
\Tilde{\nu}_j 
& = \V{\sum_{i = 1}^{I} \rand{\theta}_{j i} ~ \rand{h}_i} 
= \sum_{i = 1}^{I} \V{\rand{\theta}_{j i} ~ \rand{h}_i} 
\\
& = \sum_{i = 1}^{I} \left(\E{\rand{\theta}_{j i}^2} \E{\rand{h}_i^2} 
- \E{\rand{\theta}_{j i}^2} \E{\rand{h}_i^2} \right)
\\
& = \sum_{i = 1}^{I} \left[\left(m_{j i}^2 + v_{i} \right) 
\left( \mu_i^2 + \nu_{j i} \right) - m_{j i}^2 \mu_{i}^2 \right]
\\
& = \sum_{i = 1}^{I} \left(m_{j i}^2 \nu_i + v_{j i} \mu_i^2  + v_{j i} \nu_{i} \right)
\label{eq:layer-linear-variance-appendix}
\end{align}
For fully-connected layers, these two equations can be concisely written in matrix forms:
\begin{subequations}
\begin{gather}
\Tilde{\mu} = M \mu
\label{eq:layer-linear-mean-matrix} \\
\Tilde{\nu} = \left( {M}^{\circ 2} \right) \nu + V \left( {\mu}^{\circ 2} + \nu \right) 
\label{eq:layer-linear-variance-matrix}
\end{gather}
\end{subequations}
where ${M}^{\circ 2}$ and ${\mu}^{\circ 2}$ are element-wise square of $M$ and $\mu$ respectively. 
\end{proof}

\paragraph{Backpropagation}
With matrix forms, the backpropagation rules that relate 
${\partial \mathcal{L}} / {\partial \Tilde{\psi}^{(l+1)}} = \{{\partial \mathcal{L}}/{\partial \Tilde{\mu}}, 
{\partial \mathcal{L}}/{\partial \Tilde{\nu}}\}$ to 
${\partial \mathcal{L}}/{\partial \phi^{(l)}} = \{{\partial \mathcal{L}}/{\partial M}, {\partial \mathcal{L}}/{\partial V}\}$ and 
${\partial \mathcal{L}}/{\partial \psi^{(l)}} = \{{\partial \mathcal{L}}/{\partial \mu}, 
{\partial \mathcal{L}}/{\partial \nu} \}$ can be derived with matrix calculus. 
\begin{subequations}
\begin{gather}
\gradient{M}
=  \left(\gradient{\Tilde{\mu}}\right) \mu^{\top}
+ 2 M \circ\left[ \left( \gradient{\Tilde{\nu}} \right) \nu^{\top} \right]
\label{eq:layer-linear-mean-p-derivative} 
\\
\gradient{V}
= \left( \gradient{\Tilde{\nu}} \right) \left( \mu^{\circ 2} \right)^{\top} 
\label{eq:layer-linear-variance-p-derivative} 
\\
\gradient{\mu} 
= M^{\top} \left( \gradient{\Tilde{\mu}} \right) 
+ 2 \mu \circ \left[ V^{\top} \left( \gradient{\Tilde{\nu}} \right) \right]
\label{eq:layer-linear-mean-derivative} 
\\
\gradient{\nu} 
= \left( M^{\circ 2} \right)^{\top} \left( \gradient{\Tilde{\nu}} \right) 
\label{eq:layer-linear-variance-derivative}
\end{gather}
\end{subequations}
Notice that these equations do not take into account the fact that 
$V$ implicitly defined with $M$ (i.e. $v_{j i}$ is defined upon $m_{j i}$). 
Therefore, we adjust the backpropagation rule for the probabilities:  
denote $Q_{j i}(d) = Q(\rand{\theta}_{j i} = \Q(d); \phi^{(l)}_{j i})$, 
then the backpropagation rule can be written in matrix form as 
\begin{align} 
\gradient{Q(d)} 
& = \left(\gradient{M} + \gradient{V} \cdot \derivative{V}{M} \right) \derivative{M}{Q(d)} + 
\gradient{V} \cdot \derivative{\nu}{Q(d)} \\
& = \Q(d) \cdot \gradient{M} + 2 (\Q(d) - M) \circ \left( \gradient{V} \right)
\label{eq:discrete-derivative}
\end{align}
Lastly, we derive the backpropagation rules for sign activations. 
Let $p_j$ denote the probability that the hidden unit $\rand{v}_j$ 
is activated as $p_j = \Prob[\rand{v}_j = 1| x]$, ${\partial \mathcal{L}}/{p_j}$ relates to $\{{\partial \mathcal{L}}/{\partial \Tilde{\mu}_j}, {\partial \mathcal{L}}/{\partial \Tilde{\nu}_{j}}\}$ as:
\begin{subequations}
\begin{gather}
\derivative{p_j}{\Tilde{\mu}_j} 
= \mathcal{N} \left( \frac{\Tilde{\mu}_j}{\sqrt{\Tilde{\nu}_j}} \right)
\label{eq:sign-derivative-mean} \\
\derivative{p_j}{\Tilde{\nu}_j} 
= - \frac{1}{2 \Tilde{\nu}_j^{3/2}} \cdot \mathcal{N}
\left( \frac{\Tilde{\mu}_j}{\sqrt{\Tilde{\nu}_j}}\right)
\label{eq:sign-derivative-variance}
\end{gather}
\end{subequations}

\section{Supplementary Material for Experiments}
\label{app:supp_exp}

\subsection{Network Archiectures}
\textbf{(1)} For MNIST, Fashion-MNIST and KMNIST, we evaluate our models on both MLP and CNN. 
For MLP, we use a $3$-layers network with $512$ units in the first layer and $256$ units in the second; 
and for CNN, we use a $4$-layers network with two $5 \times 5$ convolutional layers with $64$ channels followed by $2 \times 2$ average pooling, and two fully-connected layers with $1024$ hidden units.
\textbf{(2)} For CIFAR10, we evaluate our models on a smaller version of VGG~\citep{peters2018probabilistic}, 
which consists of $6$ convolutional layers and $2$ fully-connected layers: 
2 x 128C3 -- MP2 -- 2 x 256C3 -- MP2 -- 2 x 512C3 -- MP2 -- 1024FC -- SM10.
 
 \subsection{More Results for Multi-layer Perceptron (MLP)}
 \label{sub:supp_mlp}

\begin{figure}[htbp]
\centering
\begin{subfigure}[b]{0.32\linewidth}
\centering
\begin{tikzpicture}[scale=0.5]

\begin{axis}[
axis background/.style={fill=white!89.80392156862746!black},
axis line style={white},
legend cell align={left},
legend entries={{BNN},{E-QNN},{BQN}},
legend style={draw=white!80.0!black, fill=white!89.80392156862746!black},
tick align=outside,
tick pos=left,
x grid style={white},
xlabel={$\lambda$ level of model uncertainty},
xmajorgrids,
xmin=8.22340159426889e-05, xmax=0.00608020895329329,
xmode=log,
y grid style={white},
ylabel={NLL},
ymajorgrids,
ymin=28.4971178232408, ymax=820.763704774549,
ymode=log
]
\addlegendimage{no markers, green!50.0!black}
\addlegendimage{no markers, red}
\addlegendimage{no markers, blue}
\path [draw=green!50.0!black, fill=green!50.0!black, opacity=0.2] (axis cs:0.0001,45.7)
--(axis cs:0.0001,38.5)
--(axis cs:0.0002,36.8)
--(axis cs:0.0005,36.1)
--(axis cs:0.001,33.2)
--(axis cs:0.002,33.4)
--(axis cs:0.005,33.7)
--(axis cs:0.005,37.3)
--(axis cs:0.005,37.3)
--(axis cs:0.002,35.4)
--(axis cs:0.001,36.6)
--(axis cs:0.0005,39.1)
--(axis cs:0.0002,41.8)
--(axis cs:0.0001,45.7)
--cycle;

\path [draw=red, fill=red, opacity=0.2] (axis cs:0.0001,704.5)
--(axis cs:0.0001,388.7)
--(axis cs:0.0002,388.7)
--(axis cs:0.0005,388.7)
--(axis cs:0.001,388.7)
--(axis cs:0.002,388.7)
--(axis cs:0.005,388.7)
--(axis cs:0.005,704.5)
--(axis cs:0.005,704.5)
--(axis cs:0.002,704.5)
--(axis cs:0.001,704.5)
--(axis cs:0.0005,704.5)
--(axis cs:0.0002,704.5)
--(axis cs:0.0001,704.5)
--cycle;

\path [draw=blue, fill=blue, opacity=0.2] (axis cs:0.0001,133.6)
--(axis cs:0.0001,130.2)
--(axis cs:0.0002,129.5)
--(axis cs:0.0005,131)
--(axis cs:0.001,128.7)
--(axis cs:0.002,126.8)
--(axis cs:0.005,127.2)
--(axis cs:0.005,133)
--(axis cs:0.005,133)
--(axis cs:0.002,133.6)
--(axis cs:0.001,133.7)
--(axis cs:0.0005,136.8)
--(axis cs:0.0002,139.1)
--(axis cs:0.0001,133.6)
--cycle;

\addplot [semithick, green!50.0!black, dashed, mark=x, mark size=3, mark options={solid}]
table [row sep=\\]{%
0.0001	42.1 \\
0.0002	39.3 \\
0.0005	37.6 \\
0.001	34.9 \\
0.002	34.4 \\
0.005	35.5 \\
};
\addplot [semithick, red, mark=x, mark size=3, mark options={solid}]
table [row sep=\\]{%
0.0001	546.6 \\
0.0002	546.6 \\
0.0005	546.6 \\
0.001	546.6 \\
0.002	546.6 \\
0.005	546.6 \\
};
\addplot [semithick, blue, mark=x, mark size=3, mark options={solid}]
table [row sep=\\]{%
0.0001	131.9 \\
0.0002	134.3 \\
0.0005	133.9 \\
0.001	131.2 \\
0.002	130.2 \\
0.005	130.1 \\
};
\end{axis}

\end{tikzpicture}
    \caption{NLL MNIST}
    \label{fig:expressive-power-mnist-nll-mlp}
\end{subfigure}
\hfill
\begin{subfigure}[b]{0.32\linewidth}
    \centering
\begin{tikzpicture}[scale=0.5]

\begin{axis}[
axis background/.style={fill=white!89.80392156862746!black},
axis line style={white},
legend cell align={left},
legend entries={{BNN},{E-QNN},{BQN}},
legend style={at={(0.91,0.5)}, anchor=east, draw=white!80.0!black, fill=white!89.80392156862746!black},
tick align=outside,
tick pos=left,
x grid style={white},
xlabel={$\lambda$ level of model uncertainty},
xmajorgrids,
xmin=8.22340159426889e-05, xmax=0.00608020895329329,
xmode=log,
y grid style={white},
ylabel={NLL},
ymajorgrids,
ymin=243.854106643361, ymax=39788.1219781548,
ymode=log
]
\addlegendimage{no markers, green!50.0!black}
\addlegendimage{no markers, red}
\addlegendimage{no markers, blue}
\path [draw=green!50.0!black, fill=green!50.0!black, opacity=0.2] (axis cs:0.0001,323.2)
--(axis cs:0.0001,314)
--(axis cs:0.0002,311.4)
--(axis cs:0.0005,311.2)
--(axis cs:0.001,307.4)
--(axis cs:0.002,311.8)
--(axis cs:0.005,324)
--(axis cs:0.005,336.2)
--(axis cs:0.005,336.2)
--(axis cs:0.002,321.2)
--(axis cs:0.001,319)
--(axis cs:0.0005,321.6)
--(axis cs:0.0002,323.8)
--(axis cs:0.0001,323.2)
--cycle;

\path [draw=red, fill=red, opacity=0.2] (axis cs:0.0001,31563.1)
--(axis cs:0.0001,28281.5)
--(axis cs:0.0002,28281.5)
--(axis cs:0.0005,28281.5)
--(axis cs:0.001,28281.5)
--(axis cs:0.002,28281.5)
--(axis cs:0.005,28281.5)
--(axis cs:0.005,31563.1)
--(axis cs:0.005,31563.1)
--(axis cs:0.002,31563.1)
--(axis cs:0.001,31563.1)
--(axis cs:0.0005,31563.1)
--(axis cs:0.0002,31563.1)
--(axis cs:0.0001,31563.1)
--cycle;

\path [draw=blue, fill=blue, opacity=0.2] (axis cs:0.0001,491)
--(axis cs:0.0001,453.6)
--(axis cs:0.0002,460.4)
--(axis cs:0.0005,459.4)
--(axis cs:0.001,444.7)
--(axis cs:0.002,446.1)
--(axis cs:0.005,443.9)
--(axis cs:0.005,471.5)
--(axis cs:0.005,471.5)
--(axis cs:0.002,500.3)
--(axis cs:0.001,502.5)
--(axis cs:0.0005,479.2)
--(axis cs:0.0002,497.2)
--(axis cs:0.0001,491)
--cycle;

\addplot [semithick, green!50.0!black, dashed, mark=x, mark size=3, mark options={solid}]
table [row sep=\\]{%
0.0001	318.6 \\
0.0002	317.6 \\
0.0005	316.4 \\
0.001	313.2 \\
0.002	316.5 \\
0.005	330.1 \\
};
\addplot [semithick, red, mark=x, mark size=3, mark options={solid}]
table [row sep=\\]{%
0.0001	29922.3 \\
0.0002	29922.3 \\
0.0005	29922.3 \\
0.001	29922.3 \\
0.002	29922.3 \\
0.005	29922.3 \\
};
\addplot [semithick, blue, mark=x, mark size=3, mark options={solid}]
table [row sep=\\]{%
0.0001	472.3 \\
0.0002	478.8 \\
0.0005	469.3 \\
0.001	473.6 \\
0.002	473.2 \\
0.005	457.7 \\
};
\end{axis}

\end{tikzpicture}
    \caption{NLL FMNIST}
    \label{fig:expressive-power-fmnist-nll-mlp}
\end{subfigure}
\begin{subfigure}[b]{0.32\linewidth}
    \centering
\begin{tikzpicture}[scale=0.5]

\begin{axis}[
axis background/.style={fill=white!89.80392156862746!black},
axis line style={white},
legend cell align={left},
legend entries={{BNN},{E-QNN},{BQN}},
legend style={at={(0.91,0.5)}, anchor=east, draw=white!80.0!black, fill=white!89.80392156862746!black},
tick align=outside,
tick pos=left,
x grid style={white},
xlabel={$\lambda$ level of model uncertainty},
xmajorgrids,
xmin=8.22340159426889e-05, xmax=0.00608020895329329,
xmode=log,
y grid style={white},
ylabel={NLL},
ymajorgrids,
ymin=185.96269219602, ymax=3392.00144153161,
ymode=log
]
\addlegendimage{no markers, green!50.0!black}
\addlegendimage{no markers, red}
\addlegendimage{no markers, blue}
\path [draw=green!50.0!black, fill=green!50.0!black, opacity=0.2] (axis cs:0.0001,341.7)
--(axis cs:0.0001,313.9)
--(axis cs:0.0002,290.8)
--(axis cs:0.0005,273.4)
--(axis cs:0.001,252)
--(axis cs:0.002,222.9)
--(axis cs:0.005,212.2)
--(axis cs:0.005,222.8)
--(axis cs:0.005,222.8)
--(axis cs:0.002,241.1)
--(axis cs:0.001,267.8)
--(axis cs:0.0005,295.8)
--(axis cs:0.0002,334.4)
--(axis cs:0.0001,341.7)
--cycle;

\path [draw=red, fill=red, opacity=0.2] (axis cs:0.0001,2972.6)
--(axis cs:0.0001,2372.6)
--(axis cs:0.0002,2372.6)
--(axis cs:0.0005,2372.6)
--(axis cs:0.001,2372.6)
--(axis cs:0.002,2372.6)
--(axis cs:0.005,2372.6)
--(axis cs:0.005,2972.6)
--(axis cs:0.005,2972.6)
--(axis cs:0.002,2972.6)
--(axis cs:0.001,2972.6)
--(axis cs:0.0005,2972.6)
--(axis cs:0.0002,2972.6)
--(axis cs:0.0001,2972.6)
--cycle;

\path [draw=blue, fill=blue, opacity=0.2] (axis cs:0.0001,435.5)
--(axis cs:0.0001,423.1)
--(axis cs:0.0002,415)
--(axis cs:0.0005,418.9)
--(axis cs:0.001,413.8)
--(axis cs:0.002,413.3)
--(axis cs:0.005,409.2)
--(axis cs:0.005,425.4)
--(axis cs:0.005,425.4)
--(axis cs:0.002,430.7)
--(axis cs:0.001,434.8)
--(axis cs:0.0005,429.5)
--(axis cs:0.0002,425.8)
--(axis cs:0.0001,435.5)
--cycle;

\addplot [semithick, green!50.0!black, dashed, mark=x, mark size=3, mark options={solid}]
table [row sep=\\]{%
0.0001	327.8 \\
0.0002	312.6 \\
0.0005	284.6 \\
0.001	259.9 \\
0.002	232 \\
0.005	217.5 \\
};
\addplot [semithick, red, mark=x, mark size=3, mark options={solid}]
table [row sep=\\]{%
0.0001	2672.6 \\
0.0002	2672.6 \\
0.0005	2672.6 \\
0.001	2672.6 \\
0.002	2672.6 \\
0.005	2672.6 \\
};
\addplot [semithick, blue, mark=x, mark size=3, mark options={solid}]
table [row sep=\\]{%
0.0001	429.3 \\
0.0002	420.4 \\
0.0005	424.2 \\
0.001	424.3 \\
0.002	422 \\
0.005	417.3 \\
};
\end{axis}

\end{tikzpicture}
    \caption{NLL KMNIST}
    \label{fig:expressive-power-kmnist-nll-mlp}
\end{subfigure}
\begin{subfigure}[b]{0.32\linewidth}
\centering
\begin{tikzpicture}[scale=0.5]

\begin{axis}[
axis background/.style={fill=white!89.80392156862746!black},
axis line style={white},
legend cell align={left},
legend entries={{BNN},{E-QNN},{BQN}},
legend style={at={(0.91,0.5)}, anchor=east, draw=white!80.0!black, fill=white!89.80392156862746!black},
tick align=outside,
tick pos=left,
x grid style={white},
xlabel={$\lambda$ level of model uncertainty},
xmajorgrids,
xmin=8.22340159426889e-05, xmax=0.00608020895329329,
xmode=log,
y grid style={white},
ylabel={Percentage Error},
ymajorgrids,
ymin=0.874891891482118, ymax=4.24395292281196,
]
\addlegendimage{no markers, green!50.0!black}
\addlegendimage{no markers, red}
\addlegendimage{no markers, blue}
\path [draw=green!50.0!black, fill=green!50.0!black, opacity=0.2] (axis cs:0.0001,1.1)
--(axis cs:0.0001,0.960000000000001)
--(axis cs:0.0002,0.950000000000001)
--(axis cs:0.0005,0.940000000000005)
--(axis cs:0.001,0.990000000000006)
--(axis cs:0.002,1)
--(axis cs:0.005,1.1)
--(axis cs:0.005,1.24)
--(axis cs:0.005,1.24)
--(axis cs:0.002,1.12)
--(axis cs:0.001,1.09000000000001)
--(axis cs:0.0005,1.08000000000001)
--(axis cs:0.0002,1.11)
--(axis cs:0.0001,1.1)
--cycle;

\path [draw=red, fill=red, opacity=0.2] (axis cs:0.0001,3.95)
--(axis cs:0.0001,2.65)
--(axis cs:0.0002,2.65)
--(axis cs:0.0005,2.65)
--(axis cs:0.001,2.65)
--(axis cs:0.002,2.65)
--(axis cs:0.005,2.65)
--(axis cs:0.005,3.95)
--(axis cs:0.005,3.95)
--(axis cs:0.002,3.95)
--(axis cs:0.001,3.95)
--(axis cs:0.0005,3.95)
--(axis cs:0.0002,3.95)
--(axis cs:0.0001,3.95)
--cycle;

\path [draw=blue, fill=blue, opacity=0.2] (axis cs:0.0001,2.79)
--(axis cs:0.0001,2.65)
--(axis cs:0.0002,2.6)
--(axis cs:0.0005,2.63999999999999)
--(axis cs:0.001,2.54)
--(axis cs:0.002,2.54)
--(axis cs:0.005,2.5)
--(axis cs:0.005,2.66)
--(axis cs:0.005,2.66)
--(axis cs:0.002,2.72)
--(axis cs:0.001,2.78)
--(axis cs:0.0005,2.83999999999999)
--(axis cs:0.0002,2.78)
--(axis cs:0.0001,2.79)
--cycle;

\addplot [semithick, green!50.0!black, dashed, mark=x, mark size=3, mark options={solid}]
table [row sep=\\]{%
0.0001	1.03 \\
0.0002	1.03 \\
0.0005	1.01000000000001 \\
0.001	1.04000000000001 \\
0.002	1.06 \\
0.005	1.17 \\
};
\addplot [semithick, red, mark=x, mark size=3, mark options={solid}]
table [row sep=\\]{%
0.0001	3.3 \\
0.0002	3.3 \\
0.0005	3.3 \\
0.001	3.3 \\
0.002	3.3 \\
0.005	3.3 \\
};
\addplot [semithick, blue, mark=x, mark size=3, mark options={solid}]
table [row sep=\\]{%
0.0001	2.72 \\
0.0002	2.69 \\
0.0005	2.73999999999999 \\
0.001	2.66 \\
0.002	2.63 \\
0.005	2.58 \\
};
\end{axis}

\end{tikzpicture}
    \caption{Error MNIST}
    \label{fig:expressive-power-mnist-err-mlp}
\end{subfigure}
\hfill
\begin{subfigure}[b]{0.32\linewidth}
    \centering
\begin{tikzpicture}[scale=0.5]

\begin{axis}[
axis background/.style={fill=white!89.80392156862746!black},
axis line style={white},
legend cell align={left},
legend entries={{BNN},{E-QNN},{BQN}},
legend style={at={(0.91,0.5)}, anchor=east, draw=white!80.0!black, fill=white!89.80392156862746!black},
tick align=outside,
tick pos=left,
x grid style={white},
xlabel={$\lambda$ level of model uncertainty},
xmajorgrids,
xmin=8.22340159426889e-05, xmax=0.00608020895329329,
xmode=log,
y grid style={white},
ylabel={Percentage Error},
ymajorgrids,
ymin=10.0408888004231, ymax=22.2177841458219,
]
\addlegendimage{no markers, green!50.0!black}
\addlegendimage{no markers, red}
\addlegendimage{no markers, blue}
\path [draw=green!50.0!black, fill=green!50.0!black, opacity=0.2] (axis cs:0.0001,10.73)
--(axis cs:0.0001,10.43)
--(axis cs:0.0002,10.41)
--(axis cs:0.0005,10.5)
--(axis cs:0.001,10.58)
--(axis cs:0.002,10.9)
--(axis cs:0.005,11.54)
--(axis cs:0.005,12.16)
--(axis cs:0.005,12.16)
--(axis cs:0.002,11.34)
--(axis cs:0.001,11.06)
--(axis cs:0.0005,10.8)
--(axis cs:0.0002,10.69)
--(axis cs:0.0001,10.73)
--cycle;

\path [draw=red, fill=red, opacity=0.2] (axis cs:0.0001,21.43)
--(axis cs:0.0001,14.93)
--(axis cs:0.0002,14.93)
--(axis cs:0.0005,14.93)
--(axis cs:0.001,14.93)
--(axis cs:0.002,14.93)
--(axis cs:0.005,14.93)
--(axis cs:0.005,21.43)
--(axis cs:0.005,21.43)
--(axis cs:0.002,21.43)
--(axis cs:0.001,21.43)
--(axis cs:0.0005,21.43)
--(axis cs:0.0002,21.43)
--(axis cs:0.0001,21.43)
--cycle;

\path [draw=blue, fill=blue, opacity=0.2] (axis cs:0.0001,13.74)
--(axis cs:0.0001,13.46)
--(axis cs:0.0002,13.44)
--(axis cs:0.0005,13.35)
--(axis cs:0.001,13.44)
--(axis cs:0.002,13.25)
--(axis cs:0.005,13.29)
--(axis cs:0.005,13.53)
--(axis cs:0.005,13.53)
--(axis cs:0.002,13.67)
--(axis cs:0.001,13.88)
--(axis cs:0.0005,13.65)
--(axis cs:0.0002,13.84)
--(axis cs:0.0001,13.74)
--cycle;

\addplot [semithick, green!50.0!black, dashed, mark=x, mark size=3, mark options={solid}]
table [row sep=\\]{%
0.0001	10.58 \\
0.0002	10.55 \\
0.0005	10.65 \\
0.001	10.82 \\
0.002	11.12 \\
0.005	11.85 \\
};
\addplot [semithick, red, mark=x, mark size=3, mark options={solid}]
table [row sep=\\]{%
0.0001	18.18 \\
0.0002	18.18 \\
0.0005	18.18 \\
0.001	18.18 \\
0.002	18.18 \\
0.005	18.18 \\
};
\addplot [semithick, blue, mark=x, mark size=3, mark options={solid}]
table [row sep=\\]{%
0.0001	13.6 \\
0.0002	13.64 \\
0.0005	13.5 \\
0.001	13.66 \\
0.002	13.46 \\
0.005	13.41 \\
};
\end{axis}

\end{tikzpicture}
    \caption{Error FMNIST}
    \label{fig:expressive-power-fmnist-err-mlp}
\end{subfigure}
\begin{subfigure}[b]{0.32\linewidth}
    \centering
\begin{tikzpicture}[scale=0.5]

\begin{axis}[
axis background/.style={fill=white!89.80392156862746!black},
axis line style={white},
legend cell align={left},
legend entries={{BNN},{E-QNN},{BQN}},
legend style={at={(0.91,0.5)}, anchor=east, draw=white!80.0!black, fill=white!89.80392156862746!black},
tick align=outside,
tick pos=left,
x grid style={white},
xlabel={$\lambda$ level of model uncertainty},
xmajorgrids,
xmin=8.22340159426889e-05, xmax=0.00608020895329329,
xmode=log,
y grid style={white},
ylabel={Percentage Error},
ymajorgrids,
ymin=4.41330663901365, ymax=14.6030641583526,
]
\addlegendimage{no markers, green!50.0!black}
\addlegendimage{no markers, red}
\addlegendimage{no markers, blue}
\path [draw=green!50.0!black, fill=green!50.0!black, opacity=0.2] (axis cs:0.0001,4.85999999999999)
--(axis cs:0.0001,4.83999999999999)
--(axis cs:0.0002,4.66)
--(axis cs:0.0005,4.71)
--(axis cs:0.001,4.72)
--(axis cs:0.002,4.71000000000001)
--(axis cs:0.005,5.28)
--(axis cs:0.005,5.66)
--(axis cs:0.005,5.66)
--(axis cs:0.002,5.15000000000001)
--(axis cs:0.001,4.94)
--(axis cs:0.0005,5.03)
--(axis cs:0.0002,4.68)
--(axis cs:0.0001,4.85999999999999)
--cycle;

\path [draw=red, fill=red, opacity=0.2] (axis cs:0.0001,13.83)
--(axis cs:0.0001,12.21)
--(axis cs:0.0002,12.21)
--(axis cs:0.0005,12.21)
--(axis cs:0.001,12.21)
--(axis cs:0.002,12.21)
--(axis cs:0.005,12.21)
--(axis cs:0.005,13.83)
--(axis cs:0.005,13.83)
--(axis cs:0.002,13.83)
--(axis cs:0.001,13.83)
--(axis cs:0.0005,13.83)
--(axis cs:0.0002,13.83)
--(axis cs:0.0001,13.83)
--cycle;

\path [draw=blue, fill=blue, opacity=0.2] (axis cs:0.0001,10.51)
--(axis cs:0.0001,10.07)
--(axis cs:0.0002,10.03)
--(axis cs:0.0005,10.13)
--(axis cs:0.001,9.99000000000001)
--(axis cs:0.002,9.91)
--(axis cs:0.005,9.79)
--(axis cs:0.005,10.19)
--(axis cs:0.005,10.19)
--(axis cs:0.002,10.21)
--(axis cs:0.001,10.37)
--(axis cs:0.0005,10.27)
--(axis cs:0.0002,10.21)
--(axis cs:0.0001,10.51)
--cycle;

\addplot [semithick, green!50.0!black, dashed, mark=x, mark size=3, mark options={solid}]
table [row sep=\\]{%
0.0001	4.84999999999999 \\
0.0002	4.67 \\
0.0005	4.87 \\
0.001	4.83 \\
0.002	4.93000000000001 \\
0.005	5.47 \\
};
\addplot [semithick, red, mark=x, mark size=3, mark options={solid}]
table [row sep=\\]{%
0.0001	13.02 \\
0.0002	13.02 \\
0.0005	13.02 \\
0.001	13.02 \\
0.002	13.02 \\
0.005	13.02 \\
};
\addplot [semithick, blue, mark=x, mark size=3, mark options={solid}]
table [row sep=\\]{%
0.0001	10.29 \\
0.0002	10.12 \\
0.0005	10.2 \\
0.001	10.18 \\
0.002	10.06 \\
0.005	9.98999999999999 \\
};
\end{axis}

\end{tikzpicture}
    \caption{Error KMNIST}
    \label{fig:expressive-power-kmnist-err-mlp}
\end{subfigure}
\caption{Comparison of the predictive performance of our {\BQN}s against the E-\QNN 
as well as the non-quantized BNN trained by SGVB on a MLP. 
Negative log-likelihood (NLL) which accounts for uncertainty 
and 0-1 test error which doesn't account for uncertainty are displayed.}
\label{fig:expressive-power-mlp}
\end{figure}
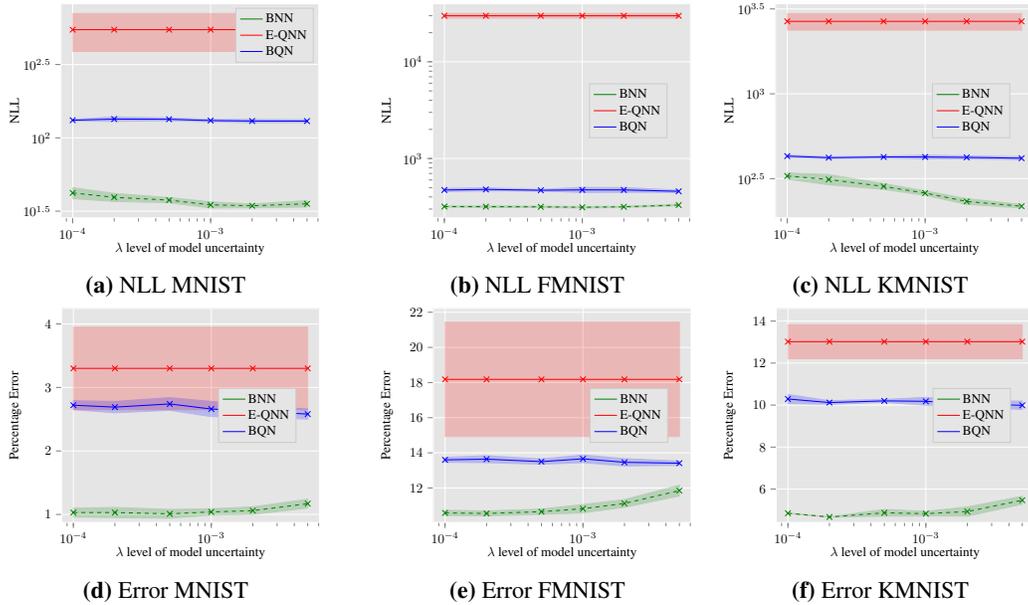
 

\begin{figure}[htbp]
\centering
\begin{subfigure}[b]{0.32\linewidth}
\centering
\begin{tikzpicture}[scale=0.5]

\begin{axis}[
axis background/.style={fill=white!89.80392156862746!black},
axis line style={white},
legend cell align={left},
legend entries={{Monte Carlo Sampling},{Analytical Inference},{Difference}},
legend style={at={(0.91,0.5)}, anchor=east, draw=white!80.0!black, fill=white!89.80392156862746!black},
tick align=outside,
tick pos=left,
x grid style={white},
xlabel={$\lambda$ level of model uncertainty},
xmajorgrids,
xmin=8.22340159426889e-05, xmax=0.00608020895329329,
xmode=log,
y grid style={white},
ylabel={NLL},
ymajorgrids,
ymin=14.465, ymax=145.035
]
\addlegendimage{no markers, red}
\addlegendimage{no markers, blue}
\addlegendimage{no markers, green!50.0!black}
\path [draw=red, fill=red, opacity=0.2] (axis cs:0.0001,113.5)
--(axis cs:0.0001,108.3)
--(axis cs:0.0002,108.6)
--(axis cs:0.0005,110.3)
--(axis cs:0.001,107.6)
--(axis cs:0.002,105.8)
--(axis cs:0.005,107.8)
--(axis cs:0.005,111.6)
--(axis cs:0.005,111.6)
--(axis cs:0.002,112.4)
--(axis cs:0.001,114)
--(axis cs:0.0005,114.9)
--(axis cs:0.0002,116.6)
--(axis cs:0.0001,113.5)
--cycle;

\path [draw=blue, fill=blue, opacity=0.2] (axis cs:0.0001,133.6)
--(axis cs:0.0001,130.2)
--(axis cs:0.0002,129.5)
--(axis cs:0.0005,131)
--(axis cs:0.001,128.7)
--(axis cs:0.002,126.8)
--(axis cs:0.005,127.2)
--(axis cs:0.005,133)
--(axis cs:0.005,133)
--(axis cs:0.002,133.6)
--(axis cs:0.001,133.7)
--(axis cs:0.0005,136.8)
--(axis cs:0.0002,139.1)
--(axis cs:0.0001,133.6)
--cycle;

\addplot [semithick, red, mark=x, mark size=3, mark options={solid}]
table [row sep=\\]{%
0.0001	110.9 \\
0.0002	112.6 \\
0.0005	112.6 \\
0.001	110.8 \\
0.002	109.1 \\
0.005	109.7 \\
};
\addplot [semithick, blue, mark=x, mark size=3, mark options={solid}]
table [row sep=\\]{%
0.0001	131.9 \\
0.0002	134.3 \\
0.0005	133.9 \\
0.001	131.2 \\
0.002	130.2 \\
0.005	130.1 \\
};
\addplot [semithick, green!50.0!black, mark=x, mark size=3, mark options={solid}]
table [row sep=\\]{%
0.0001	21 \\
0.0002	21.7 \\
0.0005	21.3 \\
0.001	20.4 \\
0.002	21.1 \\
0.005	20.4 \\
};
\end{axis}

\end{tikzpicture}
    \caption{NLL MNIST}
    \label{fig:mean-field-approx-mnist-nll-mlp}
\end{subfigure}
\hfill
\begin{subfigure}[b]{0.32\linewidth}
    \centering
\begin{tikzpicture}[scale=0.5]

\begin{axis}[
axis background/.style={fill=white!89.80392156862746!black},
axis line style={white},
legend cell align={left},
legend entries={{Monte Carlo Sampling},{Analytical Inference},{Difference}},
legend style={at={(0.91,0.5)}, anchor=east, draw=white!80.0!black, fill=white!89.80392156862746!black},
tick align=outside,
tick pos=left,
x grid style={white},
xlabel={$\lambda$ level of model uncertainty},
xmajorgrids,
xmin=8.22340159426889e-05, xmax=0.00608020895329329,
xmode=log,
y grid style={white},
ylabel={NLL},
ymajorgrids,
ymin=3.85499999999996, ymax=526.245
]
\addlegendimage{no markers, red}
\addlegendimage{no markers, blue}
\addlegendimage{no markers, green!50.0!black}
\path [draw=red, fill=red, opacity=0.2] (axis cs:0.0001,457.5)
--(axis cs:0.0001,427.5)
--(axis cs:0.0002,432.5)
--(axis cs:0.0005,431.5)
--(axis cs:0.001,416)
--(axis cs:0.002,418.6)
--(axis cs:0.005,417.9)
--(axis cs:0.005,441.1)
--(axis cs:0.005,441.1)
--(axis cs:0.002,472.6)
--(axis cs:0.001,475.2)
--(axis cs:0.0005,449.7)
--(axis cs:0.0002,468.7)
--(axis cs:0.0001,457.5)
--cycle;

\path [draw=blue, fill=blue, opacity=0.2] (axis cs:0.0001,491)
--(axis cs:0.0001,453.6)
--(axis cs:0.0002,460.4)
--(axis cs:0.0005,459.4)
--(axis cs:0.001,444.7)
--(axis cs:0.002,446.1)
--(axis cs:0.005,443.9)
--(axis cs:0.005,471.5)
--(axis cs:0.005,471.5)
--(axis cs:0.002,500.3)
--(axis cs:0.001,502.5)
--(axis cs:0.0005,479.2)
--(axis cs:0.0002,497.2)
--(axis cs:0.0001,491)
--cycle;

\addplot [semithick, red, mark=x, mark size=3, mark options={solid}]
table [row sep=\\]{%
0.0001	442.5 \\
0.0002	450.6 \\
0.0005	440.6 \\
0.001	445.6 \\
0.002	445.6 \\
0.005	429.5 \\
};
\addplot [semithick, blue, mark=x, mark size=3, mark options={solid}]
table [row sep=\\]{%
0.0001	472.3 \\
0.0002	478.8 \\
0.0005	469.3 \\
0.001	473.6 \\
0.002	473.2 \\
0.005	457.7 \\
};
\addplot [semithick, green!50.0!black, mark=x, mark size=3, mark options={solid}]
table [row sep=\\]{%
0.0001	29.8 \\
0.0002	28.2 \\
0.0005	28.7 \\
0.001	28 \\
0.002	27.6 \\
0.005	28.2 \\
};
\end{axis}

\end{tikzpicture}
    \caption{NLL FMNIST}
    \label{fig:mean-field-approx-fmnist-nll-mlp}
\end{subfigure}
\begin{subfigure}[b]{0.32\linewidth}
    \centering
\begin{tikzpicture}[scale=0.5]

\begin{axis}[
axis background/.style={fill=white!89.80392156862746!black},
axis line style={white},
legend cell align={left},
legend entries={{Monte Carlo Sampling},{Analytical Inference},{Difference}},
legend style={at={(0.91,0.5)}, anchor=east, draw=white!80.0!black, fill=white!89.80392156862746!black},
tick align=outside,
tick pos=left,
x grid style={white},
xlabel={$\lambda$ level of model uncertainty},
xmajorgrids,
xmin=8.22340159426889e-05, xmax=0.00608020895329329,
xmode=log,
y grid style={white},
ylabel={NLL},
ymajorgrids,
ymin=8.78000000000002, ymax=455.82
]
\addlegendimage{no markers, red}
\addlegendimage{no markers, blue}
\addlegendimage{no markers, green!50.0!black}
\path [draw=red, fill=red, opacity=0.2] (axis cs:0.0001,402.1)
--(axis cs:0.0001,388.3)
--(axis cs:0.0002,380.2)
--(axis cs:0.0005,383.6)
--(axis cs:0.001,375.6)
--(axis cs:0.002,374.9)
--(axis cs:0.005,377.9)
--(axis cs:0.005,398.5)
--(axis cs:0.005,398.5)
--(axis cs:0.002,400.3)
--(axis cs:0.001,401)
--(axis cs:0.0005,399.4)
--(axis cs:0.0002,390.4)
--(axis cs:0.0001,402.1)
--cycle;

\path [draw=blue, fill=blue, opacity=0.2] (axis cs:0.0001,435.5)
--(axis cs:0.0001,423.1)
--(axis cs:0.0002,415)
--(axis cs:0.0005,418.9)
--(axis cs:0.001,413.8)
--(axis cs:0.002,413.3)
--(axis cs:0.005,409.2)
--(axis cs:0.005,425.4)
--(axis cs:0.005,425.4)
--(axis cs:0.002,430.7)
--(axis cs:0.001,434.8)
--(axis cs:0.0005,429.5)
--(axis cs:0.0002,425.8)
--(axis cs:0.0001,435.5)
--cycle;

\addplot [semithick, red, mark=x, mark size=3, mark options={solid}]
table [row sep=\\]{%
0.0001	395.2 \\
0.0002	385.3 \\
0.0005	391.5 \\
0.001	388.3 \\
0.002	387.6 \\
0.005	388.2 \\
};
\addplot [semithick, blue, mark=x, mark size=3, mark options={solid}]
table [row sep=\\]{%
0.0001	429.3 \\
0.0002	420.4 \\
0.0005	424.2 \\
0.001	424.3 \\
0.002	422 \\
0.005	417.3 \\
};
\addplot [semithick, green!50.0!black, mark=x, mark size=3, mark options={solid}]
table [row sep=\\]{%
0.0001	34.1 \\
0.0002	35.1 \\
0.0005	32.7 \\
0.001	36 \\
0.002	34.4 \\
0.005	29.1 \\
};
\end{axis}

\end{tikzpicture}
    \caption{NLL KMNIST}
    \label{fig:mean-field-approx-kmnist-nll-mlp}
\end{subfigure}
\begin{subfigure}[b]{0.32\linewidth}
\centering
\begin{tikzpicture}[scale=0.5]

\begin{axis}[
axis background/.style={fill=white!89.80392156862746!black},
axis line style={white},
legend cell align={left},
legend entries={{Monte Carlo Sampling},{Analytical Inference},{Difference}},
legend style={at={(0.91,0.5)}, anchor=east, draw=white!80.0!black, fill=white!89.80392156862746!black},
tick align=outside,
tick pos=left,
x grid style={white},
xlabel={$\lambda$ level of model uncertainty},
xmajorgrids,
xmin=8.22340159426889e-05, xmax=0.00608020895329329,
xmode=log,
y grid style={white},
ylabel={Percentage Error},
ymajorgrids,
ymin=0.107, ymax=3.253
]
\addlegendimage{no markers, red}
\addlegendimage{no markers, blue}
\addlegendimage{no markers, green!50.0!black}
\path [draw=red, fill=red, opacity=0.2] (axis cs:0.0001,3.08999999999999)
--(axis cs:0.0001,2.88999999999999)
--(axis cs:0.0002,2.86)
--(axis cs:0.0005,2.95)
--(axis cs:0.001,2.81000000000001)
--(axis cs:0.002,2.76)
--(axis cs:0.005,2.79)
--(axis cs:0.005,2.97)
--(axis cs:0.005,2.97)
--(axis cs:0.002,3.02)
--(axis cs:0.001,3.05000000000001)
--(axis cs:0.0005,3.11)
--(axis cs:0.0002,3.02)
--(axis cs:0.0001,3.08999999999999)
--cycle;

\path [draw=blue, fill=blue, opacity=0.2] (axis cs:0.0001,2.79)
--(axis cs:0.0001,2.65)
--(axis cs:0.0002,2.6)
--(axis cs:0.0005,2.63999999999999)
--(axis cs:0.001,2.54)
--(axis cs:0.002,2.54)
--(axis cs:0.005,2.5)
--(axis cs:0.005,2.66)
--(axis cs:0.005,2.66)
--(axis cs:0.002,2.72)
--(axis cs:0.001,2.78)
--(axis cs:0.0005,2.83999999999999)
--(axis cs:0.0002,2.78)
--(axis cs:0.0001,2.79)
--cycle;

\addplot [semithick, red, mark=x, mark size=3, mark options={solid}]
table [row sep=\\]{%
0.0001	2.98999999999999 \\
0.0002	2.94 \\
0.0005	3.03 \\
0.001	2.93000000000001 \\
0.002	2.89 \\
0.005	2.88 \\
};
\addplot [semithick, blue, mark=x, mark size=3, mark options={solid}]
table [row sep=\\]{%
0.0001	2.72 \\
0.0002	2.69 \\
0.0005	2.73999999999999 \\
0.001	2.66 \\
0.002	2.63 \\
0.005	2.58 \\
};
\addplot [semithick, green!50.0!black, mark=x, mark size=3, mark options={solid}]
table [row sep=\\]{%
0.0001	0.269999999999996 \\
0.0002	0.25 \\
0.0005	0.290000000000006 \\
0.001	0.27000000000001 \\
0.002	0.260000000000005 \\
0.005	0.299999999999997 \\
};
\end{axis}

\end{tikzpicture}
    \caption{Error MNIST}
    \label{fig:mean-field-approx-mnist-err-mlp}
\end{subfigure}
\hfill
\begin{subfigure}[b]{0.32\linewidth}
    \centering
\begin{tikzpicture}[scale=0.5]

\begin{axis}[
axis background/.style={fill=white!89.80392156862746!black},
axis line style={white},
legend cell align={left},
legend entries={{Monte Carlo Sampling},{Analytical Inference},{Difference}},
legend style={at={(0.91,0.5)}, anchor=east, draw=white!80.0!black, fill=white!89.80392156862746!black},
tick align=outside,
tick pos=left,
x grid style={white},
xlabel={$\lambda$ level of model uncertainty},
xmajorgrids,
xmin=8.22340159426889e-05, xmax=0.00608020895329329,
xmode=log,
y grid style={white},
ylabel={Percentage Error},
ymajorgrids,
ymin=-0.356499999999996, ymax=14.9665
]
\addlegendimage{no markers, red}
\addlegendimage{no markers, blue}
\addlegendimage{no markers, green!50.0!black}
\path [draw=red, fill=red, opacity=0.2] (axis cs:0.0001,14.16)
--(axis cs:0.0001,13.92)
--(axis cs:0.0002,13.89)
--(axis cs:0.0005,13.69)
--(axis cs:0.001,13.78)
--(axis cs:0.002,13.62)
--(axis cs:0.005,13.71)
--(axis cs:0.005,13.95)
--(axis cs:0.005,13.95)
--(axis cs:0.002,14.08)
--(axis cs:0.001,14.22)
--(axis cs:0.0005,14.19)
--(axis cs:0.0002,14.27)
--(axis cs:0.0001,14.16)
--cycle;

\path [draw=blue, fill=blue, opacity=0.2] (axis cs:0.0001,13.74)
--(axis cs:0.0001,13.46)
--(axis cs:0.0002,13.44)
--(axis cs:0.0005,13.35)
--(axis cs:0.001,13.44)
--(axis cs:0.002,13.25)
--(axis cs:0.005,13.29)
--(axis cs:0.005,13.53)
--(axis cs:0.005,13.53)
--(axis cs:0.002,13.67)
--(axis cs:0.001,13.88)
--(axis cs:0.0005,13.65)
--(axis cs:0.0002,13.84)
--(axis cs:0.0001,13.74)
--cycle;

\addplot [semithick, red, mark=x, mark size=3, mark options={solid}]
table [row sep=\\]{%
0.0001	14.04 \\
0.0002	14.08 \\
0.0005	13.94 \\
0.001	14 \\
0.002	13.85 \\
0.005	13.83 \\
};
\addplot [semithick, blue, mark=x, mark size=3, mark options={solid}]
table [row sep=\\]{%
0.0001	13.6 \\
0.0002	13.64 \\
0.0005	13.5 \\
0.001	13.66 \\
0.002	13.46 \\
0.005	13.41 \\
};
\addplot [semithick, green!50.0!black, mark=x, mark size=3, mark options={solid}]
table [row sep=\\]{%
0.0001	0.440000000000012 \\
0.0002	0.439999999999998 \\
0.0005	0.439999999999998 \\
0.001	0.340000000000003 \\
0.002	0.390000000000001 \\
0.005	0.420000000000002 \\
};
\end{axis}

\end{tikzpicture}
    \caption{Error FMNIST}
    \label{fig:mean-field-approx-fmnist-err-mlp}
\end{subfigure}
\begin{subfigure}[b]{0.32\linewidth}
    \centering
\begin{tikzpicture}[scale=0.5]

\begin{axis}[
axis background/.style={fill=white!89.80392156862746!black},
axis line style={white},
legend cell align={left},
legend entries={{Monte Carlo Sampling},{Analytical Inference},{Difference}},
legend style={at={(0.91,0.5)}, anchor=east, draw=white!80.0!black, fill=white!89.80392156862746!black},
tick align=outside,
tick pos=left,
x grid style={white},
xlabel={$\lambda$ level of model uncertainty},
xmajorgrids,
xmin=8.22340159426889e-05, xmax=0.00608020895329329,
xmode=log,
y grid style={white},
ylabel={Percentage Error},
ymajorgrids,
ymin=0.164499999999988, ymax=11.9455
]
\addlegendimage{no markers, red}
\addlegendimage{no markers, blue}
\addlegendimage{no markers, green!50.0!black}
\path [draw=red, fill=red, opacity=0.2] (axis cs:0.0001,11.41)
--(axis cs:0.0001,11.07)
--(axis cs:0.0002,10.59)
--(axis cs:0.0005,10.95)
--(axis cs:0.001,10.61)
--(axis cs:0.002,10.57)
--(axis cs:0.005,10.37)
--(axis cs:0.005,11.25)
--(axis cs:0.005,11.25)
--(axis cs:0.002,11.07)
--(axis cs:0.001,11.27)
--(axis cs:0.0005,11.11)
--(axis cs:0.0002,11.05)
--(axis cs:0.0001,11.41)
--cycle;

\path [draw=blue, fill=blue, opacity=0.2] (axis cs:0.0001,10.51)
--(axis cs:0.0001,10.07)
--(axis cs:0.0002,10.03)
--(axis cs:0.0005,10.13)
--(axis cs:0.001,9.99000000000001)
--(axis cs:0.002,9.91)
--(axis cs:0.005,9.79)
--(axis cs:0.005,10.19)
--(axis cs:0.005,10.19)
--(axis cs:0.002,10.21)
--(axis cs:0.001,10.37)
--(axis cs:0.0005,10.27)
--(axis cs:0.0002,10.21)
--(axis cs:0.0001,10.51)
--cycle;

\addplot [semithick, red, mark=x, mark size=3, mark options={solid}]
table [row sep=\\]{%
0.0001	11.24 \\
0.0002	10.82 \\
0.0005	11.03 \\
0.001	10.94 \\
0.002	10.82 \\
0.005	10.81 \\
};
\addplot [semithick, blue, mark=x, mark size=3, mark options={solid}]
table [row sep=\\]{%
0.0001	10.29 \\
0.0002	10.12 \\
0.0005	10.2 \\
0.001	10.18 \\
0.002	10.06 \\
0.005	9.98999999999999 \\
};
\addplot [semithick, green!50.0!black, mark=x, mark size=3, mark options={solid}]
table [row sep=\\]{%
0.0001	0.949999999999989 \\
0.0002	0.699999999999989 \\
0.0005	0.829999999999998 \\
0.001	0.759999999999991 \\
0.002	0.759999999999991 \\
0.005	0.820000000000007 \\
};
\end{axis}

\end{tikzpicture}
    \caption{Error KMNIST}
    \label{fig:mean-field-approx-kmnist-err-mlp}
\end{subfigure}
\caption{Illustration of mean-field approximation and tightness of alternative ELBO on a MLP. 
The performance gap between our analytical inference and the Monte Carlo Sampling is displayed. }
\label{fig:mean-field-approx-mlp}
\end{figure}
 
 \subsection{Regression on Boston Housing Dataset}
 \label{sub:regression}
 
In this part, we evaluate our proposed {\BQN} on Boston housing dataset, 
a regression benchmark widely used in testing 
{\BNNLong}s~\citep{hernandez2015probabilistic, ghosh2016assumed} and {Probabilistic {\NNlong}s}~\citep{wang2016natural}. The dataset consists of $456$ training and $50$ test samples, 
each sample has $13$ features as input and a scalar (housing) price as output. 
Following~\citet{hernandez2015probabilistic, ghosh2016assumed, wang2016natural}, we train a two-layers 
network with $50$ hidden units, and report the performance in terms of {\em root mean square error (RMSE)} in Table~\ref{tab:boston}. The results show that our {\BQN} achieves lower RMSE compared to other models trained in a
probabilistic/Bayesian way. 

\begin{table}[!htbp]
\centering
\small{
\begin{tabular}{c | c c c c}
Dataset & BQN & PBP~\citep{ghosh2016assumed} & EBP~\citep{soudry2014expectation} & NPN~\citep{wang2016natural} \\
\hline
Boston & $2.04 \pm 0.07$ & $2.79 \pm 0.16$ & $3.14 \pm 0.93$ & $2.57 \pm$ NA
\end{tabular}
\caption{
Performance of different networks in terms of RMSE. 
The numbers for {\BQN} are averages over $10$ runs with different seeds, 
the standard deviation are exhibited following the $\pm$ sign. 
The results for PBP, EBP are from~\citet{ghosh2016assumed}, 
and the one for NPN is from~\citep{wang2016natural}.
\label{tab:boston}
}
}
\end{table}

\end{document}